\definecolor{myblue}{RGB}{232, 242, 255}
\definecolor{rowgray}{gray}{0.96}
\definecolor{hlblue}{RGB}{220,235,255} 
\newcolumntype{C}[1]{@{}>{\centering\arraybackslash}p{#1}@{}}
\newcolumntype{L}[1]{@{}>{\arraybackslash}p{#1}@{}}
\def\1{\bm{1}}
\DeclareMathAlphabet{\mathsfit}{\encodingdefault}{\sfdefault}{m}{sl}
\SetMathAlphabet{\mathsfit}{bold}{\encodingdefault}{\sfdefault}{bx}{n}
\newcommand{\restore@Environment}[1]{%
    \AtBeginDocument{%
        \csletcs{#1*}{#1}%
        \csletcs{end#1*}{end#1}%
    }%
}
\forcsvlist\restore@Environment{alignat,equation,gather,multline,flalign,align}
\def\1{\bm{1}}
\DeclareMathAlphabet{\mathsfit}{\encodingdefault}{\sfdefault}{m}{sl}
\SetMathAlphabet{\mathsfit}{bold}{\encodingdefault}{\sfdefault}{bx}{n}
\DeclareAcronym{mbpo}{
    short = MBPO ,
    long = model-based policy optimization ,
}
\DeclareAcronym{sac}{
    short = SAC ,
    long = soft actor-critic
}
\DeclareAcronym{knn}{
    short = KNN ,
    long = K-Nearest Neighbors
}
\DeclareAcronym{mdp}{
    short = MDP ,
    long = Markov decision process
}
\DeclareAcronym{DQN}{
    short = DQN ,
    long = Deep Q-Network
}
\DeclareAcronym{ood}{
    short = OOD ,
    long = Out-of-distribution 
}
\DeclareAcronym{sota}{
    short = SOTA ,
    long = state-of-the-art
}
\DeclareAcronym{ann}{
    short = ANN ,
    long = Approximate Nearest Neighbour 
}
\DeclareAcronym{InterCascade}{
    short = InterCascade ,
    long = Interactive LLM Cascade 
}
\newcommand\redout{\bgroup\markoverwith{\textcolor{red}{\rule[.5ex]{2pt}{0.4pt}}}\ULon}
\DeclareFontFamily{U}{matha}{\hyphenchar\font45}
\DeclareFontShape{U}{matha}{m}{n}{
      <5> <6> <7> <8> <9> <10> gen * matha
      <10.95> matha10 <12> <14.4> <17.28> <20.74> <24.88> matha12
      }{}
\DeclareSymbolFont{matha}{U}{matha}{m}{n}
\DeclareFontFamily{U}{mathx}{\hyphenchar\font45}
\DeclareFontShape{U}{mathx}{m}{n}{
      <5> <6> <7> <8> <9> <10>
      <10.95> <12> <14.4> <17.28> <20.74> <24.88>
      mathx10
      }{}
\DeclareSymbolFont{mathx}{U}{mathx}{m}{n}
\DeclareMathDelimiter{\vvvert}{0}{matha}{"7E}{mathx}{"17}
\newcommand{\Betainv}{\mathrm{Beta}^{-1}}
\newcommand{\Phiinv}{\Phi^{-1}}
\newcommand{\repo}{\operatorname{Repo}}
\theoremstyle{plain}
\newtheorem{theorem}{Theorem}[section]
\newtheorem{lemma}[theorem]{Lemma}
\theoremstyle{definition}
\theoremstyle{remark}
\newtheorem{remark}[theorem]{Remark}
\definecolor{darkgreen}{rgb}{0,0.5,0}
\newcommand{\Repo}{\operatorname{Repo}}
\newcommand{\Match}{f}   
\newcommand{\DefW}{d_1}  
\newcommand{\GenW}{g_1}  
\newcommand{\DefS}{d_2}  
\newcommand{\GenS}{g_2}  
\newcommand{\StratGen}{h}
\definecolor{OI-vermillion}{RGB}{213,94,0}
\icmltitlerunning{From Deferral to Learning: Online In-Context Knowledge Distillation for LLM Cascades}
\begin{document}

\twocolumn[
  \icmltitle{From Deferral to Learning:\\ Online In-Context Knowledge Distillation for LLM Cascades}



  \icmlsetsymbol{equal}{*}

  \begin{icmlauthorlist}
    \icmlauthor{Yu Wu}{equal,ru}
    \icmlauthor{Shuo Wu}{equal,uic}
    \icmlauthor{Ye Tao}{equal,ru}
    \icmlauthor{Yansong Li}{uic}
    \icmlauthor{Anand D. Sarwate}{ru}
  \end{icmlauthorlist}
  \icmlaffiliation{ru}{Department of Electrical and Computer Engineering, Rutgers University, NJ, USA}
  \icmlaffiliation{uic}{Electrical and Computer Engineering Department, University of Illinois Chicago, Illinois, USA}

  \icmlcorrespondingauthor{Yu Wu}{yu.wu@rutgers.edu}
  \icmlcorrespondingauthor{Anand D. Sarwate}{anand.sarwate@rutgers.edu}

  \icmlkeywords{Machine Learning, ICML}

  \vskip 0.3in
]



\printAffiliationsAndNotice{\icmlEqualContribution}

\begin{abstract}

    Standard \textit{LLM Cascades} improve efficiency by \textit{deferring} difficult queries from weak to strong models. However, these systems are typically static: when faced with repeated or semantically similar queries, they redundantly consult the expensive model, failing to adapt during inference. To address this, we propose \textit{Inter-Cascade}, an online, interactive framework that transforms the strong model from a temporary helper into a long-term teacher.
    In our approach, when the strong model resolves a deferred query, it generates a generalized, reusable problem-solving \textit{strategy}. These strategies are stored in a dynamic repository and retrieved via similarity matching to augment the weak model’s context for future queries. This enables the weak model to "learn" on the job without expensive parameter fine-tuning. We theoretically show that this mechanism improves the weak model's confidence calibration. Empirically, Inter-Cascade outperforms standard cascades on multiple benchmarks, improving weak model and overall system accuracy by up to 33.06\% and 6.35\%, while reducing strong model calls by up to 48.05\% and saving fee by up to 49.63\%. Inter-Cascade demonstrates the effective in-context knowledge transfer between LLMs, and provides a general, scalable framework applicable to both open-source and API-based LLMs.

\end{abstract}

\section{Introduction}\label{sec: intorduction}

Large Language Models (LLMs) demonstrate remarkable performance across a wide range of generation and reasoning tasks. Generally, performance scales with model size~\citep{kaplan2020scalinglawsneurallanguage}, creating a fundamental trade-off: larger models are more capable but significantly more expensive and slower. To address this, the \textit{LLM Cascade} paradigm has emerged as a standard solution, where weaker (cheaper) models handle routine queries and \textit{defer} only uncertain or complex cases to stronger (expensive) models~\citep{chen_frugalgpt_2024}.

However, current cascade systems suffer from a critical ``memoryless'' limitation. Standard \textit{deferral functions} are optimized offline and remain static during deployment~\citep{shen_learning_2024, jung2024trustescalatellmjudges}. When the strong model resolves a difficult query, its expensive reasoning process is discarded immediately after the answer is returned. Consequently, the system fails to learn from its own previous operations. As noted in recent position paper from NVIDIA~\citep{belcak2025small}, real-world query streams often exhibit a "similarity phenomenon," containing repeated or semantically similar tasks (e.g., slight variations of math problems in GSM-Plus~\citep{li_gsm-plus_2024}.) Faced with these recurring patterns, static LLM cascades redundantly consult the strong model for every instance, leading to a substantial waste of computation and tokens. While fine-tuning the weak model could theoretically solve this, it is often prohibitively expensive, slow, or impossible for API-based models.

To bridge this gap, we propose moving from static deferral to adaptive learning. We argue that the strong model should not merely serve as a temporary "backup" but as a long-term "teacher." To achieve this, we introduce \textit{Inter-Cascade}, an online framework that transforms the interaction between LLMs. Unlike simple caching which only memorizes specific answers, Inter-Cascade extracts generalized problem-solving \textit{strategies} from the strong model's reasoning. By retrieving and injecting these strategies into the weak model’s context via similarity matching, we realize online in-context knowledge distillation. This enables the weak model to "learn" on the job, dynamically improving its local success rate for future similar queries without parameter updates.

Our approach advances the concepts of In-Context Learning (ICL)~\citep{dong2024surveyincontextlearning}, few-shot prompting~\citep{parnami2022learningexamplessummaryapproaches} and Retrieval-Augmented Generation (RAG)~\citep{lewis_retrieval-augmented_2021}. While traditional ICL relies on fixed demonstrations and RAG typically queries static, human-curated databases, Inter-Cascade builds a self-evolving strategy repository autonomously. The "corpus" is generated by the strong model and curated by the system's own interaction history, requiring no human intervention. This creates a closed-loop system where the weak model continuously distills ``wisdom'' from the strong model to handle increasingly complex tasks locally.

\paragraph{Primary contributions.} Our contributions are as follows: (1) We propose Inter-Cascade, a general and modular framework for online interactive LLM Cascades. It allows the strong model to "teach" the weak model via a similarity-based strategy repository, effectively implementing widely applicable in-context knowledge transfer for both open-source and API-based models. Inter-Cascade is designed as a universal booster that also works for existing LLM Cascade methods. (2) We provide a theoretical framework proving that integrating strong-model strategies improves the weak model's confidence calibration. We show that this mechanism allows the weak model to more accurately assess its own competence, thereby reducing unnecessary deferrals while maintaining safety bounds. (3) We show empirically that compared to state-of-the-art cascades~\citep{jung2024trustescalatellmjudges}, Inter-Cascade improves the weak model's accuracy by up to 33.06\% and overall system accuracy by up to 6.35\%. Crucially, it reduces calls to the strong model by up to 48.05\%, translating to significant cost savings (up to 49.63\%) while strictly adhering to risk tolerance guarantees. 

\section{Improving the LLM Cascade}\label{sec: formulation}

\begin{figure*}[h]
\centering
\includegraphics[width=0.8\textwidth]{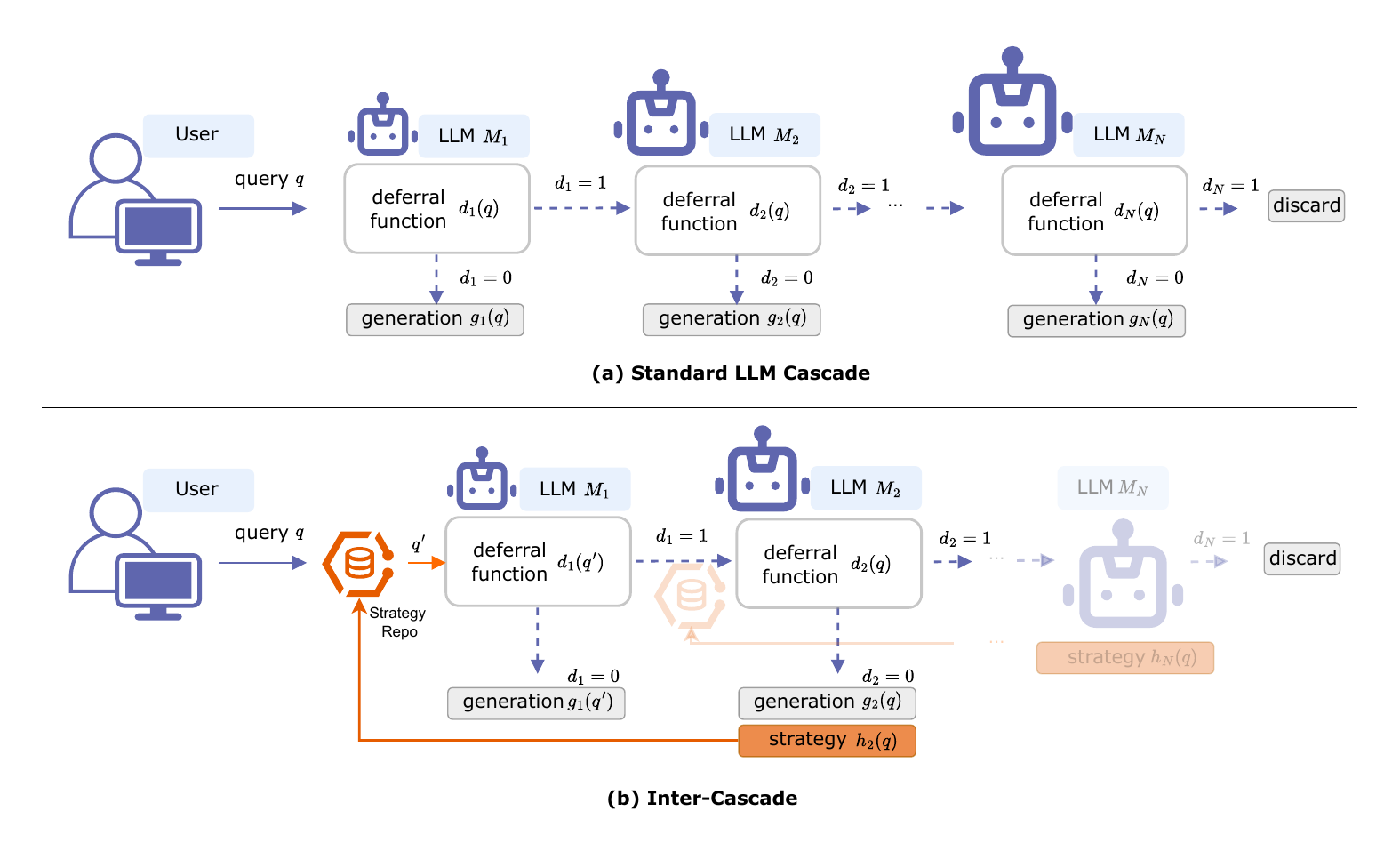}
\vspace{-1em}
\caption{(a) Pipeline of standard LLM Cascade systems. (b) Pipeline of Inter-Cascade. The unique components in Inter-Cascade are painted in orange. For the sake of clarity and readability, we only present the case of two LLMs Inter-Cascade system and the scalable parts beyond two LLMs are rendered in a lighter color.}
\label{fig:inter-cascade}
\end{figure*}

We first describe the standard LLM Cascade~\citep{chen_frugalgpt_2023} and revisit the accuracy bound and calibration method for the deferral threshold~\citet{jung2024trustescalatellmjudges}. We then introduce our proposed method Inter-Cascade and provide a theoretical framework to show when a weak model will be improved by a strong model's strategies. An extended discussion of related work is in Section \ref{sec: related work main} and Appendix \ref{sec: related work}.

\subsection{Standard LLM Cascade}
Figure~\ref{fig:inter-cascade}(a) shows the general $N$-LLM Cascade system~\citep{chen_frugalgpt_2023}. Each LLM \( M_i : i \in [N] \) contains two key components. One is the \emph{generation function} $g_i \colon \mathcal{Q} \rightarrow \mathcal{A}$, where $\mathcal{Q}$ is the space of queries and $\mathcal{A}$ is the space of answers. The other is \text{{\itshape{deferral function}}} $d_i \colon
\mathcal{Q} \rightarrow \{ 0, 1 \}$, which determines whether the $i$-th LLM
will answer the query by itself ($d_i(q) = 0$) or defer it to the $(i+1)$-th LLM ($d_i(q) = 1$).
Processing by the LLMs proceeds sequentially from
$M_1$ to $M_N$. We define a partial order $\preccurlyeq_{\text{wbc}}$ ( ``weaker but cheaper'') to compare models (see Appendix~\ref{appendix：Order of LLM}) and assume that in the cascade, $M_1 \preccurlyeq_{\text{wbc}} M_2\preccurlyeq_{\text{wbc}} ...\preccurlyeq_{\text{wbc}} M_N$.
%
%
%
For each query $q \in \mathcal{Q}$, the first LLM $M_1$ takes the query \( q \)
and gives a final answer $g_1 (q)$ if deferral function $d_1(q) = 0$, otherwise
$M_1$ defers this query to the next LLM $M_2$ if \ $d_1(q) = 1$. If $M_2$ takes
the query from $M_1$, it repeats the same process and so do the other LLMs
except the last model $M_N$. As $M_N$ doesn't have another LLM to offload the
query, $M_N$ discards this query if $d_N(q) = 1$. 
Recent studies propose different deferral functions \( d_i \) to meet the demands in different scenarios. 
We focus on the two-LLM case in the rest of this paper, as shown in Figure~\ref{fig:inter-cascade}(b). 
We call $M_1$ the \emph{Weak LLM} and $M_2$ the \emph{Strong LLM}. One common choice of deferral function is:
\begin{align}\label{eq: d}
    d_i (q) = \left\{ \begin{array}{cl}
     0, & \text{if } c(q) \geq \lambda,\\
     1, & \text{otherwise},
   \end{array} \right.
\end{align}
where $c : \mathcal{Q} \rightarrow [0, 1]$ is
a pre-defined or pre-trained ``confidence'' metric (usually defined in terms of the probability of output tokens) and $\lambda$ is a confidence threshold, which is a hyperparameter that controls the trade-off between the system performance and cost.

\textbf{Accuracy Guaranteed LLM Cascade.}
It is well known that LLMs suffer from systematic bias
~\citep{wang2023largelanguagemodelsfair,thakur2025judgingjudgesevaluatingalignment} and over-confidence~\citep{xiong_can_2024}. To address this, ~\citet{jung2024trustescalatellmjudges} propose a post-hoc calibration
algorithm, which provably guarantees that with the derived $\lambda$,
\begin{equation} 
P \left( g_i (q) = a_{\operatorname{true}} \mid c (q) \geq \lambda \right) \geq 1 - \alpha 
\label{eq:theoretical-promise-for-confidence-threshold}
\end{equation} 
with probability at least $1 - \delta$, as proved in Theorem~1 of their work, where $a_{\operatorname{true}}$ is the ground-truth answer to query $q$. The risk tolerance $\alpha$ and error level $\delta$ are hyperparameters corresponding to the applications and users' demands. To instantiate this guarantee, they first used fixed-sequence testing~\citep{bauer1991multiple} procedure to find the largest threshold $\lambda$ from a calibration set, 
such that $\mathbb{P}\left( g_i (q) = a_{\operatorname{true}} \mid c (q) \geq \lambda \right)$ is exactly and tightly bounded. The procedure is summarized in Algorithm~\ref{alg:calibration}. They also extend the single-model guarantees to the full cascade; see Section 2 and Appendix~A.2 in ~\citet{jung2024trustescalatellmjudges}'s paper for details.
%
\begin{algorithm}
\caption{Calibrating Deferral Threshold $\lambda$\citep{jung2024trustescalatellmjudges}}\label{alg:calibration}
\begin{algorithmic}[1]
\Require Calibration set $(q,a) \in D_{\text{cal}}$, confidence metric $c(\cdot)$, risk tolerance $\alpha$, error level $\delta$
\Ensure Threshold $\lambda$
\State Initialize $\Lambda=\{0.999,0.998,\ldots\}$ in decreasing order
\For{$\lambda \in \Lambda$}
  \State \label{ln:cal-n} $n(\lambda) \gets \sum_{(q,a)\in D_{\text{cal}}}\mathbf{1}\{c(q)\ge \lambda\}$
  \State \label{ln:cal-rhat} $\hat{R}(\lambda) \gets \frac{1}{n(\lambda)}\!\!\sum_{(q,a)\in D_{\text{cal}}}\!\!
               \mathbf{1}\{g_i(q)\neq a_{\operatorname{true}} \land c(q)\ge \lambda\}$
  \State \label{ln:cal-rhatplus} $\hat{R}^+(\lambda) \gets \sup\{R:\; \Pr[\mathrm{Bin}(n(\lambda),R)\le n(\lambda)\hat{R}(\lambda)]\ge \delta\}$
  \If{$\hat{R}^+(\lambda)\le \alpha$} {\Return $\lambda$}
  \EndIf
\EndFor
\end{algorithmic}
\end{algorithm}

 The general pipeline of LLM Cascade is shown in Figure~\ref{fig:inter-cascade}(a). By using this LLM cascade diagram, the deferral function can keep "confident" queries on Weak LLMs and only send "uncertain" queries to Strong LLMs, dramatically reducing at most 82.5\% usage of the strongest LLM as shown by ~\citet{jung2024trustescalatellmjudges} while ensuring the error rate is bounded by $\alpha$ with probability at least $1-\delta$.

\subsection{Interactive LLM Cascade}
LLM Cascade methods can be efficient and reliable although they still incur some  
waste in terms of tokens and latency as noted in Section~\ref{sec: intorduction}. In particular, for workloads in which the Weak LLM is fed a similar or repeated queries for which it chooses to defer, the Strong LLM is called repeatedly to generate the same tokens. To address this issue, we propose \emph{Inter-Cascade}. In Inter-Cascade, for both Weak LLM and Strong LLM, besides deferral function and generation function, we add the following components: \textit{strategy generator} and \emph{strategy repository}. In Strong LLM, we set up a \textit{strategy generator} $h\colon\mathcal{Q}\rightarrow\mathcal{S}$, where $\mathcal{S}$ is the space of strategies. The strategy $s \in \mathcal{S}$ is defined as a sequence of tokens that contains the query and the answer of Strong LLM, together with a generalized ideas or tips to solve logically similar problems.
To store those strategies, we construct a Strategy Repository called $\operatorname{Repo}$. The $\operatorname{Repo}$ is accompanied by a \textit{strategy matching function} $f \colon \mathcal{Q}\times\mathcal{Q}^{N}\rightarrow\mathcal{S}^{k}$, where $N$ is the size of current $\operatorname{Repo}$ and $k$ is a predefined hyperparameter that determines the number of strategies retrieved. 

\paragraph{Strategy Repository.}

The Strategy Repository $\operatorname{Repo}$ is formally
defined as a collection of query-strategy pairs: 
$\operatorname{Repo} = (q_j, s_j)_{j=1}^{N}$ where $q_j \in \mathcal{Q}$ are previously solved queries and $s_j \in \mathcal{S}$ are their corresponding strategies generated by Strong LLM. The strategy matching $f$
operates through multiple stages. 
The repository is initialized as an empty set and dynamically updated: when the Strong LLM generates a strategy $s = h(q)$ for a new query $q$, the pair
$(q,s)$ is added to $\operatorname{Repo}$, enabling future
reuse through the matching function $f$. 

For a query \(q \in \mathcal{Q}\) that is sent to the Weak LLM, let 
\(\operatorname{sim} \colon \mathcal{Q}\times\mathcal{Q}\to[0,1]\) 
be a ranking function.
Let the Top-\(k\) indices (sorted by decreasing similarity) be
\begin{align}
    \operatorname{TopIndex}(q) \triangleq (t_1, t_2, \dots, t_k),
\end{align}
where each \(t_i \in \{1,\dots, N\}\) indexes an item in \(\operatorname{Repo}\) and
\(\operatorname{sim}(q, q_{t_1}) \ge \cdots \ge \operatorname{sim}(q, q_{t_k})\ge\operatorname{sim}(q, q_{\text{else}})\).
After ranking, these strategies with Top-$k$ indexes are chosen to help the Weak LLM. Then the output of strategy matching function is 
$
f(q, \operatorname{Repo})
\triangleq
\{
s^{t_i}
\ \big|\ 
t_i \in \operatorname{TopIndex}(q)
\,\} $.
\begin{remark}
    Compared with finetuning or paying for Strong LLM, the cost of maintaining a $\repo$ and running similarity-based matching algorithms are negligible. According to the estimate formula suggested by~\citet{johnson2019billion}, conducting retrieval and Top-$2$ ranking on $1$ million query embeddings, which are $384$ dimensional vectors (the same size we used in experiments), only requires $0.2$–$0.8$ ms with $70$–$80$ MB GPU VRAM and $80$–$100$ MB RAM for long term storage. The demand can be easily fulfill on any PC or even phone, and imperceptible to human users.
\end{remark}

\begin{algorithm}[H]
\caption{Inter-Cascade Inference Pipeline}
\label{alg:intercascade}
\begin{algorithmic}[1]
\Require Test set $\mathcal{T}=\{q_1,\dots,q_I\}\subseteq\mathcal{Q}$; Weak LLM with deferral function $\DefW$, generation function $\GenW$, strategy repository $\Repo=\emptyset$; strategy matching function $\Match$; Strong LLM with deferral $\DefS$, generator $\GenS$, and strategy generator $\StratGen$.
\Statex \textbf{Deferral convention:} \\ \hspace{0.5cm} $0=\text{handle locally}$, $1=\text{defer/forward}$.
\For{$i \gets 1$ \textbf{to} $I$}
    \State $[s^{t_1}_i, s^{t_2}_i, ..., s^{t_k}_i] \gets \Match(q_i, \Repo)$ \Comment{ \textcolor{ForestGreen}{\textbf{Retrieval}}}
    \State $q^{\prime}_i \gets [q_i, s^{t_1}_i, s^{t_2}_i, ..., s^{t_k}_i] $ \Comment{\textcolor{Gray}{Concatenate strategies}}
    \If{$\DefW(q^{\prime}_i) = 0$} 
    \Comment{\textcolor{ForestGreen}{\textbf{Weak LLM decision}}}
        \State $a_i \gets \GenW(q^{\prime}_i)$ \Comment{\textcolor{Gray}{Answer locally}}
    \Else
        \If{$\DefS(q_i) = 0$}
        \Comment{\textcolor{ForestGreen}{\textbf{Strong LLM decision}}}
            
            \State $s_{\text{new}} \gets \StratGen(q_i)$ \Comment{\textcolor{ForestGreen}{\textbf{Strategy generation}}}
            \State $\Repo \gets \Repo \cup \{(q_i,s_{\text{new}})\}$ \Comment{\textcolor{Gray}{Send back strategy to Weak LLM and store}}
            \State generate answer $a_i \gets \GenS(x_i)$ \Comment{\textcolor{Gray}{Answer at Strong LLM}}
        \Else
            \State Discard current query $q_i$ \Comment{\textcolor{Gray}{None of LLMs are confident to answer the query}}
        \EndIf
    \EndIf
\EndFor
\end{algorithmic}
\end{algorithm}

\paragraph{Inter-Cascade Pipeline.}
The overall pipeline of Inter-Cascade is presented in Algorithm~\ref{alg:intercascade} and in Figure~\ref{fig:inter-cascade}(b).
For each query $q$, the Weak LLM first uses the strategy matching function $f(q, \operatorname{Repo})$ to find the most related strategies.
The query and these strategies are then sent to deferral function.
The augmented input is the prompt concatenation of query and strategies: $q'=[q,s^{t_1}, s^{t_2}, ..., s^{t_k}]$. If the Weak LLM's deferral function $d_1(q')=0$, then final answer $a$ for current query is $g_1(q')$. If $d_1(q')=1$, the query $q'$ is deferred to Strong LLM. Each time the query is sent to the Strong LLM, the deferral function in Strong LLM is called. If $d_2(q) = 0$, this query is discarded (since Strong LLM is the last model in two LLMs Cascade), otherwise $g_2(q)$ produces the answer and further, a new strategy is produced by $h(q)$. Then, the strategy will be stored into $\operatorname{Repo}$. Given $\alpha$ and $\delta$, we can derive the $\lambda$ from Algorithm~\ref{alg:calibration} and determine deferral function $d_1$ and $d_2$ as defined by~\cref{eq: d}.  Our algorithm can be extended to multi-LLM cases, the corresponding Algorithm~\ref{alg:general-intercascade} is shown in Appendix~\ref{app:general-inter}.

\paragraph{Strategies Provide Improved Calibration.}\label{sec: theory}

The $\repo$ we build during the usage of the combination of LLMs collects the strategies of the Strong LLM and provides strategies to help the Weak LLM answer queries. With the help of strategies, the Weak LLM is able to solve the more challenging problems that appear frequently and be more aware of its correctness of answering the queries, leading better confidence. However, it is not clear that how this increment in the accuracy and the quality of confidence could be preserved in the queries after the filtration. After all, all the queries, even to which the Weak LLM answers correctly would be deferred if the Weak LLM's confidence can not pass the threshold. Therefore, we present the following theories to estimate such an increment that would remain in the filtered queries.

To be specific, we first assume that, after adding strategies,
under the same confidence threshold $\lambda$, the number of queries that pass the confidence threshold increases from $n(\lambda)$ to $bn \coloneqq n'(\lambda)$, $b\in[1,\infty)$, where $n(\lambda)$ is first defined in Algorithm~\ref{alg:calibration}. The number of wrongly answered queries before and after the help of strategies are denoted by $x$ and $\epsilon x$, respectively, where $\epsilon\in(0,1)$. We want to understand the potential benefit in terms of the reduction in risk $\alpha$ under the same error level $\delta$. We do not change the threshold $\lambda$, which is the case when the strategy repository is enlarged during the running process of the Inter-Cascade. Theorem~\ref{thm: decrease} states our main result. For the convenience of the statement, we define $\alpha(\epsilon,b)$ as the value of risk tolerance $\alpha$ when total number of queries that pass threshold is $bn$ and incorrectly answered queries is$\epsilon x$.
\begin{theorem}\label{thm: decrease}
Suppose that $\widehat R^{+}(\lambda)$ is a monotonic decreasing function of $\lambda$. Fix $\delta\in(0,1)$ and an integer $n\ge 1$. For $x\in\{0,1,\dots,n\}$, $\epsilon\in(0,1]$, and $b\in[1,\infty)$. 
Suppose that $\min\{\epsilon x+1,\,n-\epsilon x\}$ is moderately large and $1-\delta$ is not an extreme tail,  then:

\smallskip
\noindent\textbf{(a) Decrease in value.}
$\alpha(\epsilon,b)\leq \alpha(1,1)$ when $\epsilon\in(0,1]$ and $b\in[1,\infty)$.

\smallskip
\noindent\textbf{(b) Normal approximation for the amount of decrease.}
Let $z:=\Phiinv(1-\delta)$, where $\Phi$ is the Normal cumulative distribution function, when $n$ is large enough, the decrease of the risk under same level of tolerance is given by,

\begin{equation}
\begin{aligned}
\alpha(1,1) - \alpha(\epsilon,b)
&\approx
\left(\frac{x+1}{n+1}-\frac{\epsilon x+1}{bn+1}\right)
\notag\\[4pt]
&\quad
+ z\Biggl[
\sqrt{\frac{(x+1)(n-x)}{(n+1)^2(n+2)}}
\notag\\[4pt]
&\qquad\qquad
-
\sqrt{\frac{(\epsilon x+1)(bn-\epsilon x)}
           {(bn+1)^2(bn+2)}}
\Biggr].
\end{aligned}
\end{equation}

\end{theorem}

The proof of this theorem is in Appendix~\ref{proof: change of confident queries}. Theorem~\ref{thm: decrease} states that, when the $\delta$ and confidence threshold $\lambda$ do not change, if more queries can pass the threshold, after combining with strategies and under certain conditions, we can ensure a smaller risk tolerance $\alpha
$ in the guarantee of this inequality~(\ref{eq:theoretical-promise-for-confidence-threshold}). That is, Inter-Cascade yields a higher success rate for Weak LLM.

Other than the case that $\lambda$ remains unchanged, which is analyzed above, another case may be that when the users want the same number of queries to be covered by the Weak LLM during two rounds of queries (before and after adding strategies). This case considers the influence of a better Weak LLM on our pipeline. In this case, we instead assume that $n(\lambda) = n(\lambda')$, which ensures the same coverage of Weak LLM. We also show that we can ensure a smaller risk tolerance $\alpha$ when threshold becomes $\lambda'$ while $\delta$ and number of queries that pass threshold remain unchanged. And the reduction in tolerance level $\alpha(1,1)-\alpha(\epsilon, 1)$ is approximately linear to $1-\epsilon$. The full statement of Theorem~\ref{thm: decrease-c} and the proof are shown in Appendix~\ref{proof: change of threshold}.


\section{Experiments}\label{sec: experiments}
\subsection{Benchmarks}

We conduct a comprehensive evaluation on a suite of eight diverse benchmarks. To provide a focused analysis in the main text, we select four representative datasets spanning two primary categories: reasoning-intensive tasks (\textit{GSM-Symbolic}~\citep{mirzadeh_gsm-symbolic_2025}, \textit{GSM-Plus}~\citep{li_gsm-plus_2024}, \textit{MetaMath}~\citep{yu_metamath_2024}) and factual knowledge tasks (\textit{NASA-History-MCQ}~\citep{patrickfleith_NASA_History_MCQ_2025}). While the reasoning datasets evaluate Inter-Cascade's ability to handle structural variations, NASA-History-MCQ is featured specifically for its lack of explicit sample variants. This benchmark serves as a robustness test, allowing us to evaluate whether Inter-Cascade can still enhance efficiency and calibration in general scenarios where the ``similarity phenomenon'' is less pronounced.
These selections highlight our method's adaptability across different difficulty levels. Full results for the remaining four benchmarks, including standard baselines (\textit{GSM8K}~\citep{cobbe_training_2021}, \textit{BigBench Hard}~\citep{Suzgun2022ChallengingBT}) and domain-specific tasks like legal benchmark \textit{BarExamQA}~\citep{zhong_bar_exam_qa_2024} and medicine benchmark \textit{MedMCQA}~\citep{pal_medmcqa_2022}, are detailed in Appendix~\ref{app:extensive}, further demonstrating the framework's generalizability across broader scenarios without explicit query variants. The detailed descriptions of selected benchmarks are in Appendix~\ref{appendix:benchmarks}. The prompt template and an example problem for each benchmark are provided in Appendix~\ref{appendix:prompt_example}.


\subsection{Experimental Settings}
\textbf{Inter-Cascade.} On all benchmarks, \textit{Gemini-2.0-flash} consistently outperforms \textit{GPT-3.5-turbo} (see Table~\ref{tab:single_llm_benchmarks}), and is therefore designated as the Strong LLM in our two-LLM Inter-Cascade, with \textit{GPT-3.5-turbo} as the Weak LLM. We extract the normalized token probability from the LLM's output as confidence score $c(q)$ in following experiments. In preparation phase, with given risk tolerance $\alpha$ and error level $\delta$, we derive desired confidence threshold $\lambda$ from calibration set by following Algo.~\ref{alg:calibration}. Then deploy corresponding deferral functions $d_i$ according to equation~(\ref{eq: d}).

Our similarity-based strategy matching process on $\repo$ works as follows. Given a new query, it is encoded into a vector and used to retrieve the top-$k$ semantically similar queries from $\repo$. We employ the \textit{all-MiniLM-L6-v2} transformer~\citep{reimers-2019-sentence-bert} to produce $384$-dimensional sentence embeddings and use the FAISS library~\citep{douze2025faisslibrary} for efficient approximate nearest-neighbor search. FAISS returns the top-$k$ vectors that minimize cosine distance, providing the Inter-Cascade with prior Strong LLM responses, including queries, answers and strategies, which can inform the Weak LLM's responses.


\begin{table}[ht]
\footnotesize
\caption{Accuracies of the base LLMs on four benchmarks.}
\label{tab:single_llm_benchmarks}
\centering

\setlength{\tabcolsep}{6pt}
\renewcommand{\arraystretch}{1.05}

\begin{tabular}{l l r}
\hline
\textbf{Benchmark} & \textbf{LLM} & \textbf{Accuracy} \\
\hline

\multirow{2}{*}{GSM-Symbolic}
& gpt-3.5-turbo     & 13.36\% \\
& gemini-2.0-flash  & 69.36\% \\
\hline

\multirow{2}{*}{GSM-Plus}
& gpt-3.5-turbo     & 23.00\% \\
& gemini-2.0-flash  & 73.57\% \\
\hline

\multirow{2}{*}{MetaMath}
& gpt-3.5-turbo     & 37.30\% \\
& gemini-2.0-flash  & 79.70\% \\
\hline

\multirow{2}{*}{NASA-History}
& gpt-3.5-turbo     & 65.30\% \\
& gemini-2.0-flash  & 78.80\% \\
\hline

\end{tabular}
\end{table}

\textbf{Inter-Cascade with No Strategies.} 
To isolate the impact of strategy on the result in our pipeline, we only integrate the most similar questions and answers without the problem strategies to query.

\textbf{Inter-Cascade with Random Strategies.} 
To evaluate the impact of similarity-based retrieval on $\repo$, we randomly select the same number of strategies for each query, instead of choosing the top-$k$ most similar queries.

\textbf{Jung Proposed LLM Cascade.} 
To evaluate the performance and effectiveness of the Inter-Cascade, we choose~\citet{jung2024trustescalatellmjudges}'s \textit{Cascaded Selective Evaluation} as the baseline model. Its method for deriving confidence scores and thresholds provides a provable lower bound on the error risk and achieves state-of-the-art performance compared with other confidence-based LLM cascades.

\subsection{Evaluation Metrics}
\label{subsection: evaluation metrics}
We first define the notations used in our evaluation. Let $T$ and $U$ denote the total number of queries and the number of uncovered queries in a benchmark, respectively. Let $N_w$ and $N_s$ be the number of times the Weak and Strong LLMs are invoked, and let $C_w$ and $C_s$ denote the number of queries correctly answered by these models that also pass the confidence threshold. $C_w^\mathrm{total}$ denotes the total number of queries answered correctly by the Weak LLM. Let $\operatorname{Tok}_J$ and $\operatorname{Tok}_O$ be the tokens consumed by Jung's method and our proposed Inter-Cascade pipeline, and $\operatorname{Cost}_J$ and $\operatorname{Cost}_O$ denote their corresponding costs. The evaluation metrics are summarized in Table~\ref{tab:metrics}.



\begin{table}[h]
\footnotesize
\caption{Evaluation Metrics}
\centering
\begingroup

\setlength{\tabcolsep}{5pt}
\renewcommand{\arraystretch}{1.1}
\setlength{\aboverulesep}{0.25ex}
\setlength{\belowrulesep}{0.25ex}

\begin{tabular}{@{} l l @{}}
\toprule
\textbf{Metric} & \textbf{Formula} \\
\midrule

Pipeline Accuracy 
& $(C_w + C_s)/(T - U)$ \\

Strong LLM Call Rate 
& $N_s/T$ \\

Weak LLM Accuracy 
& ${C_w^{\mathrm{total}}}/({T - U})$ \\

Weak Correct Accepted 
& ${C_w}/({T - U})$ \\

Coverage Rate 
& $(T - U)/T$ \\

Token Reduction 
& $(\mathrm{Tok}_J - \mathrm{Tok}_O)/{\mathrm{Tok}_J}$ \\

Cost Reduction 
& $(\mathrm{Cost}_J - \mathrm{Cost}_O)/{\mathrm{Cost}_J}$ \\

\bottomrule
\end{tabular}

\label{tab:metrics}
\endgroup
\end{table}

\subsection{Performance and Cost Analysis}
\label{sec: pipeline result}
\textbf{Inter-Cascade vs. Jung's LLM Cascade.} We evaluate our \textit{Inter-Cascade} pipeline and Jung's method, as shown in Table~\ref{tab:overall-per-dataset-tight}. Our method outperforms Jung's, with a $4.33\% - 6.35\%$ increase in Pipeline Accuracy and $29.92\%-51.93\%$ reduction in Strong LLM Call Rate on reasoning benchmarks. Crucially, on the NASA-History benchmark, which lacks structural variants, Inter-Cascade maintains high accuracy (+0.76\%) while still successfully reducing strong model calls by 15.5\% (relative). These results indicate that \textit{Inter-Cascade} pipeline is beneficial across different categories of tasks and particularly effective for reasoning-intensive tasks. Experiment results on extensive and diverse benchmarks are attached in Appendix~\ref{app:extensive}.

\begin{table}[t]
\centering
\caption{Results across datasets using different pipelines. ``Jung'' denotes Jung's LLM-Cascade and ``Our (Retrieval)'' denotes the Inter-Cascade with similarity-based retrieval. The number of strategies is fixed at $k=2$ for both Inter-Cascade settings. Metrics reported are Pipeline Accuracy (Pipeline Acc.), Strong LLM Call Rate (Strong Call), and Coverage Rate (Cov.). (a) GSM-Symbolic: For the Strong LLM, $\alpha_s = 0.2, \delta_s = 0.8, \lambda_s = 0.47$. For the Weak LLM, $\alpha_w = 0.6, \delta_w = 0.6, \lambda_w = 0.45$. (b) GSM-Plus: For the Strong LLM, $\alpha_s = 0.2, \delta_s = 0.8, \lambda_s = 0.51$. For the Weak LLM, $\alpha_w = 0.6, \delta_w = 0.6, \lambda_w = 0.48$. (c) MetaMath: No threshold is applied for the Strong LLM. For the Weak LLM, $\alpha_w = 0.4, \delta_w = 0.6, \lambda_w = 0.61$. (d) NASA-History: No threshold is applied for the Strong LLM. For the Weak LLM, $\alpha_w = 0.2, \delta_w = 0.7, \lambda_w = 0.87$.}
\label{tab:overall-per-dataset-tight}


\begingroup
\footnotesize  
\setlength{\tabcolsep}{0pt}
\renewcommand{\arraystretch}{1.0} 
\setlength{\aboverulesep}{0.3ex}
\setlength{\belowrulesep}{0.3ex}
\setlength{\cmidrulekern}{0.3em}

\begin{tabular}{@{} l C{5em} C{5em} C{5em} C{5em} @{}}
\toprule

\multirow{2}{*}{\textbf{Benchmark}} &
\multirow{2}{*}{\textbf{Pipeline}} &
\textbf{Pipeline} &
\textbf{Strong} &
\multirow{2}{*}{\textbf{Cov. (\%)}} \\
& &
\textbf{Acc. (\%) $\uparrow$} &
\textbf{Call (\%) $\downarrow$} &
\\
\midrule
\multirow{2}{*}{GSM-Symb.}
& Jung            & 66.04 & 59.37 & 86.31 \\
& Our & \cellcolor{hlblue} \textbf{70.37} & \cellcolor{hlblue} \textbf{30.84} & \cellcolor{hlblue} \textbf{90.35} \\
\midrule
\multirow{2}{*}{GSM-Plus}
& Jung            & 52.78 & 46.29 & 93.57 \\
& Our & \cellcolor{hlblue} \textbf{58.31} 
    & \cellcolor{hlblue} \textbf{32.44} 
    & \cellcolor{hlblue} \textbf{94.79} \\
\midrule
MetaMath
& Jung            & 65.21 & 49.26 & 100.00 \\
(20K) & Our & \cellcolor{hlblue} \textbf{71.56} & \cellcolor{hlblue} \textbf{23.68} & 100.00 \\
\midrule
\multirow{2}{*}{NASA-Hist.}
& Jung            & 71.88 & 26.68 & 100.00 \\
& Our & \cellcolor{hlblue} \textbf{72.64} & \cellcolor{hlblue} \textbf{22.54} & 100.00 \\
\bottomrule
\end{tabular}










\endgroup
\end{table}

\textbf{Impact of Inter-Cascade on Weak LLM.} Having examined the overall pipeline improvements, including Pipeline Accuracy and Strong LLM Call Rate reduction, we now investigate how our proposed \textit{Inter-Cascade} affects the Weak LLM. As shown in Table~\ref{tab:weakllm-per-dataset-tight}, our Weak LLM outperforms the Weak LLM in the other pipeline across all benchmarks. The improvements are particularly pronounced on reasoning benchmarks, with gains of $23.21\%$, $16.2\%$, and $33.06\%$ on MetaMath, GSM-Plus, and GSM-Symbolic, respectively. 
On NASA-History, while the absolute accuracy gain is modest (+0.48\%), the Weak Correct Accepted Rate increases by 3.03\% (from 55.37\% to 58.40\%).
Importantly, improvements in the Weak LLM's accuracy contribute to the pipeline's performance only when the correctly answered queries exceed the confidence threshold. This is captured by the \emph{Weak Correct Accepted} metric in Table~\ref{tab:weakllm-per-dataset-tight}, which represents the proportion of correctly answered queries that surpass the Weak LLM's threshold. 
The observed increase in Weak Correct Accepted shows Strong LLM's strategies helped the weak model better calibrate its confidence, validating our theoretical claim that strategies help the Weak LLM identify correct answers it would otherwise have deferred unnecessarily. This is a crucial factor in converting local improvements into overall pipeline gains. 



\begin{table}[t]
\centering
\caption{Results on Weak LLM across datasets. Reported metrics are Weak LLM Accuracy (Weak Acc.) and Weak Correct Accepted (Weak Corr. Accpt.). Parameter settings are the same as in Table~\ref{tab:overall-per-dataset-tight}.}
\label{tab:weakllm-per-dataset-tight}

\begingroup

\footnotesize  
\setlength{\tabcolsep}{0pt} 
\renewcommand{\arraystretch}{1.0} 
\setlength{\aboverulesep}{0.3ex}
\setlength{\belowrulesep}{0.3ex}
\setlength{\cmidrulekern}{0.3em}

\begin{tabular}{@{} l C{6em} C{6em} C{6em} @{}}
\toprule
\multirow{2}{*}{\textbf{Benchmark}} &
\multirow{2}{*}{\textbf{Pipeline}} &
\textbf{Weak} &
\textbf{Weak Corr.} \\
& & \textbf{Acc. (\%) $\uparrow$} & \textbf{Accpt. (\%) $\uparrow$} \\
\midrule
\multirow{2}{*}{GSM-Symb.}
& Jung            & 15.04 & 12.34 \\
& Our& \cellcolor{hlblue} \textbf{48.10} & \cellcolor{hlblue} \textbf{46.09} \\
\midrule
\multirow{2}{*}{GSM-Plus}
& Jung            & 22.46 & 19.13 \\
& Our   & \cellcolor{hlblue} \textbf{38.66} & \cellcolor{hlblue} \textbf{35.73} \\
\midrule
\multirow{2}{*}{MetaMath(20K)}
& Jung            & 34.95 & 28.54 \\
& Our   & \cellcolor{hlblue} \textbf{58.16} & \cellcolor{hlblue} \textbf{54.07} \\
\midrule
\multirow{2}{*}{NASA-Hist.}
& Jung            & 66.22 & 55.37 \\
& Our   & \cellcolor{hlblue} \textbf{66.70} & \cellcolor{hlblue} \textbf{58.40} \\
\bottomrule
\end{tabular}
\endgroup
\end{table}

\begin{table}[htbp]
\centering
\caption{Token and API cost changes across datasets for Inter-Cascade compared with Jung's pipeline. More detailed analysis with input/output tokens is in Appendix~\ref{app:full cost}: Table~\ref{tab:full-cost}.}
\label{tab:cost}

\begingroup
\scriptsize
\setlength{\tabcolsep}{4pt}
\renewcommand{\arraystretch}{0.9}
\setlength{\aboverulesep}{0.2ex}
\setlength{\belowrulesep}{0.2ex}
\setlength{\cmidrulekern}{0.2em}

\begin{tabular}{lccc}
\toprule

\multirow{2}{*}{\textbf{Benchmark}} &
\multicolumn{1}{c}{ \textbf{Weak LLM Tokens}} &
\multicolumn{1}{c}{\textbf{Strong LLM Tokens}} &
\multirow{2}{*}{\textbf{Token} \textbf{Price}} \\

\cmidrule(lr){2-2} \cmidrule(lr){3-3}

 & Total & Total & \\

\midrule

GSM-Symb.   & +147.66\% & -47.80\% & -49.63\% \\
\midrule

GSM-Plus    & +145.96\% & -29.95\% & -30.41\% \\
\midrule

Meta(20K)  & +127.90\% & -52.18\% & -52.15\% \\
\midrule

NASA-Hist.  & +132.58\% & -15.47\% & -15.75\% \\

\bottomrule
\end{tabular}
\endgroup
\end{table}

\begin{table}[htbp]
\centering
\caption{Processing Latency and Strategy Repository Size across different datasets. Retrieval refers to the time spent on strategies matching and ranking. Generation refers to time spent on generating answer via API.}
\label{tab:efficiency_analysis_moved}

\begingroup
\scriptsize  
\setlength{\tabcolsep}{3.5pt} 
\renewcommand{\arraystretch}{0.9} 
\setlength{\aboverulesep}{0.2ex} 
\setlength{\belowrulesep}{0.2ex}
\setlength{\cmidrulekern}{0.2em}

\begin{tabular}{lcccccc}
\toprule
\multirow{2}{*}{\textbf{Benchmark}} & 
\multirow{2}{*}{\shortstack{\textbf{Tested}\\\textbf{Samples}}} & 
\multicolumn{3}{c}{\textbf{Our}} & 
\multicolumn{1}{c}{\textbf{Jung}} & 
\multirow{2}{*}{\shortstack{\textbf{Repository}\\\textbf{Size}}} \\

\cmidrule(lr){3-5} \cmidrule(lr){6-6}

 & & Total & Retrieval & Generation & Total & \\
\midrule


GSM-Symb. & 11250 & 2.19s & 0.10s & 2.09s & 1.83s & 15.4 MB \\
\midrule

GSM-Plus & 9504 & 1.72s & 0.06s & 1.66s & 1.66s & 12.9 MB \\
\midrule

Meta(20K) & 20000 & 1.60s & 0.06s & 1.54s & 1.54s & 19.6 MB \\
\midrule

NASA-Hist. & 6469 & 1.28s & 0.07s & 1.21s & 1.30s & 8.8 MB \\

\bottomrule
\end{tabular}
\endgroup
\end{table}

\begin{figure}[h]
  \centering
  \begin{subfigure}[t]{0.45\linewidth}
    \centering
    \includegraphics[width=0.9\linewidth,keepaspectratio]{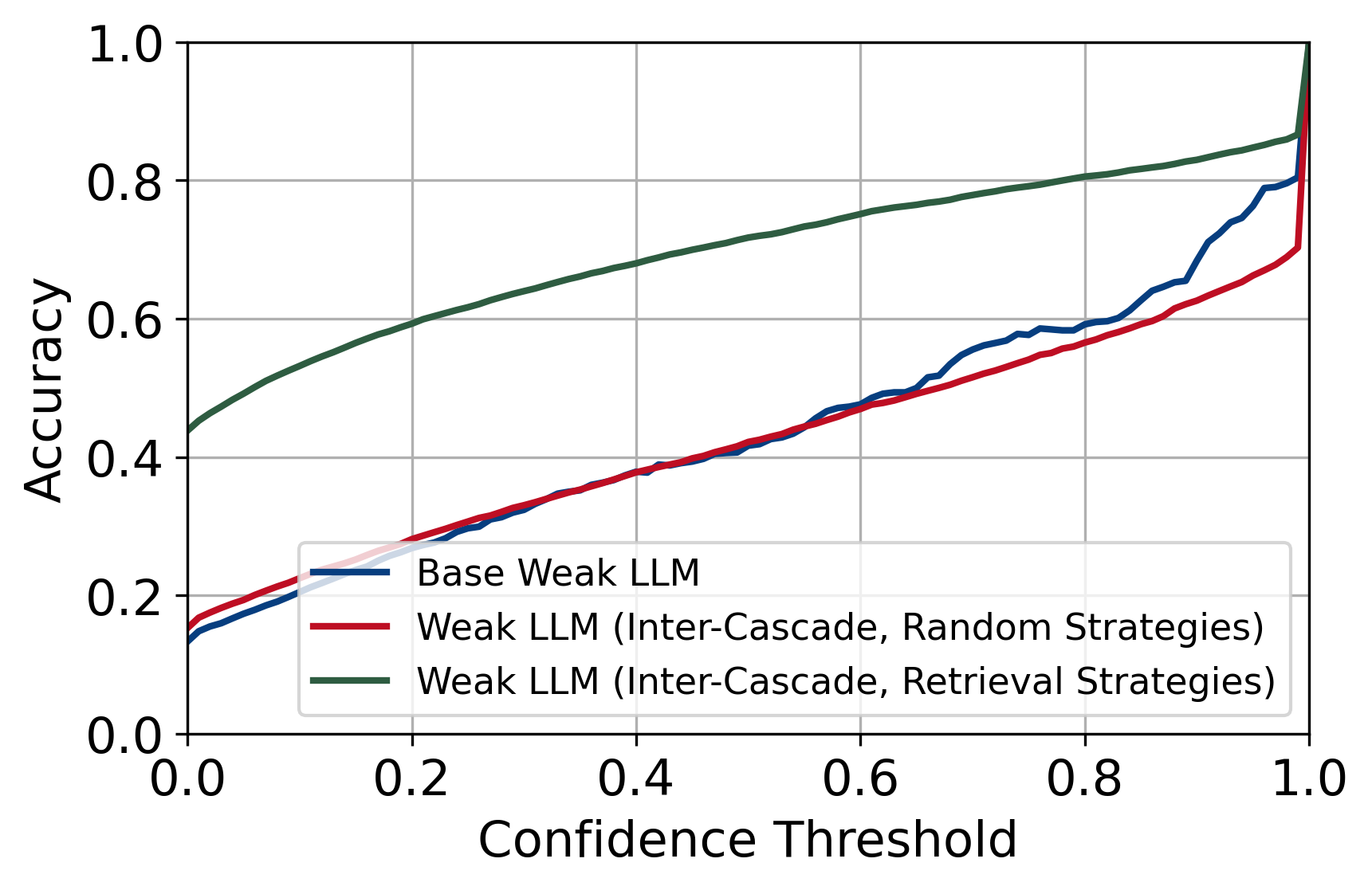}
    \caption{Accuracy vs. Confidence Threshold}
    \label{fig:gsm_symbolic_accuracy_vs_confidence}
  \end{subfigure}
  \begin{subfigure}[t]{0.45\linewidth}
    \centering
    \includegraphics[width=0.9\linewidth,keepaspectratio]{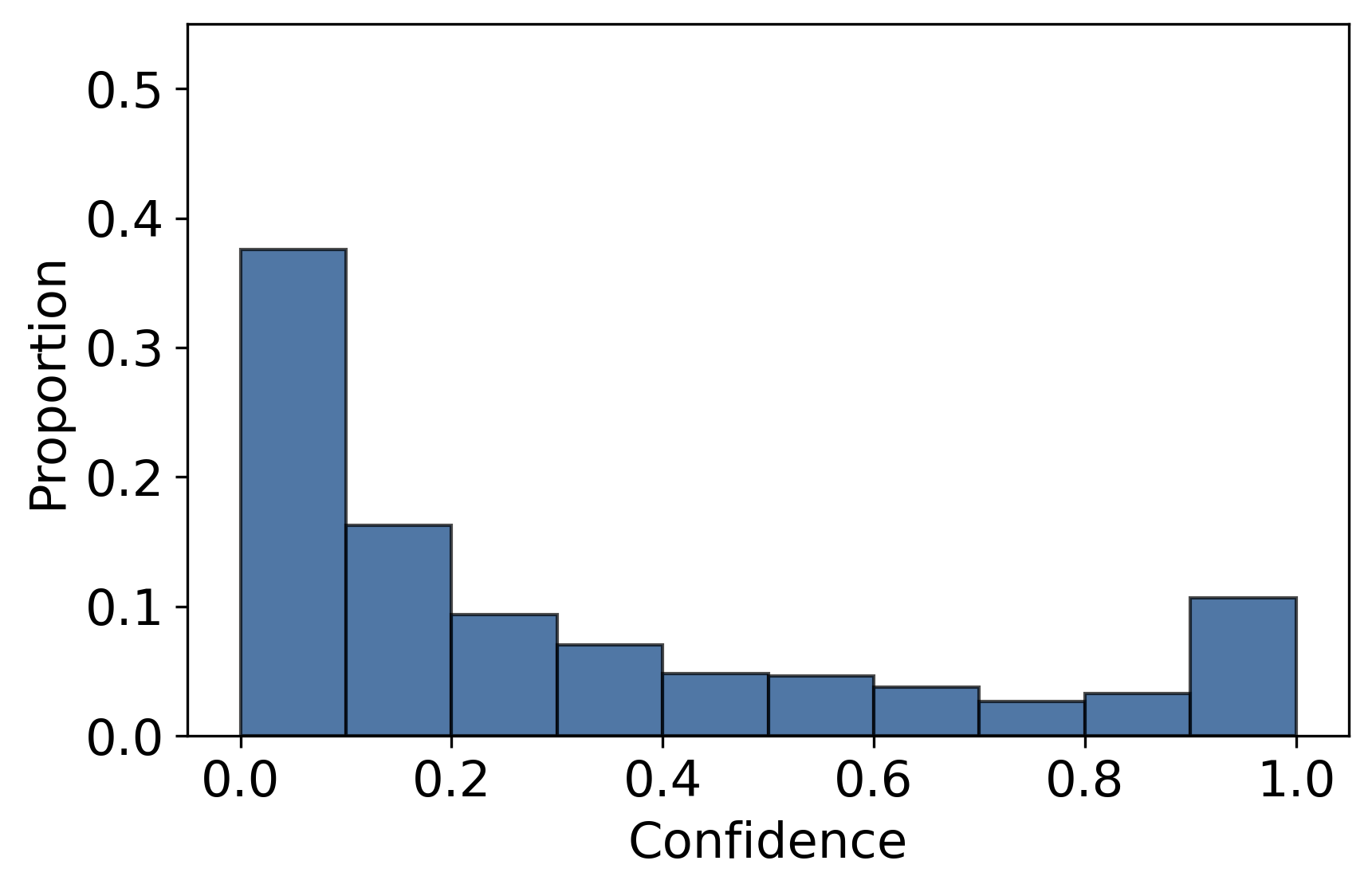}
    \caption{Base Weak LLM}
    \label{fig:gsm_symbolic_conf_base}
  \end{subfigure}


  \begin{subfigure}[t]{0.45\linewidth}
    \centering
    \includegraphics[width=0.9\linewidth,keepaspectratio]{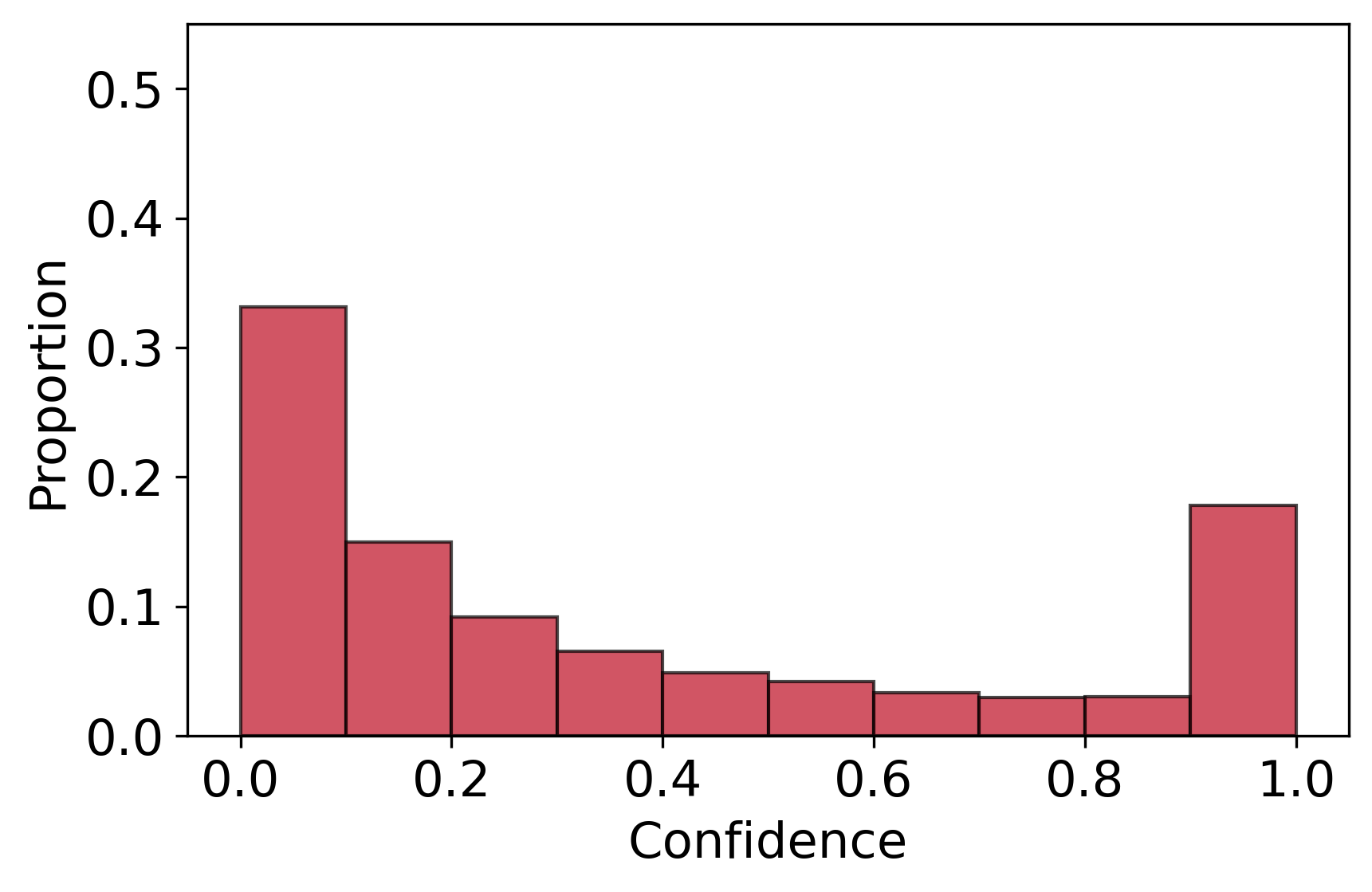}
    \caption{Inter-Cascade Random Strategies}
    \label{fig:gsm_symbolic_conf_random}
  \end{subfigure}
  \begin{subfigure}[t]{0.45\linewidth}
    \centering
    \includegraphics[width=0.9\linewidth,keepaspectratio]{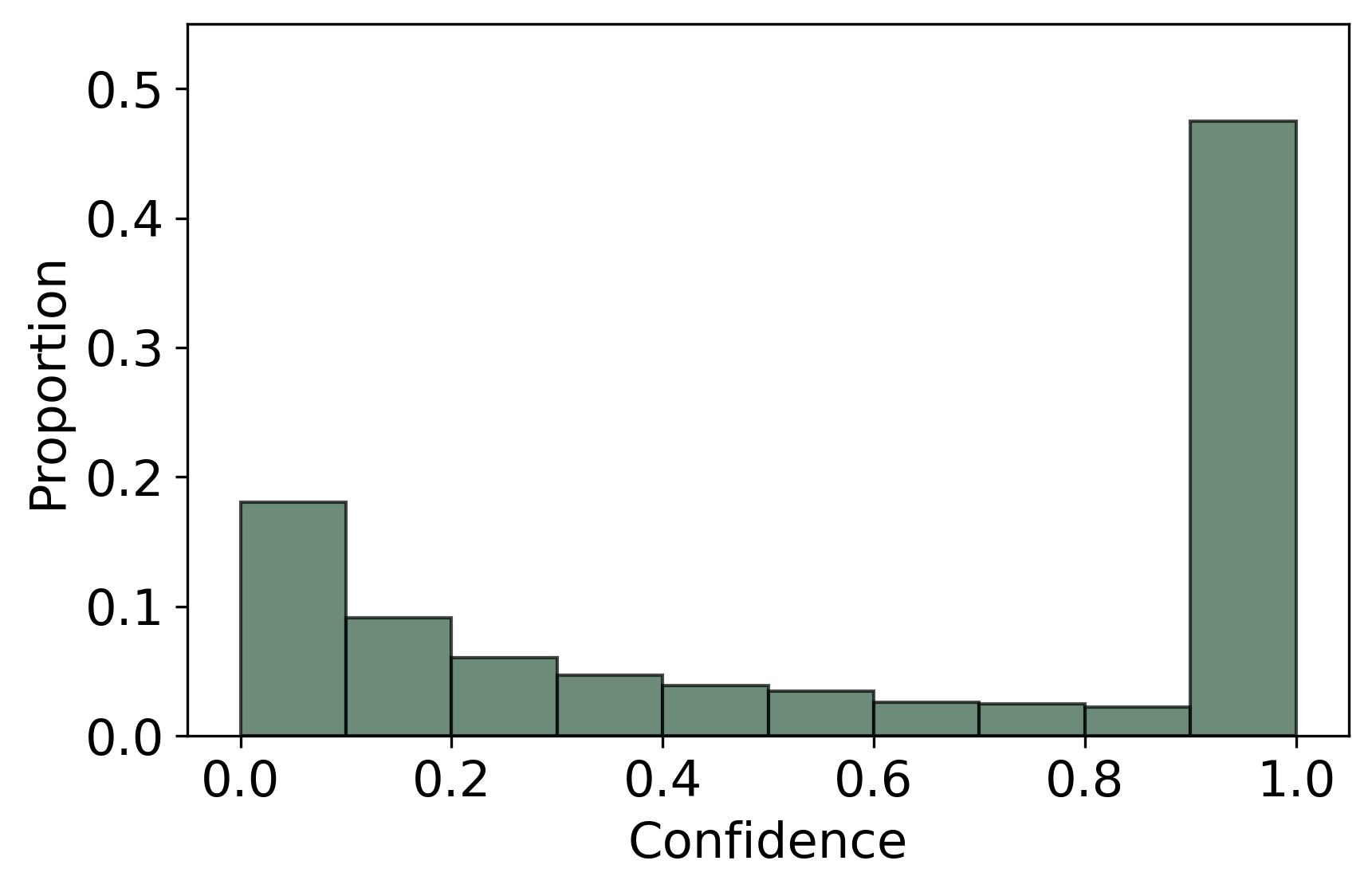}
    \caption{Inter-Cascade Retrieval Strategies}
    \label{fig:gsm_symbolic_conf_retrieval}
  \end{subfigure}
  \caption{GSM-Symbolic dataset: (a) Accuracy as a function of the confidence threshold for the base Weak LLM, Inter-Cascade with random strategies, and Inter-Cascade with retrieval strategies, and (b) - (d) their corresponding confidence histograms. Our Inter-Cascade (Retrieval) consistently concentrates probability mass near high confidence ($0.9$–$1.0$), while the weak and random variants place more mass at low confidence, which explains the accuracy gains observed in (a).}
  \label{fig:gsm_symbolic_2x2}
\end{figure}


\begin{table}[t]
\centering
\caption{Pipeline Accuracy and Strong LLM Call Rate in the ablation study on strategy selection : Our ``No strategy'' (Our NS) vs. Our ``Random'' Rand vs. Our ``Retrieval'' (Ret). Parameter settings are the same as Table \ref{tab:overall-per-dataset-tight}.}
\label{tab:ablation_strategy}
\begingroup


\footnotesize  %
\setlength{\tabcolsep}{0pt} 
\renewcommand{\arraystretch}{0.85} 
\setlength{\aboverulesep}{0.2ex} 
\setlength{\belowrulesep}{0.2ex}
\setlength{\cmidrulekern}{0.2em}

\begin{tabular}{@{} L{7em} L{5em} C{5em} C{5em} C{4em} @{}}
\toprule
\textbf{Bench-} &
\multirow{2}{*}{\textbf{Pipeline}} &
\textbf{Pipeline} &
\textbf{Strong} &
\multirow{2}{*}{\textbf{Cov. (\%)}} \\
\textbf{mark}& &
\textbf{Acc. (\%) $\uparrow$} &
\textbf{Call (\%) $\downarrow$} &
\\
\midrule
\multirow{3}{*}{\shortstack[c]{GSM Symb.}}
& Our (NS) & 67.55 & 65.15 & 83.14\\
& Our (Rand)    & 63.61 & 54.20 & 87.90 \\
& Our (Ret) & \cellcolor{hlblue} \textbf{70.37} & \cellcolor{hlblue} \cellcolor{hlblue} \textbf{30.84} & \cellcolor{hlblue}\textbf{90.35} \\
\midrule
\multirow{3}{*}{\shortstack[c]{GSM Plus}}
& Our (NS) & 58.12& 54.81 & 93.83 \\
& Our (Rand)    & 53.63 & 43.64 & 94.10 \\
& Our (Ret) & \cellcolor{hlblue} \textbf{58.31} & \cellcolor{hlblue} \textbf{32.44} & \cellcolor{hlblue} \textbf{94.79} \\
\midrule
\multirow{3}{*}{\shortstack[c]{MetaMath (20K)}}
& Our (NS) & \cellcolor{hlblue} \textbf{74.48} & 57.32 & 100.00 \\
& Our (Rand)    & 67.85 & 45.99 & 100.00 \\
& Our (Ret) & 71.56 & \cellcolor{hlblue}\textbf{23.68} & 100.00 \\
\midrule
\multirow{3}{*}{\shortstack[c]{NASA-\\Hist.}}
& Our (NS) & \cellcolor{hlblue}\textbf{74.64} & 65.12 & 100.00\\
& Our (Rand)    & 71.32 & 25.09 & 100.00 \\
& Our (Ret) & 72.64 & \cellcolor{hlblue} \textbf{22.54} & 100.00 \\
\bottomrule
\end{tabular}
\endgroup
\end{table}

\begin{table}[t]
\centering
\caption{Weak LLM performance in the ablation study on strategy selection: Our ``No strategy'' (Our NS) vs. Our ``Random'' Rand vs. Our ``Retrieval'' (Ret). Parameter settings are the same as Table \ref{tab:overall-per-dataset-tight}.}
\label{tab:weakllm-ablation}
\begingroup

\footnotesize  %
\setlength{\tabcolsep}{0pt}
\renewcommand{\arraystretch}{0.85} 
\setlength{\aboverulesep}{0.2ex} 
\setlength{\belowrulesep}{0.2ex}
\setlength{\cmidrulekern}{0.2em}

\begin{tabular}{@{} L{8em} L{5em} C{6em} C{6em} C{5em} @{}}
\toprule
\multirow{2}{*}{\textbf{Benchmark}} &
\multirow{2}{*}{\textbf{Pipeline}} &
\textbf{Weak} &
\textbf{Weak Corr.} \\
& & \textbf{Acc. (\%) $\uparrow$} & \textbf{Accpt. (\%) $\uparrow$} \\
\midrule
\multirow{3}{*}{GSM-Symb.}
& Our (NS) & 10.23 & 17.08 \\
& Our (Rand)      & 17.40 & 15.27 \\
& Our (Ret)  & \cellcolor{hlblue} \textbf{48.10} & \cellcolor{hlblue} \textbf{46.09} \\
\midrule
\multirow{3}{*}{GSM-Plus}
& Our (NS) & 20.20 & 17.08 \\
& Our (Rand)      & 25.51 & 22.38 \\
& Our (Ret)   & \cellcolor{hlblue} \textbf{38.66} & \cellcolor{hlblue} \textbf{35.73} \\
\midrule
\multirow{3}{*}{MetaMath(20K)}
& Our (NS) & 33.40 & 28.38 \\
& Our (Rand)      & 38.64 & 32.66 \\
& Our (Ret)   & \cellcolor{hlblue} \textbf{58.16} & \cellcolor{hlblue} \textbf{54.07} \\
\midrule
\multirow{3}{*}{NASA-Hist.}
& Our (NS) & 28.21 & 22.88 \\
& Our (Rand)      & 65.22 & 55.56 \\
& Our (Ret)   & \cellcolor{hlblue} \textbf{66.70} & \cellcolor{hlblue} \textbf{58.40} \\
\bottomrule
\end{tabular}
\endgroup
\end{table}

\textbf{Effect of Strategies on Accuracy and Confidence Calibration.}
As mentioned earlier, one notable observation from our experiments is that providing strategies enhances the Weak LLM's ability to assess its own accuracy. To further investigate this observation, we present 
Figure~\ref{fig:gsm_symbolic_2x2} for the GSM-Symbolic dataset. Analyses for the other three datasets, which exhibit similar patterns, are provided in Appendix~\ref{appendix:sec_confidence_dist}. Figure~\ref{fig:gsm_symbolic_accuracy_vs_confidence} depicts the accuracy of the Weak LLM as a function of the confidence threshold. For each threshold, only queries with confidence equal to or above the threshold are considered, and accuracy is calculated as the proportion of correct predictions. The figure further demonstrates that our pipeline consistently improves the accuracy of queries that pass the threshold. Figures~\ref{fig:gsm_symbolic_conf_base},~\ref{fig:gsm_symbolic_conf_random}, and~\ref{fig:gsm_symbolic_conf_retrieval} illustrate the distribution of query confidence. The histogram offers insight into prediction coverage across different confidence thresholds and shows that our method outperforms the baselines in terms of coverage. Together, these figures indicate that our method not only helps the Weak LLM produce correct answers, but also enables it to better calibrate its confidence by being more confident when the answer is correct and less confident when it is incorrect.

\textbf{Token and API Cost Savings.} Our pipeline not only improves accuracy but also reduces the number of Strong LLM calls, resulting in substantially lower token consumption on Strong LLM. Table~\ref{tab:cost} shows the percentage changes in token usage and corresponding API costs compared with Jung's pipeline. Table~\ref{tab:efficiency_analysis_moved} shows the average processing time per query (including the call of Strong LLM) and the final size of strategies repository across datasets. The results imply that the time difference is between -0.02s and +0.36s, which won't impact the user experience. The size of repository is at level of 10MB+ when the number of queries is at 10K+ level, which can be easily maintained in resource limited settings like mobile or edge device.
More promisingly, accumulated queries and responses can serve as training data for periodic offline fine-tuning the Weak LLM (for example as part of a software update), enabling a self-improving pipeline that dynamically adapts to new data.

\textbf{Ablation Study on Strategy Selection} In order to evaluate the impact of each part when we add strategies to the input of Weak LLM, we conduct ablation experiments for different settings: only adding similar questions and answers (No strategy), adding randomly selected strategies (Random), and our standard Inter-Cascade pipeline (Retrieval). The results in Table~\ref{tab:ablation_strategy} and Table~\ref{tab:weakllm-ablation}, show that the performance of Random Strategy method is between our standard pipeline and Jung's method, while No Strategy is not an acceptable option. Although in benchmarks like NASA-History, the overall accuracy is 2.00\% higher than our standard pipeline, the cost is significant: the Strong Call Rate increase by 42.58\%, which means only adding similar question and answer to the input of Weak LLM would use 2.89x of the Strong LLM. Moreover, the Weak LLM's accuracy would be dramatically undermined by adding non-strategy information to the input of Weak LLM compared to the accuracy for single Weak LLM in Table~\ref{tab:single_llm_benchmarks}. Only adding retrieved question and answers without instructive and generalized problem solving strategy to Weak LLM input is harmful: not only lower the accuracy of Weak LLM, but also call more Strong LLM, which is more expensive. Extensive Ablation studies on \textbf{cold start} of the strategy repository, \textbf{effect of the size} of strategies and different \textbf{selection of LLM pairs} are attached in Appendix~\ref{app:ablation}.

\textbf{Inter-Cascade Robustness under Automatic Strategies.} 
All strategies and their corresponding answers are generated by the Strong LLM in a streaming manner, and any strategy whose confidence exceeds the threshold $\lambda_s$ is automatically accepted. This differentiates \textit{Inter-Cascade} from other LLM augmentation methods such as manually selected in-context learning, few-shot prompting, or static retrieval-augmented generation. Consequently, the strategy repository may contain incorrect strategies. Nonetheless, the results in Table~\ref{tab:overall-per-dataset-tight} and Table~\ref{tab:weakllm-per-dataset-tight} demonstrate the effectiveness of $\lambda_s$ and the robustness of the \textit{Inter-Cascade} pipeline.

\section{Related Work} \label{sec: related work main}

\textbf{LLM Cascades and learning to defer.}
LLM cascades route queries across models of different cost and capability using confidence-based deferral policies, aiming to balance quality and compute~\citep{chen_frugalgpt_2023}.
Recent work explores token-level deferral and post-hoc routing functions~\citep{shen_learning_2024,rayan_learning_2025}, learned routers that decide \emph{before} invoking a stronger model~\citep{ong_routellm_2024}, and cost-aware extensions such as early discarding or rational tuning~\citep{zellinger_cost-saving_2025,zellinger_rational_2025}.
These lines build on learning-with-reject frameworks~\citep{chow_optimum_1957,chow1970optimum,madras_predict_2018,mozannar_consistent_2021,wu_learning_2025}.
However, most deployed cascades remain largely static after training: similar hard queries can repeatedly trigger strong-model calls without transferring knowledge to the weak model.

\textbf{Distillation and retrieval-augmented generation.}
Knowledge distillation transfers capabilities from a strong teacher to a weaker student, typically via (re)training with soft targets or intermediate supervision~\citep{hinton_distilling_2015,romero_fitnets_2015}.
RAG methods instead augment generation with non-parametric memory, usually retrieving from a fixed external corpus~\citep{lewis_retrieval-augmented_2021} or from human-chatbot interaction histories for personalization~\citep{zhang2025dh,mo2025conversational}.
Inter-Cascade connects these directions: when the weak model defers, the strong model produces reusable strategies that are stored and later retrieved to guide future weak-model attempts, yielding an \emph{online, in-context distillation} mechanism at inference time that is complementary to classical distillation and RAG, without parameter updates and human involvement.

\textbf{Other related topics} There are also a weak model and strong model in \textit{Speculative decoding}~\citep{leviathan_fast_2023,narasimhan_faster_2024}, where the weak model works as a answer draft while the strong model works as a verifier to speed up the generation compared to only using strong model. However, in Inter-Cascade, Strong LLM is called only when the Weak LLM is unable to handle current query. \textit{CombLM}~\citep{ormazabal_comblm_2023} and \textit{LLM Debate}~\citep{irving_ai_2018,du_improving_2023,estornell_multi-llm_2024,khan_debating_2024,zhou2025debatereflectdistillmultiagent} are other branches of works that also involve interaction between LLMs. CombLM integrates the logit distribution of two LLMs while LLM Debate requires different LLMs to argue and refine their initial answers and eventually reach consensus through multiple rounds of interaction. The key difference between Inter-Cascade and them is that Inter-Cascade let the Strong LLM and Weak LLM work in a sequential order can conduct early stop to save tokens.

Extensive discussion on related works is in Appendix~\ref{sec: related work}.
\section{Conclusion}
We propose \textit{Inter-Cascade}, an online interactive framework that enables Weak LLMs to learn from Strong LLMs' prior reasoning without fine-tuning. By transforming the strong model into a teacher, Inter-Cascade significantly improves both the weak model’s accuracy and overall system performance while reducing computational costs and reliance on expensive models compared to standard static cascades.

As a general and scalable framework, Inter-Cascade opens several avenues for future research. Immediate improvements could focus on refining strategy generation, optimizing similarity retrieval algorithms, and mitigating context mismatch. Furthermore, the framework is naturally suited for distributed systems, allowing local models to tailor their capabilities by selectively querying Strong LLM. Finally, Inter-Cascade bridges the gap between online and offline learning. The dynamically generated strategy repository not only augments inference in real-time but can also serve as a high-quality dataset for periodic fine-tuning, permanently internalizing the strong model's capabilities. We hope this work inspires further exploration into interactive, teacher-student dynamics within multi-LLM systems.

\section*{Impact Statement}
This paper presents work whose goal is to advance the field of Machine Learning. There are many potential societal consequences of our work, none of which we feel must be specifically highlighted here.

\bibliography{yw}
\bibliographystyle{icml2026}

\newpage
\appendix
\onecolumn

\section{Extended Related Work}\label{sec: related work}
\textbf{LLM Cascade} There are many LLM paradigms that contain collaboration between multiples LLMs in a system~\citep{chen2025harnessing}: a)Ensemble before inference, where router choose one LLM from candidates for inference; b)Ensemble during inference, where LLMs work in parallel; c) Ensemble after inference, where LLMs work in sequence and LLM Cascade belongs to this filed. LLM Cascade is firstly proposed by~\citet{chen_frugalgpt_2023} to balance the LLM performance and cost by allocating queries to a weak model or a strong model according to the confidence estimate of the queried question. ~\citet{shen_learning_2024} propose a latent variable model to let the weak model learn the deferral function at the token-level. ~\citet{rayan_learning_2025} also extend the Learning to Defer~\citep{madras_predict_2018} setting to LLM by training a post-hoc deferral function for each token of the sequence. ~\citet{ong_routellm_2024} train a separate router such that deferral decision can be made before sending the query to weak LLM, saving more tokens. ~\citet{zellinger_cost-saving_2025} provide extra option to early discard the unsolvable queries in weak model. ~\citet{xia_which_2024,nie_online_2024} formulate LLM Cascade as online problem to dynamically adjust its deferral policy over time.~\citet{zellinger_rational_2025} propose a rational tuning pipeline for LLM Cascade via probabilistic modeling. Since the deferral result relies on the confidence score of weak model, there are are literatures focusing on boosting the the measure of confidence of weak model's output~\citep{jitkrittum_when_2023, chuang_learning_2025}. Together with experimental verification, ~\citet{jung2024trustescalatellmjudges} conduct fixed sequence testing to provably guarantee the lower bound of accuracy. Therefore, we choose Cascaded Selective Evaluation by~\citet{jung2024trustescalatellmjudges} as the baseline of our work. Beside deferring to strong model,
Beyond standard LLM Cascade, ~\citet{strong_trustworthy_2025} propose a deferral system that weak model also sends its generated intelligent guidance to strong model once deferred, boosting the performance of next level model. However, current LLM Cascades cannot adapt to the query streaming once trained and deployed. And the weak model cannot learn from the previous deferrals and corresponding strategies generated by the strong model, causing the waste of computation, tokens, money and sometimes communication.

\textbf{Learning With Reject Option} The general framework that allows a machine learning model to abstain from making decision was originally propose by~\citet{chow_optimum_1957,chow1970optimum} in the 1950s. After decades, the Learning with reject option was continuously explored in different periods by~\citet{herbei_classification_2006} and~\citet{ortner_learning_2016}. The more recent works extend the framework to a multi models system where the local model can learn to defer its task to one expert (human or existing model) ~\citep{madras_predict_2018,mozannar_consistent_2021,verma_calibrated_2022,mao_theoretically_2024}, multiple experts~\citep{verma_learning_2023,barneva_principled_2024} or unknown experts~\citep{nguyen_probabilistic_2025,strong_expert-agnostic_2025,tailor_learning_2024}. There are literature that also explore the case when expert can learn to adaptively help the local model~\citep{wu2024learninghelptrainingmodels,wu_learning_2025}. Adding reject option at the network layer level is another branch of works called early exiting~\citep{teerapittayanon2016branchynet}. However, most of the learning with reject option works focus on classical prediction tasks, few of them address the NLP tasks that rely on generative-based model while this work focus on the collaboration between LLMs. 

\textbf{Knowledge Distillation} Knowledge distillation (KD) is a machine learning technique for training smaller "student" models by transferring "knowledge" from larger, more powerful "teacher" models. Classical knowledge distillation use soft-labels~\citep{hinton_distilling_2015} to let the student model learn the distribution of teacher model. The concept of KD is expanded to more levels: besides mimicking the output of teacher model, the student model can also learn from intermediate features~\citep{romero_fitnets_2015,pham_frequency_2024}, relationships~\citep{joshi_representation_2024}, actively chosen sample~\citep{liu2024evolvingknowledgedistillationlarge}, principle discovery~\citep{wang2024tpdenhancingstudentlanguage} and itself ~\citep{lee_self-knowledge_2022}. Our Inter-Cascade also helps the knowledge transfer from the Strong LLM to Weak LLM. However, current knowledge distillation relies on the training or finetuning of the student model and can not continue learning process during inference phase while our method doesn't require the updating of the LLM parameters and continually improves during the inference phase via dynamically matching stored Strong LLM's strategy.

\textbf{Retrieval-Augmented Generation(RAG)}RAG~\citep{lewis_retrieval-augmented_2021} is an approach that combines pre-trained parametric and non-parametric memory for language generation. Given the focus of our work, we group RAG-style approaches into three categories: static RAG, history-aware RAG, and agentic RAG.

\textit{Static RAG.} Classical RAG assumes a fixed, pre-constructed external corpus and focuses on how to retrieve, re-rank, and fuse evidence to support generation. Works in this line focus on design dense retrieval and re-ranking pipelines over a static collection~\citep{lewis_retrieval-augmented_2021,edge_local_2025, wang2025elicitllmaugmentationexternal,rubin_learning_2022-2,margatina_active_2023-1}.In all these methods, the source of knowledge is an offline, human-curated dataset, and the system’s adaptivity lies purely in how it accesses this corpus, not what the corpus contains.
By contrast, Inter-Cascade does not assume any pre-existing database: the “corpus” is constructed online as the strong LLM generates strategies and reasoning traces that are stored for future reuse by the weak LLM. Thus, our system is closer to an online, LLM-driven knowledge construction mechanism than to classical static RAG.

\textit{History-Aware RAG.} A second line of work augments RAG with dialogue history and user feedback, dynamically updating a memory store based on past interactions. Conversational RAG frameworks like DH-RAG~\citep{zhang2025dh}, CHIQ~\citep{mo2025conversational} maintain short-term and long-term memories of successful dialogue turns, using them to improve future retrieval and personalization. Other methods such as ComRAG~\citep{chen2025comrag}, ERAGent~\citep{shi2024eragent}, Pistis-RAG~\citep{bai2024pistis}, and Social-RAG~\citep{wang2025social} update user profiles or QA memories when users provide explicit positive feedback or when high-quality answers are validated by the social community. Despite their dynamism, these systems either take history information for self usage or treat the human user (or user community) as the source of new content. The resulting models are primarily personalized assistants. In Inter-Cascade, the update loop is fundamentally different: the weak LLM decides when to update, and the strong LLM decides what to write, without any human in the loop. The stored content is not user utterances or QA pairs, but LLM-generated strategies and reasoning structures distilled from a stronger model. Rather than personalizing to a single user, Inter-Cascade uses interaction between two models to build a reusable strategic knowledge base for many users and tasks.

\textit{Agentic RAG} A third, increasingly prominent direction combines RAG with multi-agent or agentic architectures~\citep{li2025towards}. In these systems, different agents are assigned distinct roles, e.g., planner, retriever, answer generator, or verifier.
Those agents collaborate via tool calls and message passing.
For centralized systems like MA-RAG~\citep{nguyen2025ma}, HM-RAG~\citep{liu2025hm}, and SurgRaw~\citep{low2025surgraw}, the focus is on managing the workflow, such as deciding when to use the retriever to access the existing database.
Decentralized methods like M-RAG~\citep{wang2024m} and MDocAgent~\citep{han2025mdocagent} consider retrieval from partitioned databases. There are also works like RECOND (end-to-end generation)~\citep{xu2025recon}Hippo (knowledge-graph)~\citep{gutierrez2025rag}, IM-RAG (multi step refinement)~\citep{yang2024rag} and FAIR-RAG (fair retrieval)~\citep{shrestha2024fairrag} propose algorithms to refine answers from RAG database. 
However, in all such designs, the RAG component itself remains an external, fixed resource: agents coordinate how to use RAG, but no agent is responsible for constructing a new corpus of knowledge for others. Inter-Cascade differs from these agentic RAG systems in two key aspects. First, there are only two “agents”: a weak LLM and a strong LLM, but their interaction is explicitly teacher–student and online knowledge distillation, rather than mere division of labor. Second, the strong LLM actively produces the knowledge store that the weak LLM later retrieves, making the RAG-like database a product of model interaction rather than a static tool.

Across all three categories, existing RAG approaches either (i) operate over a fixed, human-curated external corpus, (ii) update a memory store using human dialogue and feedback, or (iii) update a memory using self history for personalization without knowledge transfer. To our knowledge, Inter-Cascade is the first framework where a weak LLM and a strong LLM jointly and autonomously build a RAG-like corpus under the framework of LLM Cascade, with the weak model deciding when to consult and update it, and the strong model providing the organized knowledge. This yields a new form of online, interaction-driven distillation, particularly suitable for small models without access to large external knowledge bases or the Internet.

\textbf{Other related topics} There are also a weak model and strong model in \textit{Speculative decoding}~\citep{leviathan_fast_2023,narasimhan_faster_2024}. In speculative decoding, the weak model works as a answer draft while the strong model work as a verifier to speed up the generation compared to only using strong model. However, in Inter-Cascade, Strong LLM is called only when the Weak LLM is unable to handle current query. \textit{CombLM}~\citep{ormazabal_comblm_2023} and \textit{LLM Debate}~\citep{irving_ai_2018,du_improving_2023,estornell_multi-llm_2024,khan_debating_2024,zhou2025debatereflectdistillmultiagent} are other branches of works that also involve interaction between LLMs. CombLM integrates the logit distribution of two LLMs while LLM Debate requires different LLMs to argue and refine their initial answers and eventually reach consensus through multiple rounds of interaction. The key difference between Inter-Cascade and them is that Inter-Cascade let the Strong LLM and Weak LLM work in a sequential order can conduct early stop to save tokens.

\begin{algorithm}[H]
\caption{Inter-Cascade Inference Pipeline}
\label{alg:general-intercascade}
\begin{algorithmic}[1]
\Require Test set $\mathcal{T}=\{q_1,\dots,q_I\}\subseteq\mathcal{Q}$; LLM $M_n$ with deferral function $d_n$, generation function $g_n$, strategy repository $\Repo_n$ and strategy generator $h_n$.
\Statex \textbf{Deferral convention:} $0=\text{handle locally}$, $1=\text{defer/forward}$.
\State $\Repo=\emptyset$
\For{$n \gets 1$ \textbf{to} $N$}
    \For{{$i \gets 1$ \textbf{to} $I$}}
        \If{$n<N$}
            \State \textbf{(Strategy matching)}
            \State $[s^{t_1}_i, s^{t_2}_i, ..., s^{t_k}_i] \gets \Match_n(q_i, \Repo_n)$ \Comment{Find most relevant top-$k$ strategies to $q_i$}
            \State $q^{\prime}_i \gets [q_i, s^{t_1}_i, s^{t_2}_i, ..., s^{t_k}_i] $ \Comment{concatenate query and strategies}
        \Else
            \State $q'_i=q_i$
            \Comment{Last LLM doesn't maintain $\Repo$}
        \EndIf

        \Statex
        \State \textbf{(Deferral Decision)}
        \If{$d_n(q^{\prime}_i) = 0$}
            \State generate answer $a_i \gets \GenW(q^{\prime}_i)$ \Comment{Answer locally at Weaker LLM}
            \State $s_{\text{new}} \gets \StratGen(q_i)$
            \State $\Repo_{<n} \gets \Repo_{<n} \cup \{s_{\text{new}}\}$
            \Comment{Add strategy to all the weaker LLMs}
        \Else
            \If {$n<N$}
            \State Pass
            \Comment{Defer to next level}
            \Else
            \State Discard current query $q_i$ 
            \Comment{None of LLMs are confident to answer the query}
            \EndIf
        \EndIf
    \EndFor
\EndFor
\end{algorithmic}
\end{algorithm}

\section{Order of LLMs}\label{appendix：Order of LLM}
To distinguish two LLMs into strong model \( M_s \) and weak model \( M_w \), we make following definitions.  
For a task distribution $\mathcal{D}$, we denote the performance of a model $M$ by $\operatorname{Perf}(M)$, which can be instantiated by measures such as the expected accuracy or negative loss on $\mathcal{D}$. 
Similarly, we let $\operatorname{Cost}(M)$ represent the expected cost of using $M$ on $\mathcal{D}$, such as the price, latency, or required computation resource. Note that \( \mathrm{Cost} \) also depends on the task distribution \( \mathcal{D} \), for simplicity, we only use the notation \( \mathrm{Cost}(M) \).
We say that $M_w$ is weaker than $M_s$ if $\operatorname{Perf}(m_w) \leq \operatorname{Perf}(m_s)$, and that it is cheaper if $\operatorname{Cost}(m_w) \leq \operatorname{Cost}(m_s)$. To simplify notation, we introduce the shorthand relation
\begin{align}
    M_w \preccurlyeq_{\text{wbc}} M_s
\end{align}

if and only if
\begin{align}
    \operatorname{Perf}(M_w) \le \operatorname{Perf}(M_s)
\quad \text{and} \quad
\operatorname{Cost}(M_w) \leq \operatorname{Cost}(M_s),
\end{align}

where the term ``wbc'' represents “weaker but cheaper”.
Consider a multi-LLM inference/generation system, which contains $N$ LLM
models, $\mathcal{M}= \{ M_1, M_2, ..., M_N \}$, with different capacities and
use costs to a query. WLOG, we assume that $M_1 \preccurlyeq_{\text{wbc}} M_2\preccurlyeq_{\text{wbc}} ...\preccurlyeq_{\text{wbc}} M_N$.

\section{Algorithm for General Inter-Cascade}\label{app:general-inter}
Since Inter-Cascade is scalable to any number of layers for LLM, the general Inter-Cascade pipeline for $N$-LLM cascade system is shown in Algo.~\ref{alg:general-intercascade}.


\section{Proof: Clopper-Pearson Upper bound as a Beta quantile}\label{proof: CP upper bound}

In the lemma below, we apply the Clopper-Pearson upper bound to rewrite $R^{+}(\lambda)$, yielding a clearer form that facilitates computation. This helps the proof of Theorem~\ref{thm: decrease} and Theorem~\ref{thm: decrease-c}.

\begin{lemma}[Clopper--Pearson upper bound as a Beta quantile]\label{lemma: upper bound quantile}
Let $n(\lambda)\in\mathbb{N}$ be the number of evaluated items at threshold $\lambda$, 
let $R(\lambda)\in[0,1]$ denote the unknown risk, and suppose
\[
X \sim \mathrm{Bin}\big(n(\lambda),\,R(\lambda)\big),
\]
and $x \in \{0,1,\dots,n(\lambda)\}$ is the number of error observed. Write $\widehat R(\lambda)=x/n(\lambda)$. For a fixed $\delta\in(0,1)$, define the one-sided $(1-\delta)$
upper confidence limit by
\[
\widehat R^{+}(\lambda)
:=\sup\Big\{\,p\in[0,1]:\ \Pr_{p}\!\big(\mathrm{Bin}(n(\lambda),p)\le x\big)\ge \delta\,\Big\}.
\]
Then
\[
\ \widehat R^{+}(\lambda)
= \Betainv\!\big(1-\delta;\ x+1,\ n(\lambda)-x\big)
\]
with the usual edge conventions
$\Betainv(1-\delta;1,n)=1-\delta^{1/n}$ when $x=0$ and $\widehat R^{+}(\lambda)=1$ when $x=n(\lambda)$.
\end{lemma}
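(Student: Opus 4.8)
The plan is to fix $n := n(\lambda)$ and study the function
\[
g(p) \;:=\; \Pr_p\!\big(\mathrm{Bin}(n,p)\le x\big)
\;=\; \sum_{j=0}^{x}\binom{n}{j}p^{j}(1-p)^{n-j}
\]
of the success parameter $p\in[0,1]$. Since the condition defining the supremum is exactly $g(p)\ge\delta$, the entire claim reduces to determining the shape of $g$ and inverting it. First I would record the boundary behavior: because $\mathrm{Bin}(n,0)$ is degenerate at $0$ and $x\ge 0$, we have $g(0)=1$; and for $x<n$ the law $\mathrm{Bin}(n,1)$ is degenerate at $n>x$, so $g(1)=0$.

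The key analytic step is strict monotonicity. Differentiating term by term and reindexing, the sum telescopes to
\[
g'(p) \;=\; -\,\frac{n!}{x!\,(n-x-1)!}\,p^{x}(1-p)^{\,n-x-1},
\]
which is strictly negative on $(0,1)$ whenever $0\le x<n$. Hence $g$ is continuous and strictly decreasing from $g(0)=1$ to $g(1)=0$. Since $\delta\in(0,1)$ lies strictly between these endpoints, the intermediate value theorem together with strict monotonicity produces a unique $p^{\star}\in(0,1)$ with $g(p^{\star})=\delta$, and the solution set $\{p:g(p)\ge\delta\}$ is precisely the interval $[0,p^{\star}]$. Consequently the supremum in the definition of $\widehat R^{+}(\lambda)$ equals $p^{\star}$.

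It remains to identify $p^{\star}$ as a Beta quantile. The expression $-g'(p)$ above is exactly the $\mathrm{Beta}(x+1,\,n-x)$ density, so integrating from $0$ gives the classical binomial--beta identity $g(p)=1-I_{p}(x+1,\,n-x)$, where $I_p$ denotes the regularized incomplete beta function, i.e.\ the CDF of $\mathrm{Beta}(x+1,n-x)$. The equation $g(p^{\star})=\delta$ then becomes $I_{p^{\star}}(x+1,\,n-x)=1-\delta$, which inverts to $p^{\star}=\Betainv(1-\delta;\,x+1,\,n-x)$, as claimed.

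Finally I would dispatch the edge cases. When $x=0$ the function collapses to $g(p)=(1-p)^{n}$, so $p^{\star}=1-\delta^{1/n}$, matching $\Betainv(1-\delta;1,n)$ since $\mathrm{Beta}(1,n)$ has CDF $1-(1-p)^{n}$; when $x=n$ the event $\{\mathrm{Bin}(n,p)\le x\}$ is certain, so $g\equiv 1\ge\delta$ and the feasible set is all of $[0,1]$, giving supremum $1$. The only step demanding genuine care is the telescoping differentiation that simultaneously yields the monotonicity and exhibits the Beta density; once that identity is in hand, the remainder is a routine application of the intermediate value theorem and monotone inversion.
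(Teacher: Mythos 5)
Your proposal is correct and follows essentially the same route as the paper's proof: both arguments invert the strictly decreasing binomial lower-tail CDF in $p$ and identify the critical point as the $(1-\delta)$ quantile of $\mathrm{Beta}(x+1,\,n-x)$ via the binomial--beta identity, with the same treatment of the edge cases. The only difference is that you establish the strict monotonicity and the identity explicitly through the telescoping derivative $g'(p)=-\tfrac{n!}{x!\,(n-x-1)!}\,p^{x}(1-p)^{n-x-1}$, whereas the paper asserts the monotonicity and cites the identity as standard --- a welcome bit of added self-containedness, but not a different method.
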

\begin{proof}
For fixed $x<n(\lambda)$ the map $p\mapsto F(p):=\Pr\big(\mathrm{Bin}(n(\lambda),p)\le x\big)$
is strictly decreasing in $p$, so the set in the definition of $\widehat R^{+}(\lambda)$
is an interval $[0,p^\star]$ and the supremum $p^\star$ uniquely solves
\begin{equation}\label{eq:crit}
F(p^\star)=P\big(\mathrm{Bin}(n(\lambda),p^\star)\le x\big)=\delta.
\end{equation}
Using the standard identity linking the binomial tail to the regularized incomplete beta function,
for integers $0\le x\le n(\lambda)-1$,
\[
P(X\le x)
= \sum_{k=0}^{x}\binom{n(\lambda)}{k}p^{k}(1-p)^{n(\lambda)-k}
= 1 - I_{p}\!\big(x+1,\ n(\lambda)-x\big),
\]
where $I_{p}(a,b)$ is the CDF of $\mathrm{Beta}(a,b)$ at $p$.
Plugging this into \eqref{eq:crit} gives
\[
I_{p^\star}\!\big(x+1,\ n(\lambda)-x\big)=1-\delta,
\]
so $p^\star$ is the $(1-\delta)$ quantile of the $\mathrm{Beta}\big(x+1,\ n(\lambda)-x\big)$ distribution:
\[
p^\star=\Betainv\!\big(1-\delta;\ x+1,\ n(\lambda)-x\big).
\]
This equals $\widehat R^{+}(\lambda)$ by definition. The stated edge cases follow from
$F(p)=(1-p)^{n(\lambda)}$ when $x=0$ and from monotonicity when $x=n(\lambda)$.
\end{proof}

\section{Proof: Unchanged Threshold}\label{proof: change of confident queries}

\begin{theorem}
Suppose that $\widehat R^{+}(\lambda)$ is a monotonic decreasing function of $\lambda$. Fix $\delta\in(0,1)$ and an integer $n\ge 1$. For $x\in\{0,1,\dots,n\}$, $\epsilon\in(0,1]$, and $b\in[1,\infty)$. 
Suppose that $\min\{\epsilon x+1,\,n-\epsilon x\}$ is moderately large and $1-\delta$ is not an extreme tail,  then:

\smallskip
\noindent\textbf{(a) Decrease in value.}
$\alpha(\epsilon,b)\leq \alpha(1,1)$ when $\epsilon\in(0,1]$ and $b\in[1,\infty)$.

\smallskip
\noindent\textbf{(b) Normal approximation for the amount of decrease.}
Let $z:=\Phiinv(1-\delta)$, where $\Phi$ is the Normal cumulative distribution function, when $n$ is large enough, the decrease of the risk under same level of tolerance is given by,
\begin{align*}
\alpha(1,1) - \alpha(\epsilon,b)
\approx
\left(\frac{x+1}{n+1}-\frac{\epsilon x+1}{b n+1}\right)
+ z\!\left[
\sqrt{\frac{(x+1)(n-x)}{(n+1)^2(n+2)}}
-
\sqrt{\frac{(\epsilon x+1)(b n-\epsilon x)}{(b n+1)^2(b n+2)}}
\right].
\end{align*}

\end{theorem}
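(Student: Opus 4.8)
The plan is to reduce both claims to monotonicity and asymptotic statements about a single object: the Clopper--Pearson bound rewritten as a Beta quantile. First I would invoke Lemma~\ref{lemma: upper bound quantile}. Because the calibration in Algorithm~\ref{alg:calibration} selects the threshold at which the bound is tight, and because $\widehat{R}^{+}(\lambda)$ is assumed monotone in $\lambda$, the attained tolerance equals the bound itself. Writing $q(a,c):=\Betainv(1-\delta;a,c)$, this yields the two identities $\alpha(\epsilon,b)=q(\epsilon x+1,\ bn-\epsilon x)$ and $\alpha(1,1)=q(x+1,\ n-x)$, which I would use throughout. Everything then reduces to understanding how $q$ responds to its two shape parameters.

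For part (a) I would establish two one-parameter monotonicity facts and chain them. The Beta family is totally ordered by the monotone likelihood ratio in each shape parameter: the density ratio of $\mathrm{Beta}(a_2,c)$ to $\mathrm{Beta}(a_1,c)$ is proportional to $t^{a_2-a_1}$, increasing on $(0,1)$, and that of $\mathrm{Beta}(a,c_1)$ to $\mathrm{Beta}(a,c_2)$ is proportional to $(1-t)^{c_1-c_2}$. MLR ordering implies first-order stochastic dominance, hence $q(a,c)$ is nondecreasing in $a$ and nonincreasing in $c$. I would then pass from $(x+1,\ n-x)$ to $(\epsilon x+1,\ bn-\epsilon x)$ in single-coordinate steps: decreasing the first parameter from $x+1$ to $\epsilon x+1$ (valid since $\epsilon\le 1$) lowers $q$, and increasing the second from $n-x$ first to $n-\epsilon x$ and then to $bn-\epsilon x$ (valid since $\epsilon\le 1$ and $b\ge 1$) lowers $q$ again. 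Chaining gives $\alpha(\epsilon,b)\le\alpha(1,1)$ exactly, without needing the approximation hypotheses.

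For part (b) I would use the Gaussian approximation of a Beta quantile valid in the stated regime. When the shape parameters are moderately large and $1-\delta$ is not an extreme tail, $\mathrm{Beta}(a,c)$ is approximately Gaussian with mean $\mu=a/(a+c)$ and variance $\sigma^2=ac/[(a+c)^2(a+c+1)]$, so $q(a,c)\approx \mu+z\sigma$ with $z=\Phiinv(1-\delta)$. Substituting $(a,c)=(x+1,\ n-x)$, for which $a+c=n+1$, and $(a,c)=(\epsilon x+1,\ bn-\epsilon x)$, for which $a+c=bn+1$, and subtracting produces exactly the claimed expression, the first bracket being the difference of means and the second the difference of standard deviations scaled by $z$. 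I would also note that the mean difference is nonnegative, since clearing denominators leaves the numerator $nx(b-\epsilon)+x(1-\epsilon)+n(b-1)\ge 0$, consistent with part (a).

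The main obstacle is controlling the error in the quantile approximation of part (b). Asymptotic normality of the Beta cdf is standard --- for instance via the representation $Y=G_1/(G_1+G_2)$ with independent Gamma variates and the delta method, or a Cornish--Fisher expansion --- but transferring a cdf approximation to a quantile approximation requires that the density stay bounded away from zero near the relevant quantile, which is precisely what the ``moderately large'' and ``not an extreme tail'' hypotheses secure. I would make these conditions quantitative (e.g. $\min\{\epsilon x+1,\ n-\epsilon x\}\to\infty$ with $z$ fixed) so that the remainder is of smaller order than the displayed leading term; the remaining substitution of the Beta moments is then routine algebra.
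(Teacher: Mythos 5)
Your proposal is correct and follows essentially the same route as the paper: both rest on Lemma~\ref{lemma: upper bound quantile} to identify $\alpha(\epsilon,b)$ with the Beta quantile $\Betainv(1-\delta;\,\epsilon x+1,\,bn-\epsilon x)$, and part (b) is the identical normal approximation $\mu+z\sigma$ of that quantile. The only (immaterial) difference is in part (a), where you derive monotonicity of the quantile in the two shape parameters via MLR/stochastic dominance of the Beta family, whereas the paper obtains the same comparison $p_2\le p_1$ by noting that lowering the success count and raising the trial count shrinks the binomial lower tail; your added remark on when the cdf-to-quantile transfer is legitimate is a point the paper leaves implicit.
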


\begin{proof}
We use a Beta function to represent the variable $\widehat R^{+}(\lambda)$, which is equivalent to the risk $\alpha$, when $\widehat R^{+}(\lambda)$ is a monotonic decreasing function of $\lambda$. We then use the approximation to Beta function to evaluate the decrease of $\alpha$ by definition. For the convenience of statement of our theories, we define that $\alpha(\epsilon,b)$ as the the value of risk bound $\alpha$ when the obtained $\lambda$ satisfies $n(\lambda)=bn$ and incorrectly answered queries among $n(\lambda)$ is $x(\lambda) = \epsilon x$, given the $\delta$ fixed.
(a) Notice that we assume that $\widehat R^{+}(\lambda)$ is a monotonic decreasing function of $\lambda$. Let us suppose that $\lambda_0$ satisfies that $n(\lambda_0)=bn$ and $x(\lambda_0)=\epsilon x$. By Algorithm~\ref{alg:calibration}, this shows that $\widehat R^{+}(\lambda_0) = \alpha(\epsilon,b)$. 

From Lemma~\ref{lemma: upper bound quantile}, we know that 
\[
\alpha(\epsilon,b)\;:=\;\Betainv\!\big(1-\delta;\,\epsilon x+1,\,bn-\epsilon x\big).
\]
Let \(p_1=\Betainv(1-\delta;\,x+1,\,n-x)\).
Then, by the property of Beta distribution,
\(P\!\big(\mathrm{Bin}(n,p_1)\le x\big)=\delta\).
It follows that,
\[
P\!\big(\mathrm{Bin}(bn,p_1)\le \epsilon x\big)
\;\le\;
P\!\big(\mathrm{Bin}(n,p_1)\le x\big)
=\delta,
\]
because lowering the threshold (\(\epsilon x \leq bx\)) and increasing trials (\(bn \geq n\))
makes the left tail event rarer. Let us assume that  \(p_2=\Betainv(1-\delta;\,\epsilon x+1,\,bn - \epsilon x)\). From the proof of Lemma~\ref{lemma: upper bound quantile}, it is equivalent to that $P\!\big(\mathrm{Bin}(bn,p_2)\le \epsilon x\big) = \delta$. It follows that $P\!\big(\mathrm{Bin}(bn,p_2)\le \epsilon x\big) = \delta \geq P\!\big(\mathrm{Bin}(bn,p_1)\le \epsilon x\big)$, which implies that $p_2 \leq p_1$. Hence the new upper bound \(p_2=\Betainv(1-\delta;\,x'+1,\,n'-x')\) satisfies \(p_2\le p_1\). This shows the statement (a).

(b) Write
\[
\mu_{\epsilon,b}:=\frac{\epsilon x+1}{bn+1},\qquad
\sigma_{\epsilon,b}:=\sqrt{\frac{(\epsilon x+1)(bn-\epsilon x)}{(bn+1)^2(bn+2)}}.
\]
In the large–sample, interior regime, e.g., $\min\{\epsilon x+1,\,n-\epsilon x\}\gg1$ and $x/n$ bounded away from $0$ and $1$,
\[
\Betainv\!\big(1-\delta;\,\epsilon x+1,\,bn-\epsilon x\big) \;=\;\mu_{\epsilon,b}\;+\;z\,\sigma_{\epsilon,b}\;+\;O\!\left(\frac{1}{n}\right).
\]
This is by the approximation to Beta distribution by normal distribution. 
Calculate $\alpha(1,1) -\alpha(\epsilon,b)$ demonstrate the result of theorem.
\end{proof}

\section{Proof: Unchanged Used-Queries}\label{proof: change of threshold}

Other than the case that the threshold remains unchanged, which is analyzed above, another case may be that when the user want the same number of queries to be covered by the Weak LLM during two rounds of queries (before and after adding strategies), one of which has a better Weak LLM. Such a case controls the cost. This case considers the influence of a better Weak LLM to our pipeline. In this case, we instead assume that $n(\lambda) = n(\lambda')$, and abbreviate them as $n$ for simplicity, which ensures the same coverage of Weak LLM. The number of wrongly answered queries before and after getting a better Weak LLM are denoted by $x$ and $\epsilon x$, and we still estimate the decrease of $\alpha$ under the same level of tolerance $\delta$. We give an approximation on the change rate of the risk bound with respect to the proportion of decrease of errors. We denote by $\alpha(\epsilon)$ the $\alpha(\epsilon,b=1)$ for simplicity, and present the analysis in Theorem~\ref{thm: decrease-c}.
\begin{theorem}\label{thm: decrease-c}
Suppose that $\widehat R^{+}(\lambda)$ is a monotonic decreasing function of $\lambda$. Fix $\delta\in(0,1)$ and an integer $n\ge 1$. For $x\in\{0,1,\dots,n\}$ and $\epsilon\in(0,1]$. Suppose that $\min\{\epsilon x+1,\,n-\epsilon x\}$ is moderately large and $1-\delta$ is not an extreme tail,
then:

\smallskip
\noindent\textbf{(a) Exact monotonicity.}
$\alpha(\epsilon)$ is strictly increasing in $\epsilon$. In particular, for any $\epsilon\in(0,1)$,
\[
\alpha(\epsilon)\;<\;\alpha(1).
\]

\smallskip
\noindent\textbf{(b) Normal approximation for the amount of decrease.}
Let $z:=\Phiinv(1-\delta)$, for $\epsilon$ near $1$,
\begin{align*}
\alpha(1)-\alpha(\epsilon)
&\approx (1-\epsilon)\,\Bigg[
\frac{x}{\,n+1\,}
+\frac{z}{2(n+1)\sqrt{n+2}}\,
\frac{x( n - 1 - 2 x)}{\sqrt{(x+1)(n-x)}}
\Bigg].
\end{align*}
Hence the decrease is approximately linear in $(1-\epsilon)$ with the coefficient in brackets; in particular,
when $x\le n/2$ the variance term is nonnegative and the decrease is at least
$(1-\epsilon)\,x/(n+1)$ to first order.
\end{theorem}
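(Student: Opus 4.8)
The plan is to work directly with the Beta-quantile representation of $\alpha$. By Lemma~\ref{lemma: upper bound quantile}, specialized to $b=1$ (so there are $n$ trials and $\epsilon x$ errors),
\[
\alpha(\epsilon)=\Betainv\!\big(1-\delta;\,\epsilon x+1,\,n-\epsilon x\big).
\]
The two shape parameters sum to $(\epsilon x+1)+(n-\epsilon x)=n+1$, independent of $\epsilon$; as $\epsilon$ grows the first parameter $a(\epsilon)=\epsilon x+1$ increases while the second $b(\epsilon)=n-\epsilon x$ decreases. This single observation drives both parts.

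For part (a) I would argue via monotonicity of the regularized incomplete Beta function $I_p(a,b)$, the CDF of $\mathrm{Beta}(a,b)$. For fixed $p\in(0,1)$ this CDF is strictly decreasing in $a$ and strictly increasing in $b$. Hence, for $\epsilon_1<\epsilon_2$, simultaneously raising $a$ from $a(\epsilon_1)$ to $a(\epsilon_2)$ and lowering $b$ from $b(\epsilon_1)$ to $b(\epsilon_2)$ both decrease $I_p$ pointwise, so $\mathrm{Beta}(a(\epsilon_2),b(\epsilon_2))$ strictly stochastically dominates $\mathrm{Beta}(a(\epsilon_1),b(\epsilon_1))$ and every quantile, in particular the $(1-\delta)$ one, strictly increases. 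This gives $\alpha(\epsilon)$ strictly increasing and thus $\alpha(\epsilon)<\alpha(1)$ for $\epsilon\in(0,1)$. Equivalently, one can mirror the binomial-tail comparison used in the proof of Theorem~\ref{thm: decrease}(a): $P(\mathrm{Bin}(n,p)\le \epsilon x)$ is nondecreasing in the error count and strictly decreasing in $p$, so the solution $p$ of the calibration equation $P(\mathrm{Bin}(n,p)\le \epsilon x)=\delta$ increases with $\epsilon$.

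For part (b) I would reuse the interior normal approximation established in the proof of Theorem~\ref{thm: decrease}(b). Setting $b=1$ gives $\alpha(\epsilon)\approx \mu(\epsilon)+z\,\sigma(\epsilon)+O(1/n)$ with $\mu(\epsilon)=\frac{\epsilon x+1}{n+1}$ and $\sigma(\epsilon)=\frac{1}{(n+1)\sqrt{n+2}}\sqrt{(\epsilon x+1)(n-\epsilon x)}$. Writing $g(\epsilon)=\mu(\epsilon)+z\sigma(\epsilon)$ and Taylor-expanding about $\epsilon=1$ yields $\alpha(1)-\alpha(\epsilon)\approx g'(1)\,(1-\epsilon)$. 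The computation is routine: $\mu'(\epsilon)=\frac{x}{n+1}$, and with $u(\epsilon)=(\epsilon x+1)(n-\epsilon x)$ one gets $u'(\epsilon)=x(n-1-2\epsilon x)$, so that $\sigma'(1)=\frac{1}{(n+1)\sqrt{n+2}}\cdot\frac{u'(1)}{2\sqrt{u(1)}}=\frac{x(n-1-2x)}{2(n+1)\sqrt{n+2}\,\sqrt{(x+1)(n-x)}}$. Substituting $\mu'(1)$ and $\sigma'(1)$ into $g'(1)$ reproduces the bracketed coefficient; dropping the variance term (nonnegative once $n-1-2x\ge 0$) leaves the first-order lower bound $(1-\epsilon)\,x/(n+1)$.

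I expect the only genuine subtlety to lie in part (b): the formula differentiates an asymptotic expansion, so one must argue that the $O(1/n)$ remainder in the quantile approximation is smooth and uniform in $\epsilon$ near $1$, so that its derivative is also $O(1/n)$ and can be absorbed. This is precisely what the ``large $n$, interior regime'' hypothesis ($\min\{\epsilon x+1,\,n-\epsilon x\}$ moderately large, $1-\delta$ away from the tails) is meant to supply, and I would state it as the governing assumption rather than attempt a sharp remainder bound. A secondary cosmetic point is the claim that $x\le n/2$ makes the variance term nonnegative: the exact condition is $n-1-2x\ge 0$, i.e.\ $x\le (n-1)/2$, so the stated $x\le n/2$ is only asymptotically correct (the discrepancy being the negligible $-1$), which I would flag explicitly.
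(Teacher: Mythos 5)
Your proposal is correct and follows essentially the same route as the paper's own proof: both parts rest on the Clopper--Pearson Beta-quantile representation from Lemma~\ref{lemma: upper bound quantile}, with (a) argued via monotonicity of the binomial/Beta lower tail in the error count and (b) via the normal approximation $\alpha(\epsilon)\approx\mu_\epsilon+z\,\sigma_\epsilon$ differentiated at $\epsilon=1$, and your computed values of $\mu'(1)$ and $\sigma'(1)$ match the paper's exactly. Your closing remarks on the uniformity of the $O(1/n)$ remainder and on $x\le n/2$ versus the exact condition $x\le(n-1)/2$ are reasonable refinements that the paper leaves implicit, but they do not alter the argument.
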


\begin{proof}
(a) Similar to the proof of the statement (a) of Theorem~\ref{thm: decrease}, increasing $x$
moves mass to the right in the Binomial, so the lower-tail CDF in $p$ decreases and its $(1-\delta)$ quantile increases; with $n$ fixed this is equivalent to $\alpha(\epsilon)$ being strictly increasing in $\epsilon$.

(b) Similar to the proof of the statement (a) of Theorem~\ref{thm: decrease}, notice that 
\[
\alpha(\epsilon,1)\;:=\;\Betainv\!\big(1-\delta;\,\epsilon x+1,\,n-\epsilon x\big).
\]
For $i=\epsilon x+1$, $j=n-\epsilon x$, the Beta$(i,j)$ mean and variance are
$\mu_\epsilon=i/(i+j)$ and $\sigma_\epsilon^2=ij/[(i+j)^2(i+j+1)]$.
Approximating the $(1-\delta)$ quantile by the Normal formula gives
$\alpha(\epsilon)=\mu_\epsilon+z\sigma_\epsilon+O(1/n)$.
Differentiate at $\epsilon=1$ to obtain the first-order change:

\[
\frac{d\mu_\epsilon}{d\epsilon}\Big|_{\epsilon=1}=\frac{x}{n+1},\qquad
\frac{d\sigma_\epsilon}{d\epsilon}\Big|_{\epsilon=1}
=\frac{1}{2(n+1)\sqrt{n+2}}\cdot
\frac{( n - 1 - 2 x)x}{\sqrt{(x+1)(n-x)}}.
\]
A first-order Taylor expansion around $\epsilon=1$ yields the displayed approximation.
\end{proof}

\begin{figure}[t]
  \centering
  \begin{subfigure}{0.32\linewidth}
    \centering
    \includegraphics[height=0.13\textheight,keepaspectratio]{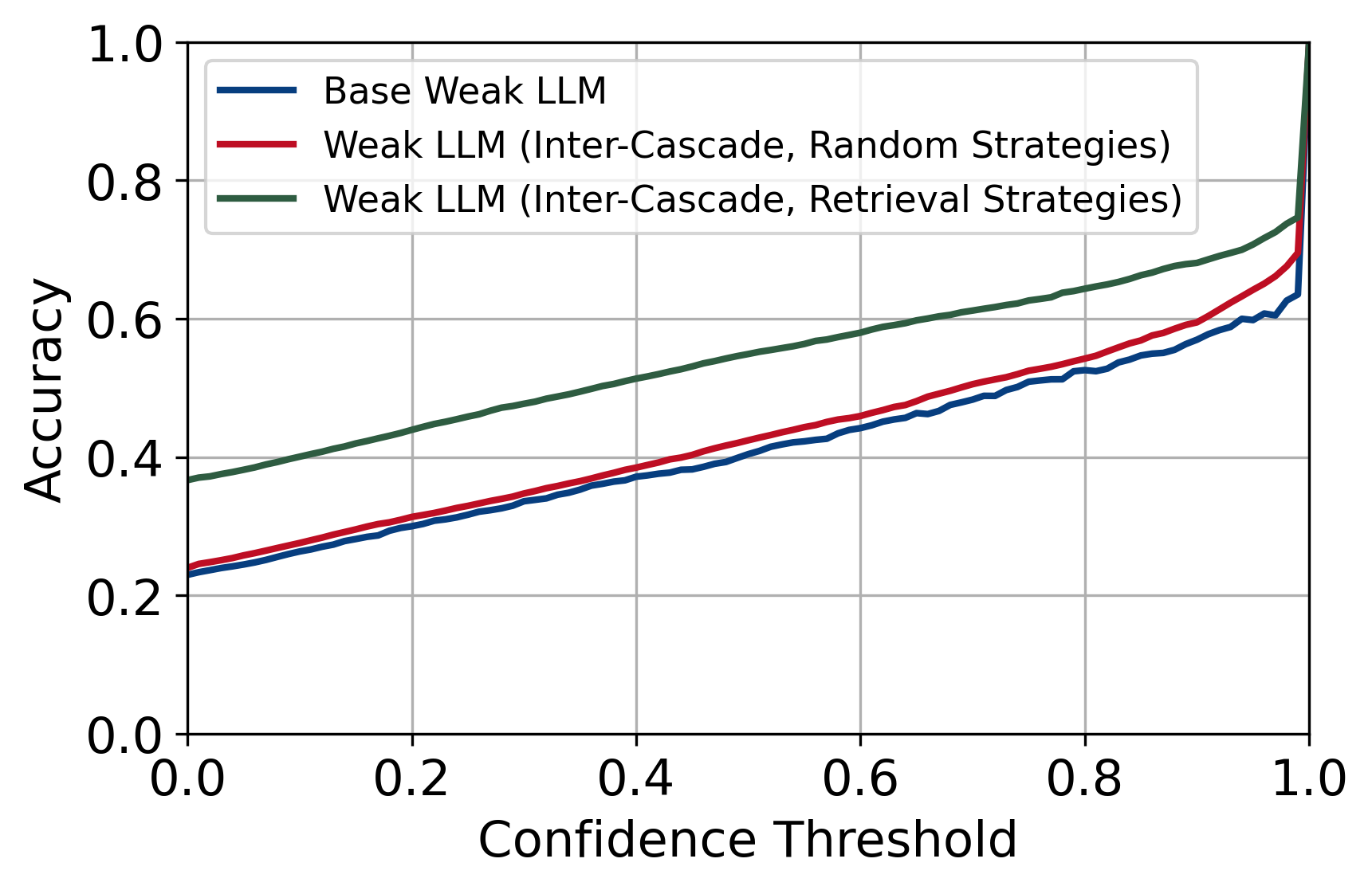}
    \caption{GSM-Plus}
  \end{subfigure}
  \begin{subfigure}{0.32\linewidth}
    \centering
    \includegraphics[height=0.13\textheight,keepaspectratio]{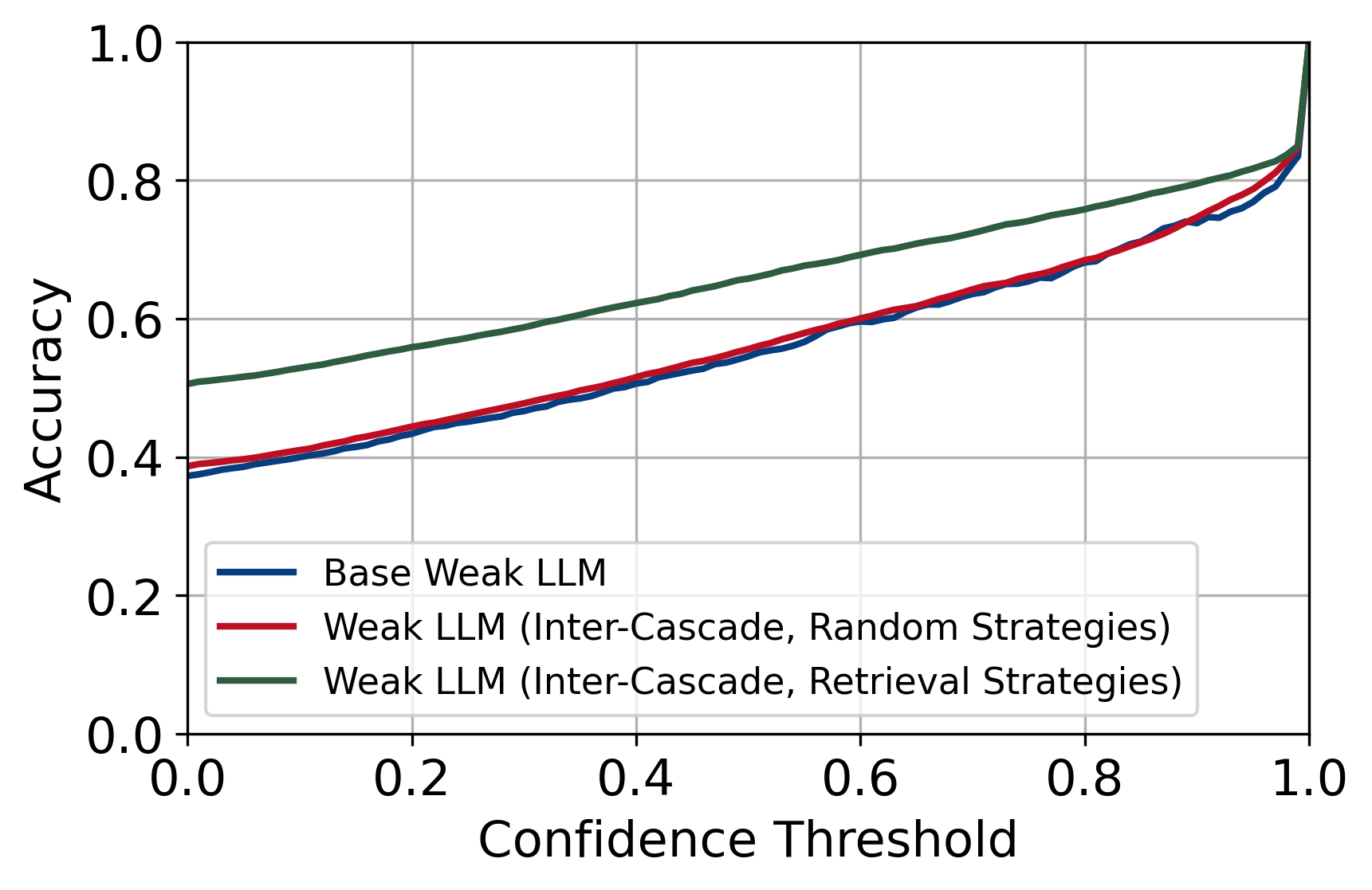}
    \caption{MetaMath}
  \end{subfigure}
  \begin{subfigure}{0.32\linewidth}
    \centering
    \includegraphics[height=0.13\textheight,keepaspectratio]{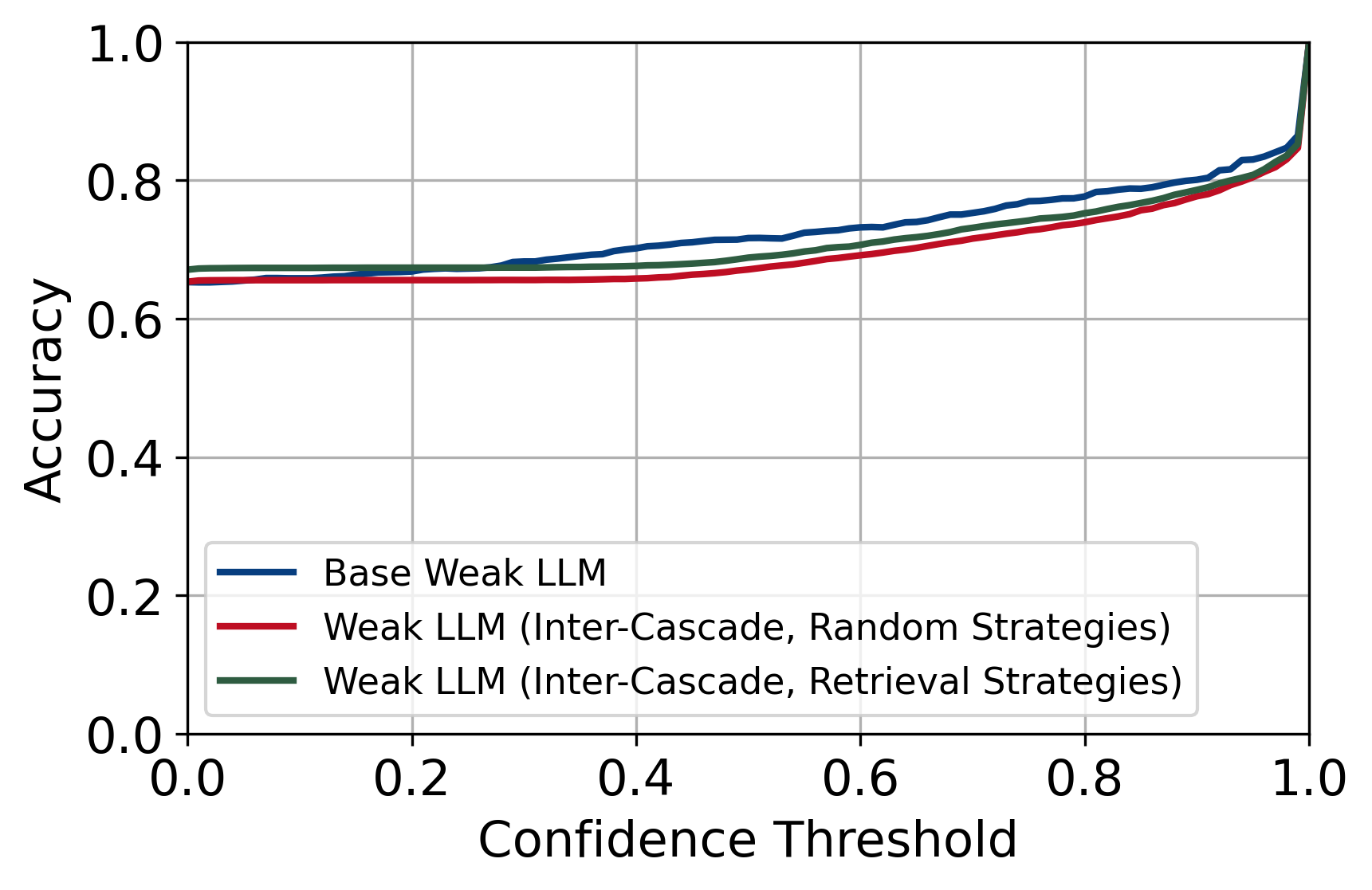}
    \caption{Nasa-History-MCQ}
  \end{subfigure}
  
  \caption{Accuracy as a function of the confidence threshold for the base Weak LLM and for the Weak LLM within the Inter-Cascade using random and retrieval strategies across three benchmarks.}
  \label{fig:dataset3_accuracy_vs_confidence}
\end{figure}

\begin{figure}[ht]
  \centering
  \begin{subfigure}{0.32\linewidth}
    \centering
    \includegraphics[height=0.13\textheight,keepaspectratio]{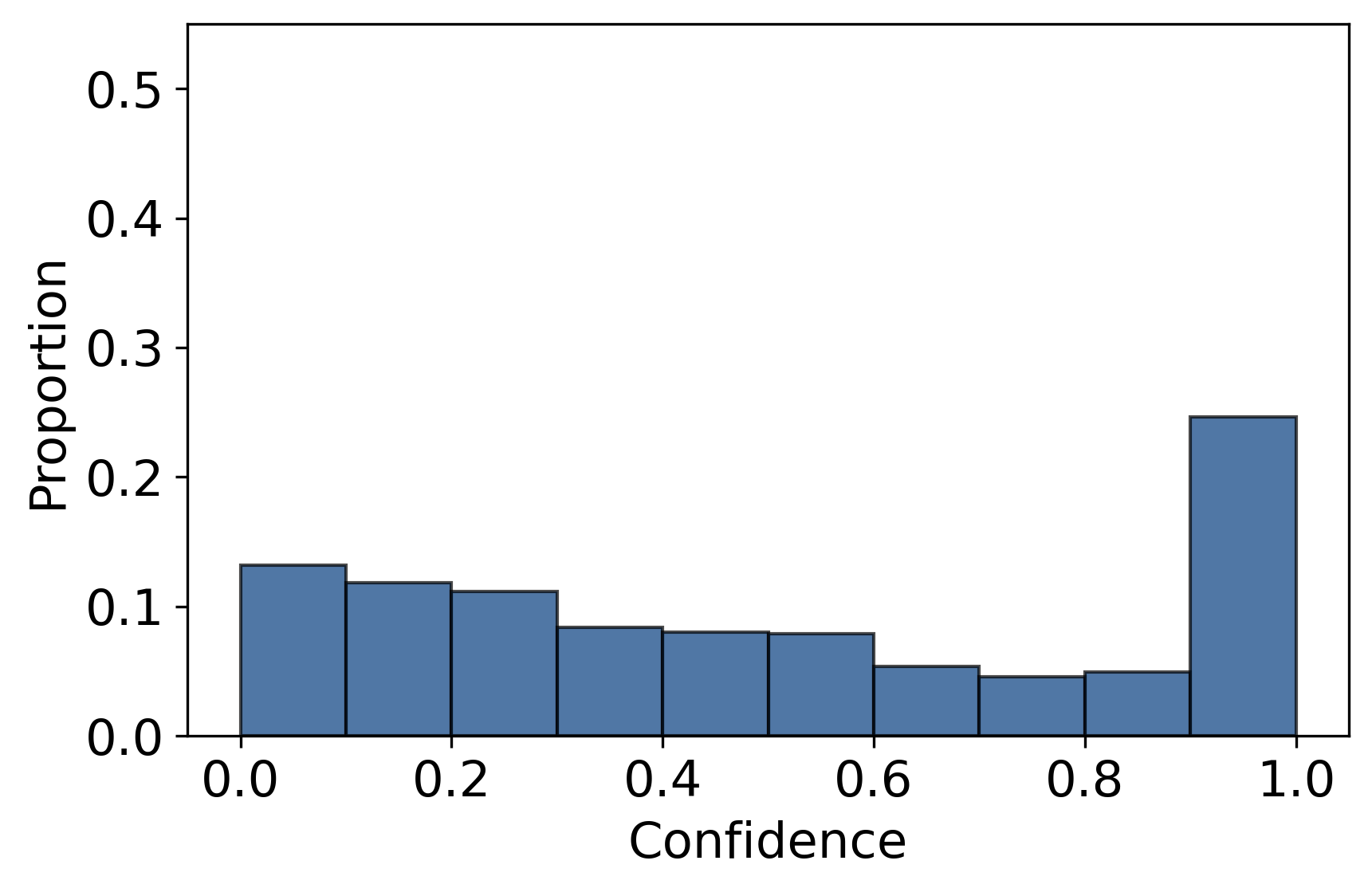}
    \caption{GSM-Plus}
    \label{fig:dataset1_conf_base}
  \end{subfigure}
  \hfill
  \begin{subfigure}{0.32\linewidth}
    \centering
    \includegraphics[height=0.13\textheight,keepaspectratio]{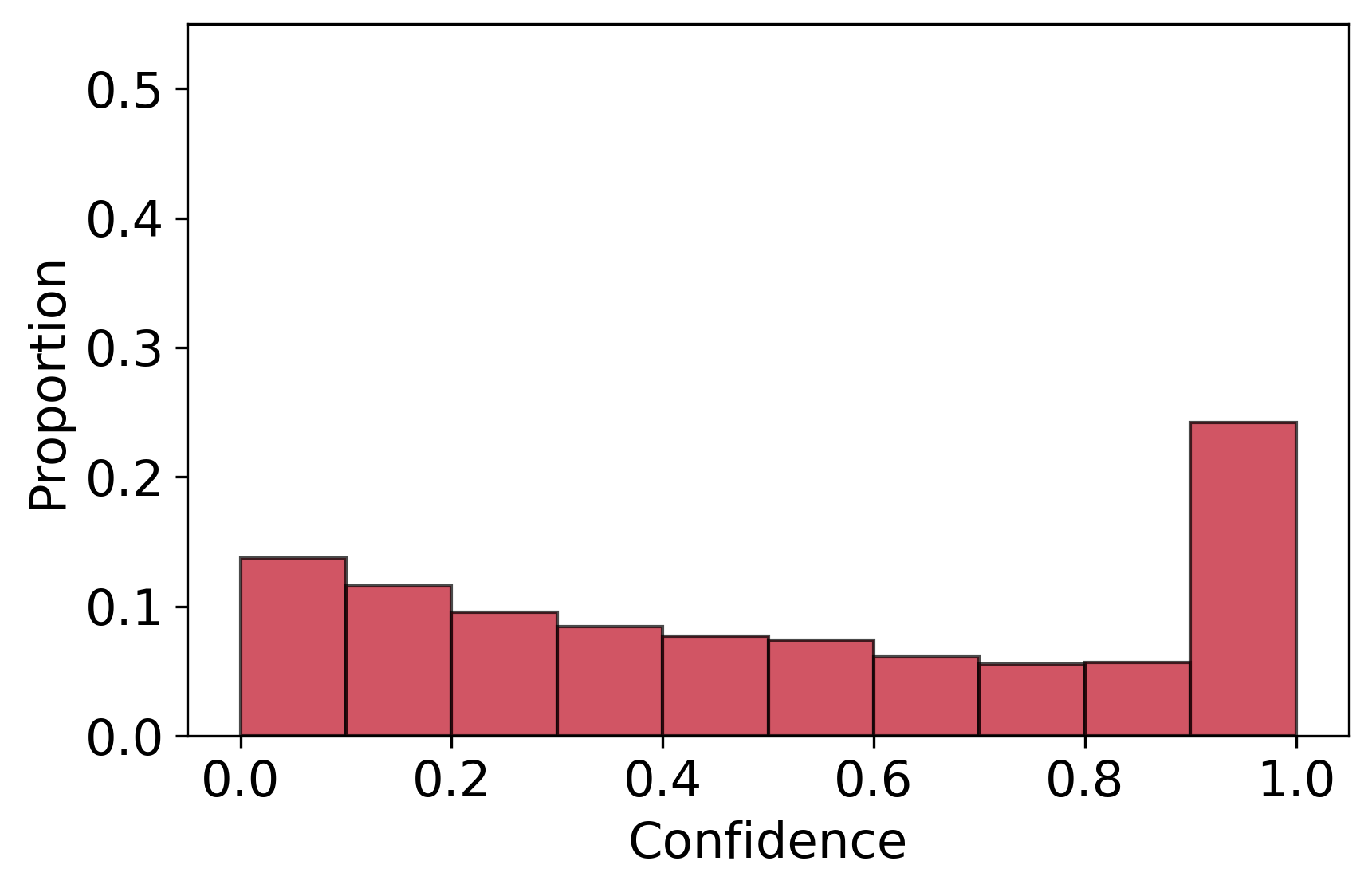}
    \caption{GSM-Plus}
    \label{fig:dataset1_conf_random}
  \end{subfigure}
  \hfill
  \begin{subfigure}{0.32\linewidth}
    \centering
    \includegraphics[height=0.13\textheight,keepaspectratio]{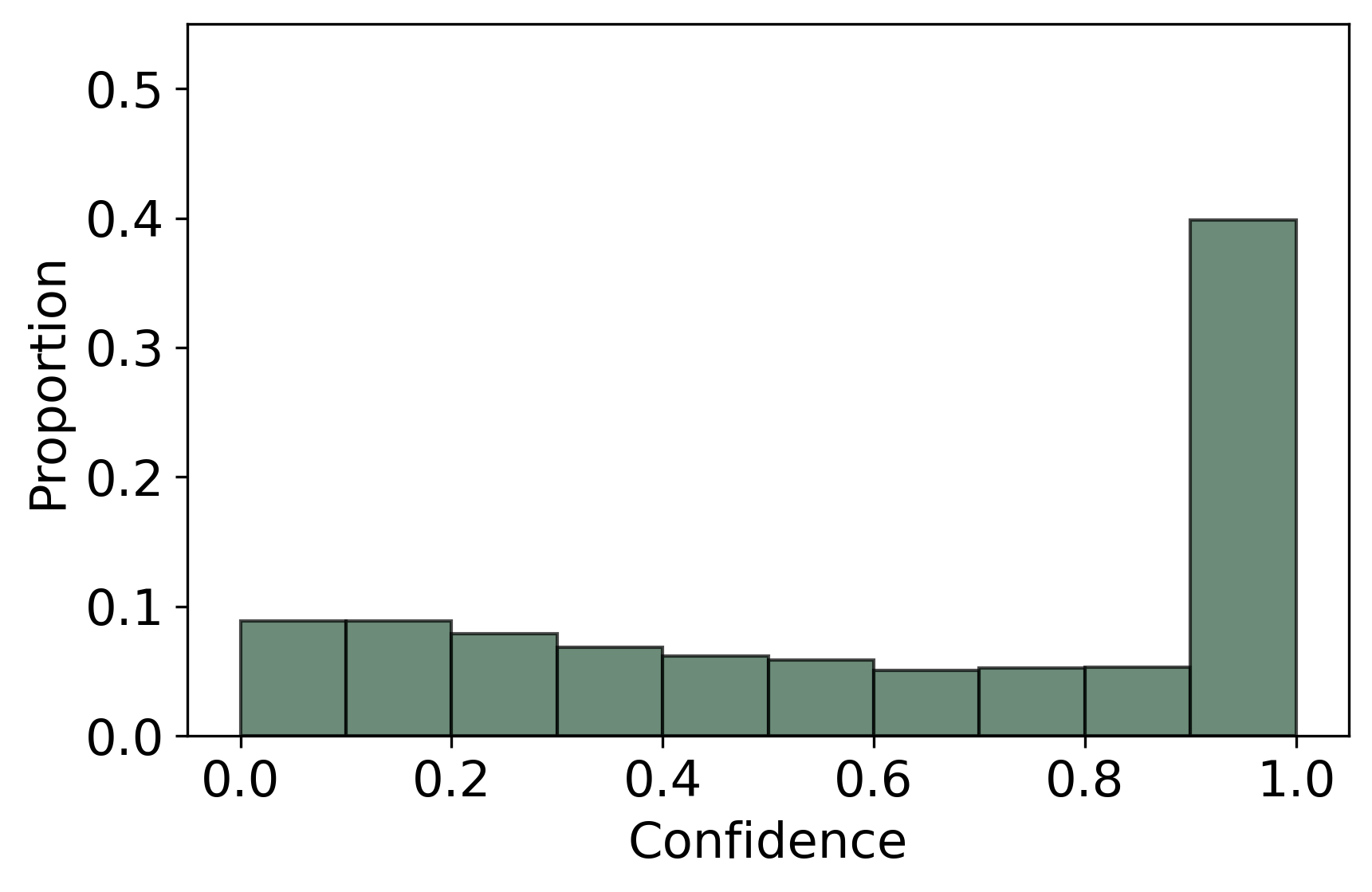}
    \caption{GSM-Plus}
    \label{fig:dataset1_conf_retrieval}
  \end{subfigure}

  \begin{subfigure}{0.32\linewidth}
    \centering
    \includegraphics[height=0.13\textheight,keepaspectratio]{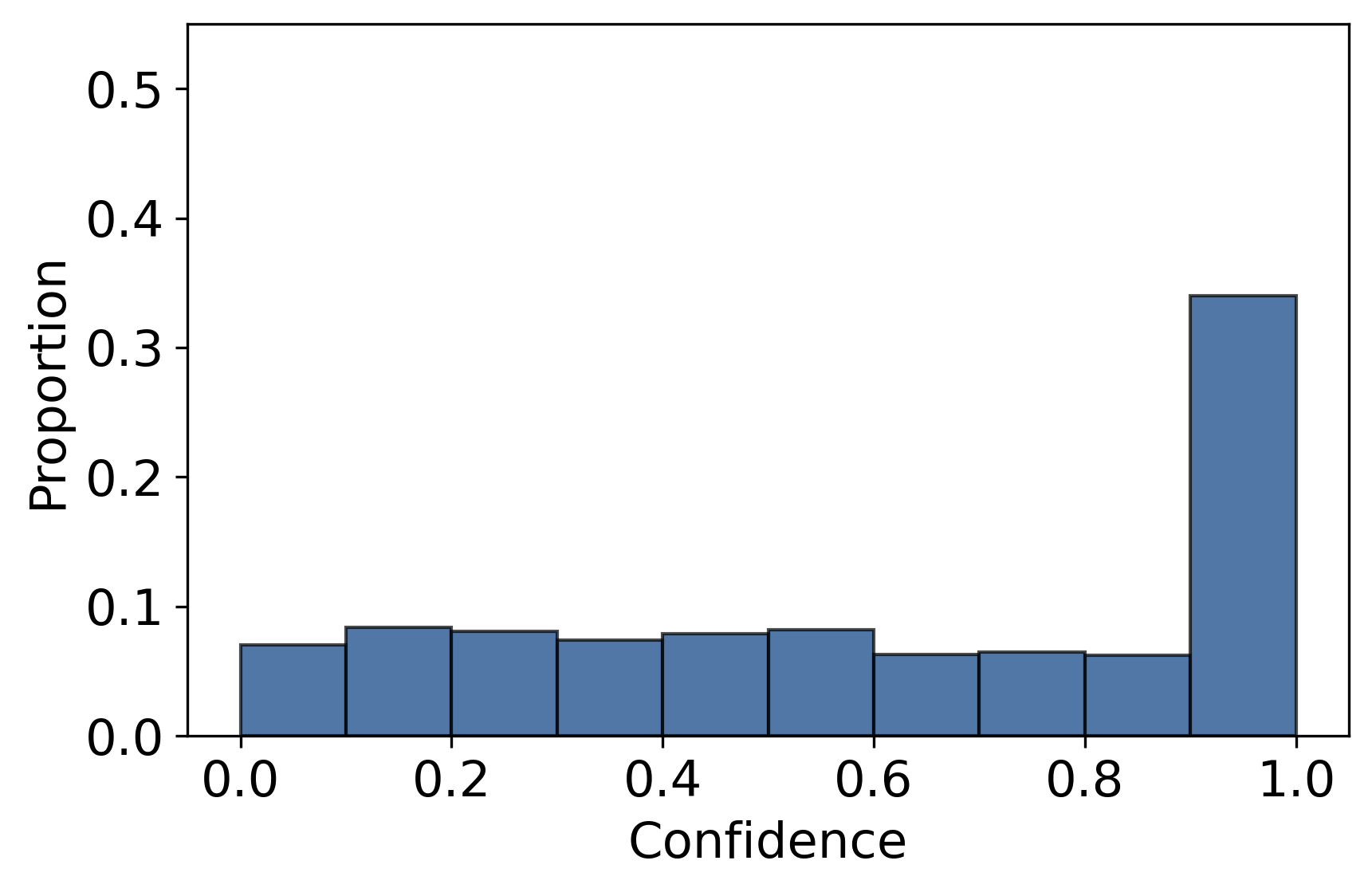}
    \caption{MetaMath}
    \label{fig:dataset2_conf_base}
  \end{subfigure}
  \hfill
  \begin{subfigure}{0.32\linewidth}
    \centering
    \includegraphics[height=0.13\textheight,keepaspectratio]{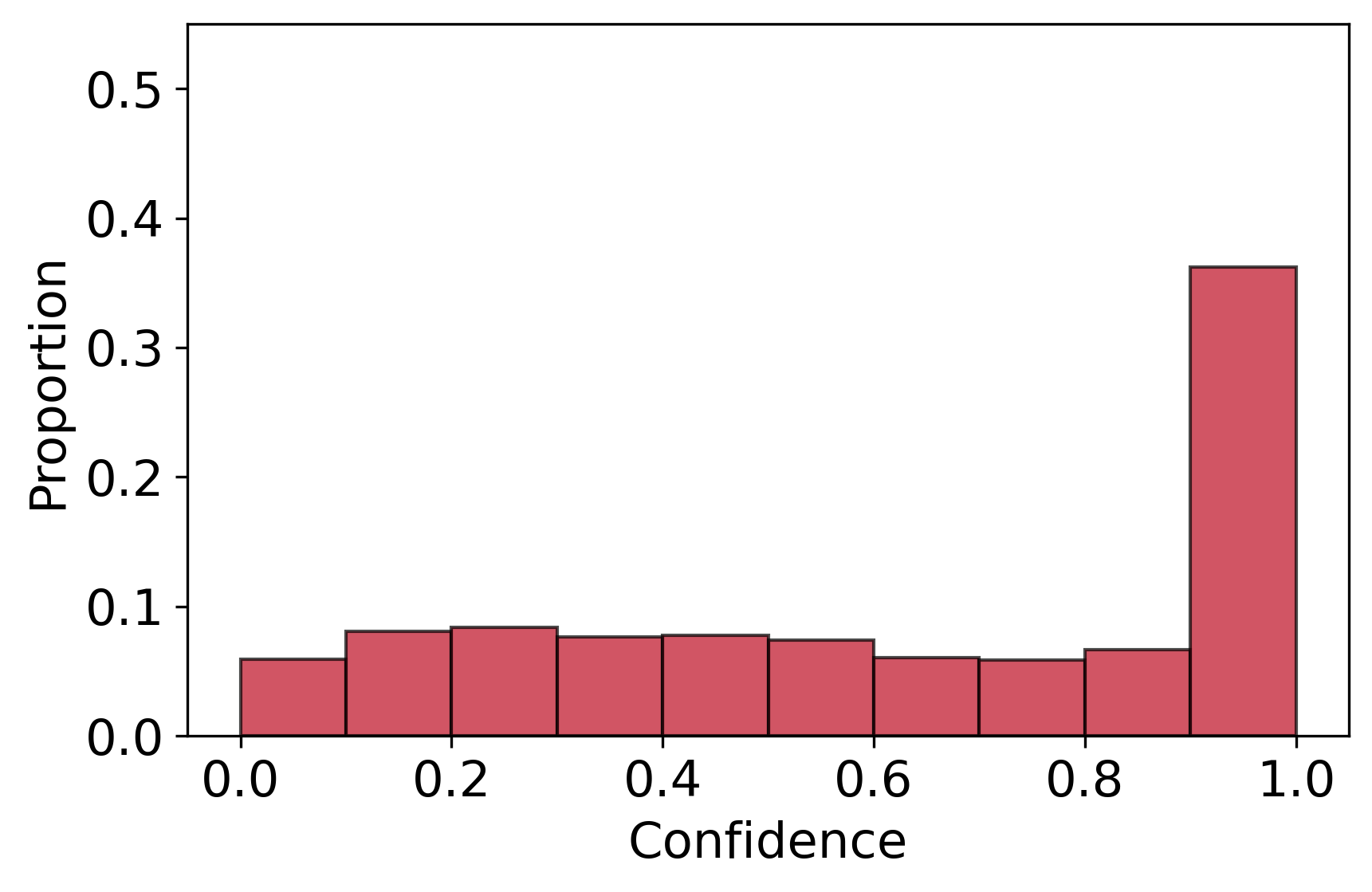}
    \caption{MetaMath}
    \label{fig:dataset2_conf_random}
  \end{subfigure}
  \hfill
  \begin{subfigure}{0.32\linewidth}
    \centering
    \includegraphics[height=0.13\textheight,keepaspectratio]{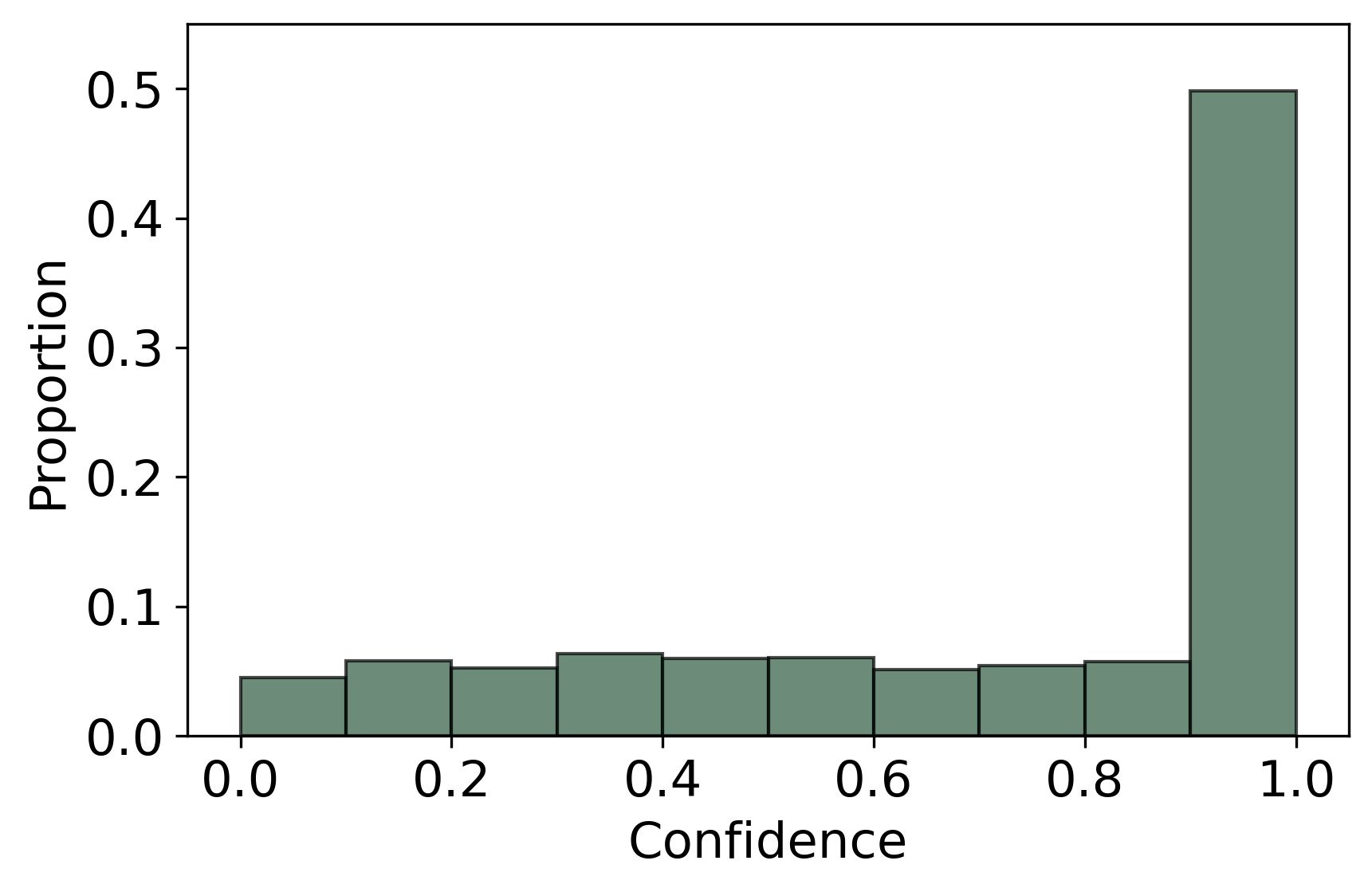}
    \caption{MetaMath}
    \label{fig:dataset2_conf_retrieval}
  \end{subfigure}

  \begin{subfigure}{0.32\linewidth}
    \centering
    \includegraphics[height=0.13\textheight,keepaspectratio]{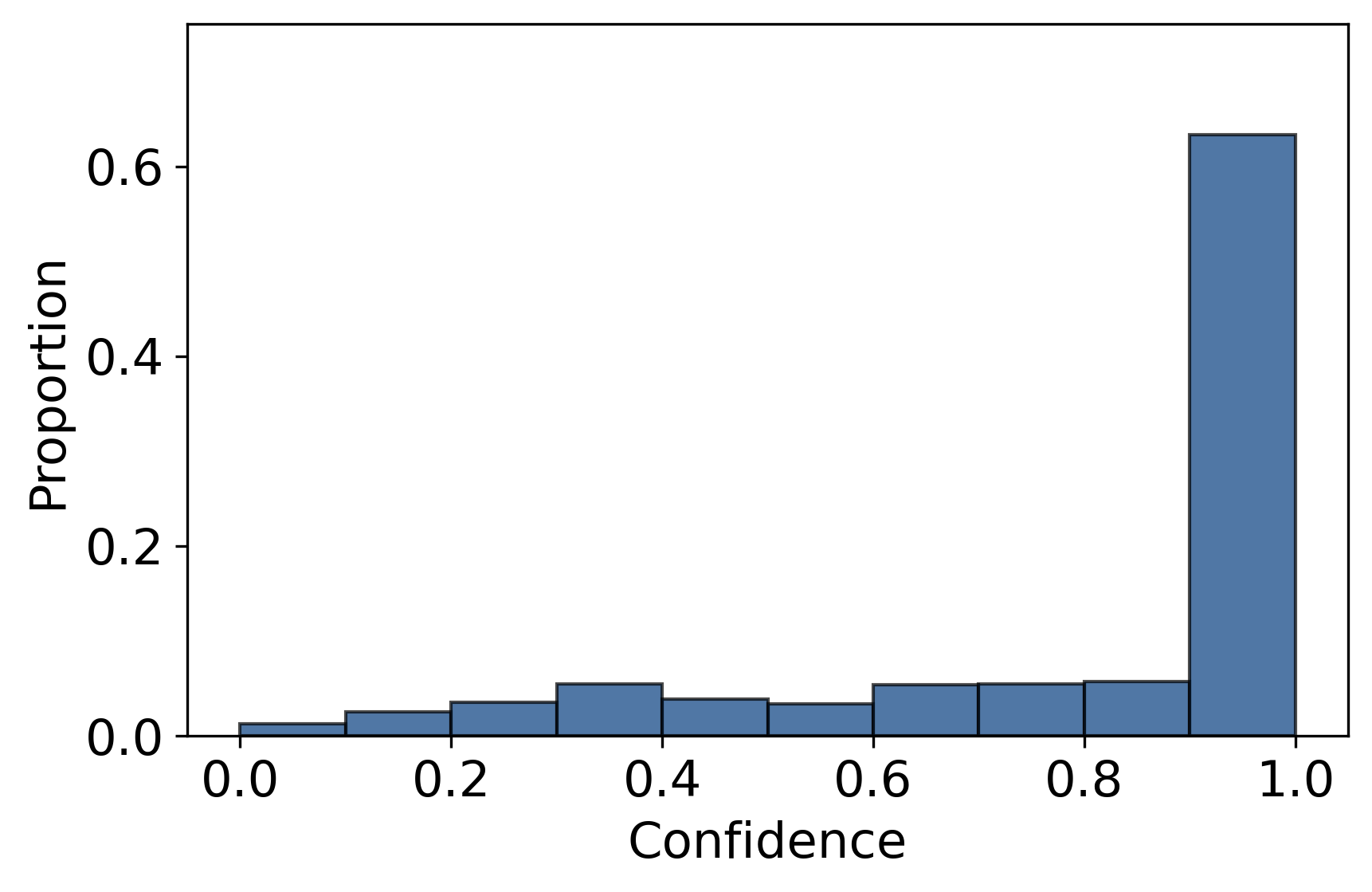}
    \caption{Nasa-History-MCQ}
    \label{fig:dataset3_conf_base}
  \end{subfigure}
  \hfill
  \begin{subfigure}{0.32\linewidth}
    \centering
    \includegraphics[height=0.13\textheight,keepaspectratio]{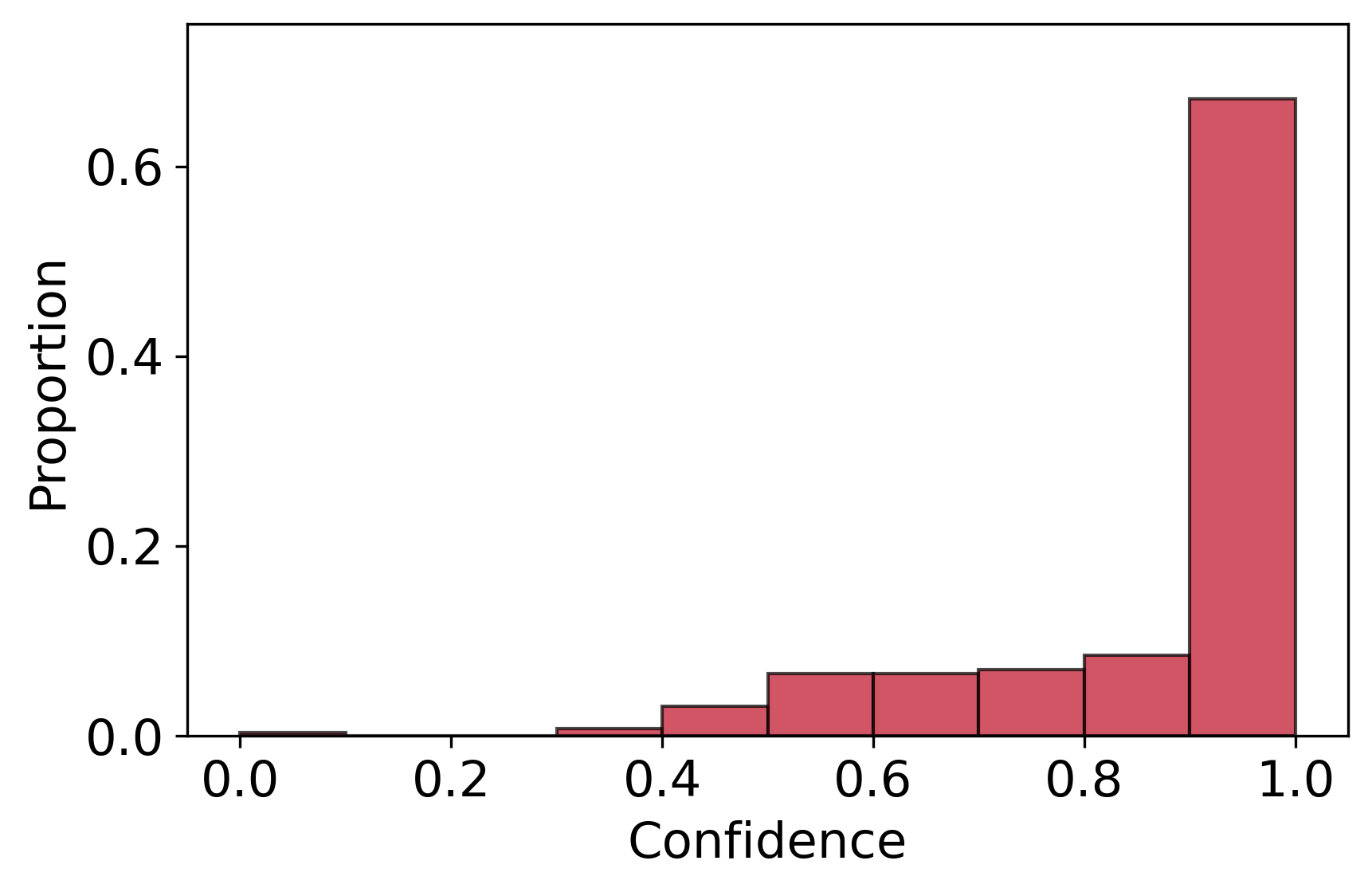}
    \caption{Nasa-History-MCQ}
    \label{fig:dataset3_conf_random}
  \end{subfigure}
  \hfill
  \begin{subfigure}{0.32\linewidth}
    \centering
    \includegraphics[height=0.13\textheight,keepaspectratio]{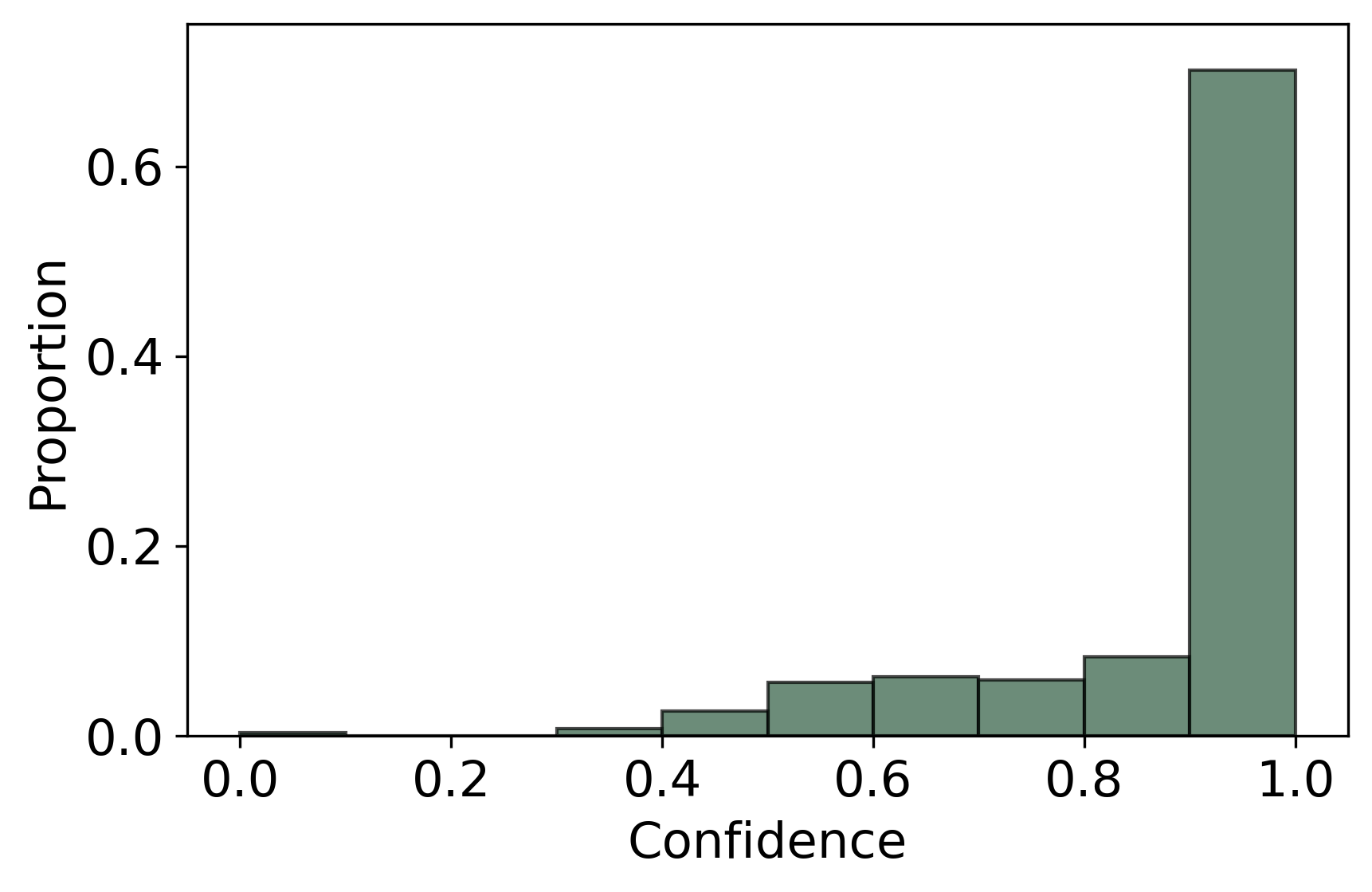}
    \caption{Nasa-History-MCQ}
    \label{fig:dataset3_conf_retrieval}
  \end{subfigure}

  \caption{Confidence histograms for three benchmarks. Columns correspond to (a)(d)(g) the base Weak LLM, (b)(e)(h) the Weak LLM within the Inter-Cascade using random strategies, and (c)(f)(i) the Weak LLM within the Inter-Cascade using retrieval strategies. Across all datasets, the Inter-Cascade with retrieval strategies concentrates probability mass near high confidence (0.9–1.0), while the base and random-strategy variants place more mass at lower confidence levels.}
  \label{fig:confidence_comparison_3x3}
\end{figure}

\section{Confidence Distribution}
\label{appendix:sec_confidence_dist}
Figures~\ref{fig:dataset3_accuracy_vs_confidence} and \ref{fig:confidence_comparison_3x3} present results for the GSM-Plus, MetaMath, and Nasa-History-MCQ datasets, complementing the GSM-Symbolic analyses in the main text. 

Figure~\ref{fig:dataset3_accuracy_vs_confidence} shows accuracy as a function of the confidence threshold for the base Weak LLM and for the Weak LLM within the Inter-Cascade using random and retrieval strategies. For each threshold, only queries with confidence equal to or above the threshold are considered, and accuracy is calculated as the proportion of correct predictions. Across the reasoning datasets (GSM-Plus and MetaMath), the Inter-Cascade with retrieval strategies consistently improves accuracy over the baseline and random-strategy variants. For the factual non-reasoning dataset (Nasa-History-MCQ), the Inter-Cascade achieves comparable performance.

Figure~\ref{fig:confidence_comparison_3x3} depicts the distribution of query confidence for the three benchmarks. Across all datasets, the Inter-Cascade with retrieval strategies concentrates probability mass near high confidence (0.9–1.0), whereas the base and random-strategy variants place more mass at lower confidence levels. These results further confirm that providing strategies helps the Weak LLM not only produce more accurate predictions but also better calibrate its confidence.

\section{Full Description of Benchmarks}\label{appendix:benchmarks}
\textbf{GSM-Symbolic.} 
The GSM-Symbolic benchmark, released by Apple's team~\citep{mirzadeh_gsm-symbolic_2025}, is a structured variant of GSM8K~\citep{cobbe_training_2021-1}. Unlike traditional benchmarks such as GSM8K, which present problems in a plain context, GSM-Symbolic reformulates problems into a more structured and abstract format following a symbolic template, providing a more reliable measure of models' reasoning capabilities. The dataset contains $12,500$ grade-school math problems. We randomly sample $1,250$ problems as the calibration set for threshold computation and use the remaining $11,250$ problems as the test set. The prompt template and an example problem are provided in Appendix~\ref{appendix:prompt_example}.

\textbf{GSM-Plus.} 
GSM-Plus~\citep{li_gsm-plus_2024} is derived from the $1,319$ test questions in GSM8K by introducing eight types of question variations: numerical substitution, digit expansion, integer-decimal-fraction conversion, adding operation, reversing operation, problem understanding, distractor insertion, and critical thinking. GSM-Plus thus comprises a total of $10,552$ question variations. We randomly sample $1,048$ problems as the calibration set for threshold computation and use the remaining $9,504$ problems as the test set. The prompt template and an example problem are provided in Appendix~\ref{appendix:prompt_example}.

\textbf{MetaMath.} 
MetaMath~\citep{yu_metamath_2024} is a dataset generated by bootstrapping the mathematical benchmarks GSM8K~\citep{cobbe_training_2021-1} and MATH~\citep{hendrycks2021measuringmathematicalproblemsolving}. The augmentation is performed in both forward and backward directions. In the forward direction, MetaMath contains the original and LLM-rephrased questions, while in the backward direction, it includes self-verification questions and FOBAR questions~\citep{jiang_forward-backward_2024}, resulting in a total of $395K$ diverse problems. For our experiments, we randomly select $1,000$ problems as the calibration set for threshold computation and use $20,000$ additional problems as the test set. The prompt template and an example problem are provided in Appendix~\ref{appendix:prompt_example}.


\textbf{NASA-History-MCQ.} 
NASA-History-MCQ~\citep{patrickfleith_NASA_History_MCQ_2025} is a multiple-choice question benchmark on the history of NASA. It contains $7.47K$ questions, and each question provides four answer choices. We randomly sample $1,000$ problems as the calibration set for threshold computation and use the remaining $6,469$ problems as the test set. The prompt template and an example problem are provided in Appendix~\ref{appendix:prompt_example}.

\textbf{BarExamQA.} 
BarExamQA~\citep{zhong_bar_exam_qa_2024} is a legal reasoning benchmark constructed from real U.S. bar examination questions. Each question is posed in a multiple-choice format and requires multi-step legal reasoning over complex legal fact patterns. BarexamQA contains a total of $954$ problems, we randomly sample $95$ problems as the calibration set for threshold computation and remaining $859$ as the test set.

\textbf{BigBench Hard.} 
BIG-Bench Hard~\citep{Suzgun2022ChallengingBT} is a subset of 23 particularly challenging BIG-Bench tasks for which no prior result from~\citep{srivastava_bigbench_2022} has outperformed the average human-rater score. It is a diverse benchmark designed to test capabilities of language models on a diverse set of crowd-sourced tasks. The benchmark aims to focus on the problems that beyond the capabilities of existing LLMs. We use $5412$ problems as test set and $599$ problems as calibration set for threshold computation. The calibration set are selected from each tasks with the same proportion. 

\textbf{GSM8K.} 
GSM8K~\citep{cobbe_training_2021-1} is a widely used grade-school math word problem benchmark designed to evaluate multi-step numerical reasoning. The dataset contains $7473$ training questions and $1719$ test questions, with each problem requiring several arithmetic operations and logical reasoning steps to reach the final answer. Following standard practice, we use problems in calibration set for threshold computation and use the remaining problems as the test set. 

\textbf{MedMCQA.} 
MedMCQA~\citep{pal_medmcqa_2022} is a large-scale multiple-choice question benchmark in the medical domain. It covers high-quality AIIMS and NEET PG entrance exam MCQs covering $2400$ healthcare topics and $21$ medical subjects. It contains over $194,000$ questions, each with four answer choices and a single correct answer. We randomly sample $2,000$ problems as the calibration set for threshold computation and use $8000$ additional problems as the test set. 

\section{Full Description of Token and API Cost Analysis}\label{app:full cost}
The full analysis on the token consumption, including input tokens and output token for the four benchmarks presented in main text is shown in Table~\ref{tab:full-cost}.

\begin{table}[htbp]
\centering
\caption{Token and API cost changes across datasets for Inter-Cascade compared with Jung's pipeline.}
\label{tab:full-cost}

\begingroup
\setlength{\tabcolsep}{3.5pt} 
\renewcommand{\arraystretch}{0.9} 
\setlength{\aboverulesep}{0.2ex} 
\setlength{\belowrulesep}{0.2ex}
\setlength{\cmidrulekern}{0.2em}

\begin{tabular}{lccccccc}
\toprule
\multirow{2}{*}{\textbf{Benchmark}} & 
\multicolumn{3}{c}{\textbf{Weak LLM Tokens}} & 
\multicolumn{3}{c}{\textbf{Strong LLM Tokens}} & 
\multirow{2}{*}{\shortstack{\textbf{Token}\\\textbf{Price}}} \\

\cmidrule(lr){2-4} \cmidrule(lr){5-7} 

 & Total & Input & Output & Total & Input & Output & \\
\midrule


GSM-Symb. & +147.66\% & +148.80\% & -17.10\% & -47.80\% & -45.80\% & -51.32\% & -49.63\% \\
\midrule

GSM-Plus & +145.96\% & +147.11\% & -3.56\% & -29.95\% & -29.51\% & -30.90\% & -30.41\% \\
\midrule

Meta.(20K) & +127.90\% & +128.66\% & -1.38\% & -52.18\% & -52.20\% & -52.12\% & -52.15\% \\
\midrule

NASA-Hist. & +132.58\% & +133.40\% & 0.99\% & -15.47\% & -15.22\% & -16.07\% & -15.75\% \\

\bottomrule
\end{tabular}
\endgroup
\end{table}

\section{Extensive Experiment on More Benchmarks}\label{app:extensive}

Although the Inter-Cascade diagram is motivated by the real-world scenarios that contain similar or repeated tasks, we also provide the result of our Inter-Cascade on extensive benchmarks that are more diverse and do not contain explicit sample variants: GSM8K~\citep{cobbe_training_2021}, BigBench Hard~\citep{Suzgun2022ChallengingBT}, BarExamQA~\citep{zhong_bar_exam_qa_2024} and MedMCQA~\citep{pal_medmcqa_2022}. The full description of those benchmarks are in Appendix~\ref{appendix:benchmarks}. We firstly test the accuracy of each single LLM on those benchmarks and the result is in Table~\ref{tab:single_llm_extra_benchmarks}.

\textbf{Inter-Cascade vs. Jung's LLM Cascade.} We evaluate our \textit{Inter-Cascade} pipeline and Jung's method, as shown in Table~\ref{tab:overall-per-dataset-tight-extensive}. Our method outperforms Jung's, with a $0.18\% - 3.96\%$ increase in Pipeline Accuracy. The Strong LLM Call Rate is reduced on all benchmarks, with reductions ranging from $1.52\%$ to $16.14\%$. Compared with the results on GSM-Symbolic, GSM-Plus and MetaMath benchmarks, the accuracy improvement is not that large, but the more important part is that our Inter-Cascade can still reach a better trade-off between accuracy and cost since our method still remarkablely reduce the usage of Strong LLM.  These results indicate that \textit{Inter-Cascade} pipeline is also beneficial across different categories of tasks on diverse benchmarks.

\textbf{Impact of Inter-Cascade on Weak LLM.} Having examined the overall pipeline improvements, including Pipeline Accuracy and Strong LLM Call Rate reduction, we now investigate how our proposed \textit{Inter-Cascade} affects the Weak LLM. As shown in Table~\ref{tab:weakllm-per-dataset-tight-extenvis}, our Weak LLM still outperforms the Weak LLM in the other pipeline across all benchmarks. The improvements on \emph{Weak Accuracy} are between $0.91\%$ and $9.56\%$ and the improvements on \emph{Weak Correct Accepted} are between $2.24\%$ and $15.56\%$. The results implies that even though we test our Inter-Cascade on diverse benchmarks, retrieving most similar problems and solution strategies can still help boosting the performance and confidence of Weak LLM.

According to experiment results for extensive benchmarks, it shows that Inter-Cascade not only work for tasks that contain constructive similarity, but also help in more general and diverse cases, since explicit or implicit similarity occurs everywhere and the pipeline in our Inter-Cascade take the advantage of the similarity nature of daily tasks.

\textbf{Token and API Cost Savings.} The results of analysis on cost and latency for extensive benchmarks are attached in Table~~\ref{tab:cost-extensive} and Table~\ref{tab:efficiency_analysis_moved-extensive}. The tendency is similar: integrating with strategies, the token usages on Weak LLM increase between $115.89\%$ and $216.37\%$, but since the \emph{Strong Call} decrease on all benchmark, the token usages on Strong LLM decrease between $1.28\%$ and $83.17\%$ and therefore, we can save $2.33\%$-$83.94\%$ money on API price. On the other hand, the average latency change on each query is between $0.005$s and $0.374$s on different benchmarks, which is acceptable to the user experience.

\begin{table}[ht]
\footnotesize
\caption{
Accuracies of the base LLMs on extensive benchmarks}
\label{tab:single_llm_extra_benchmarks}
\centering
\setlength{\tabcolsep}{4pt}
\begin{tabular}{llr|llr}

\hline
Dataset & LLM & Accuracy& Dataset & LLM & Accuracy\\
\hline
\multirow{2}{*}{GSM8K} & gpt-3.5-turbo & 31.46\% & \multirow{2}{*}{BigBench}   & gpt-3.5-turbo & 49.75\%\\
             & gemini-2.0-flash & 74.83\% &              & gemini-2.0-flash & 78.80\%\\
\cline{1-3}\cline{4-6}
\multirow{2}{*}{BarExamQA}     & gpt-3.5-turbo & 48.42\%  & \multirow{2}{*}{MedMCQA} & gpt-3.5-turbo & 62.80\%\\
             & gemini-2.0-flash & 78.95\% &              & gemini-2.0-flash & 83.05\%\\
\hline
\end{tabular}
\end{table}

\begin{table}[t]
\centering
\caption{Results across extensive datasets using different pipelines. ``Jung'' denotes Jung's LLM-Cascade and ``Our (Retrieval)'' denotes the Inter-Cascade with similarity-based retrieval. The number of strategies is fixed at $k=2$ for both Inter-Cascade settings. Metrics reported are Pipeline Accuracy (Pipeline Acc.), Strong LLM Call Rate (Strong Call), and Coverage Rate (Cov.). (a) GSM8K: For the Strong LLM, $\alpha_s = 0.2, \delta_s = 0.8, \lambda_s = 0.44$. For the Weak LLM, $\alpha_w = 0.5, \delta_w = 0.5, \lambda_w = 0.49$. (b) BigBench: No threshold is applied for the Strong LLM. For the Weak LLM, $\alpha_w = 0.4, \delta_w = 0.6, \lambda_w = 0.61$. (c) BarExamQA: No threshold is applied for the Strong LLM. For the Weak LLM, $\alpha_w = 0.5, \delta_w = 0.5, \lambda_w = 0.51$. (d) MedMCQA: No threshold is applied for the Strong LLM. For the Weak LLM, $\alpha_w = 0.3, \delta_w = 0.8, \lambda_w = 0.69$.}
\label{tab:overall-per-dataset-tight-extensive}


\begingroup
\footnotesize  
\setlength{\tabcolsep}{5pt} 
\renewcommand{\arraystretch}{1.0} 
\setlength{\aboverulesep}{0.3ex}
\setlength{\belowrulesep}{0.3ex}
\setlength{\cmidrulekern}{0.3em}

\begin{tabular}{@{} l l c c c @{}}
\toprule
\textbf{Data} & \textbf{Pipeline} & \textbf{Pipeline Acc. (\%)\,\boldmath$\uparrow$} &
\textbf{Strong Call (\%)\,\boldmath$\downarrow$} & \textbf{Cov. (\%)} \\
\midrule
\multirow{2}{*}{GSM8K}
& Jung            & 59.02 & 37.03 & 95.95 \\
& Our (Retrieval) & \textbf{60.62} & \textbf{35.46} & \textbf{96.05} \\
\midrule
\multirow{2}{*}{BigBench}
& Jung            & 64.14 & 33.04 & 100.00 \\
& Our (Retrieval) & \textbf{64.32} & \textbf{23.84} & 100.00 \\
\midrule
\multirow{2}{*}{BarExamQA}
& Jung            & 57.39 & 23.17 & 100.00 \\
& Our (Retrieval) & \textbf{58.67} & \textbf{21.65} & 100.00 \\
\midrule
\multirow{2}{*}{MedMCQA}
& Jung            & 71.69 & 18.74 & 100.00 \\
& Our (Retrieval) & \textbf{75.65} & \textbf{2.60} & 100.00 \\
\bottomrule
\end{tabular}
\endgroup
\end{table}

\begin{table}[t]
\centering
\caption{Results on Weak LLM across extensive datasets. Reported metrics are Weak LLM Accuracy (Weak Acc.) and Weak Correct Accepted (Weak Corr. Accpt.). Parameter settings are the same as in Table~\ref{tab:overall-per-dataset-tight-extensive}.}
\label{tab:weakllm-per-dataset-tight-extenvis}

\begingroup

\footnotesize  
\setlength{\tabcolsep}{5pt} 
\renewcommand{\arraystretch}{1.0} 
\setlength{\aboverulesep}{0.3ex}
\setlength{\belowrulesep}{0.3ex}
\setlength{\cmidrulekern}{0.3em}

\begin{tabular}{@{} l l c c @{}}
\toprule
\textbf{Data} & \textbf{Pipeline} & \textbf{Weak Acc. (\%)\,\boldmath$\uparrow$} & \textbf{Weak Corr. Accpt. (\%)\,\boldmath$\uparrow$} \\
\midrule
\multirow{2}{*}{GSM8K}
& Jung            & 37.06 & 33.38 \\
& Our (Retrieval) & \textbf{39.30} & \textbf{35.62} \\
\midrule
\multirow{2}{*}{BigBench}
& Jung            & 49.02 & 39.34 \\
& Our (Retrieval) & \textbf{49.93} & \textbf{46.60} \\
\midrule
\multirow{2}{*}{BarExamQA}
& Jung            & 47.50 & 39.81 \\
& Our (Retrieval) & \textbf{51.22} & \textbf{43.31} \\
\midrule
\multirow{2}{*}{MedMCQA}
& Jung            & 64.95 & 58.16 \\
& Our (Retrieval) & \textbf{74.51} & \textbf{73.72} \\
\bottomrule
\end{tabular}
\endgroup
\end{table}

\begin{table}[htbp]
\centering
\caption{Token and API cost changes across extensive datasets for Inter-Cascade compared with Jung's pipeline.}
\label{tab:cost-extensive}

\begingroup
\setlength{\tabcolsep}{3.5pt} 
\renewcommand{\arraystretch}{0.9} 
\setlength{\aboverulesep}{0.2ex} 
\setlength{\belowrulesep}{0.2ex}
\setlength{\cmidrulekern}{0.2em}

\begin{tabular}{lccccccc}
\toprule
\multirow{2}{*}{\textbf{Benchmark}} & 
\multicolumn{3}{c}{\textbf{Weak LLM Tokens}} & 
\multicolumn{3}{c}{\textbf{Strong LLM Tokens}} & 
\multirow{2}{*}{\shortstack{\textbf{Token}\\\textbf{Price}}} \\

\cmidrule(lr){2-4} \cmidrule(lr){5-7} 

 & Total & Input & Output & Total & Input & Output & \\
\midrule


GSM8K & +115.89\% & +116.56\% & -2.27\%& -3.25\% & -4.10\% & -1.28\%  & -2.33\% \\
\midrule

BigBench & +134.53\% & +135.32\% & -5.47\% & -26.37\% & -30.90\% & -19.67\% & -22.70\% \\
\midrule

BarExamQA & +216.37\% & +216.90\% & +0.12\% & -5.70\% & -5.39\% & -6.28\% & -5.98\% \\
\midrule

MedMCQA & +129.64\% & +130.70\% & -0.16\% & -84.74\% & -85.58\% & -83.17\% & -83.94\% \\

\bottomrule
\end{tabular}
\endgroup
\end{table}

\begin{table}[htbp]
\centering
\caption{Processing Latency and Strategy Repository Size across extensive datasets. Retrieval refers to the time spent on strategies matching and ranking. Generation refers to time spent on generating answer via API.}
\label{tab:efficiency_analysis_moved-extensive}

\begingroup
\setlength{\tabcolsep}{3.5pt} 
\renewcommand{\arraystretch}{0.9} 
\setlength{\aboverulesep}{0.2ex} 
\setlength{\belowrulesep}{0.2ex}
\setlength{\cmidrulekern}{0.2em}

\begin{tabular}{lcccccc}
\toprule
\multirow{2}{*}{\textbf{Benchmark}} & 
\multirow{2}{*}{\shortstack{\textbf{Tested}\\\textbf{Samples}}} & 
\multicolumn{3}{c}{\textbf{Our}} & 
\multicolumn{1}{c}{\textbf{Jung}} & 
\multirow{2}{*}{\shortstack{\textbf{Repository}\\\textbf{Size}}} \\

\cmidrule(lr){3-5} \cmidrule(lr){6-6}

 & & Total & Retrieval & Generation & Total & \\
\midrule


GSM8K & 7473 & 1.344s & 0.005s & 1.339s & 1.216s & 6.3MB \\
\midrule

BigBench & 5412 & 1.456s & 0.004s & 1.452s & 1.227s & 3.4MB \\
\midrule

BarExamQA & 859 & 1.686s & 0.254s & 1.432s & 1.312s & 1.1MB \\
\midrule

MedMCQA & 8000 & 0.975s & 0.004s & 0.971s & 0.970s & 6.3MB \\

\bottomrule
\end{tabular}
\endgroup
\end{table}

\section{Extra Ablation Study}\label{app:ablation}
To better evaluate the performance and generalization capacity of Inter-Cascade, we set up extra ablation studies in this section.
\subsection{Cold start}
To evaluate the effect of cold start of our strategy repository, we measure the dynamic pipeline accuracy for both Jung's method and our standard Inter-Cascade on GSM-Symbolic. The result in Figure~\ref{fig:cold_start} shows that at early stage, the pipeline accuracy for our Inter-Cascade is much close to baseline method: Jung~\citep{jung2024trustescalatellmjudges}. However, as the size of stored strategies increase, the performance of Inter-Cascade increase and gradually exceed Jung's method and eventually converges.
\begin{figure}[htbp]
    \centering
    \includegraphics[width=0.5\textwidth]{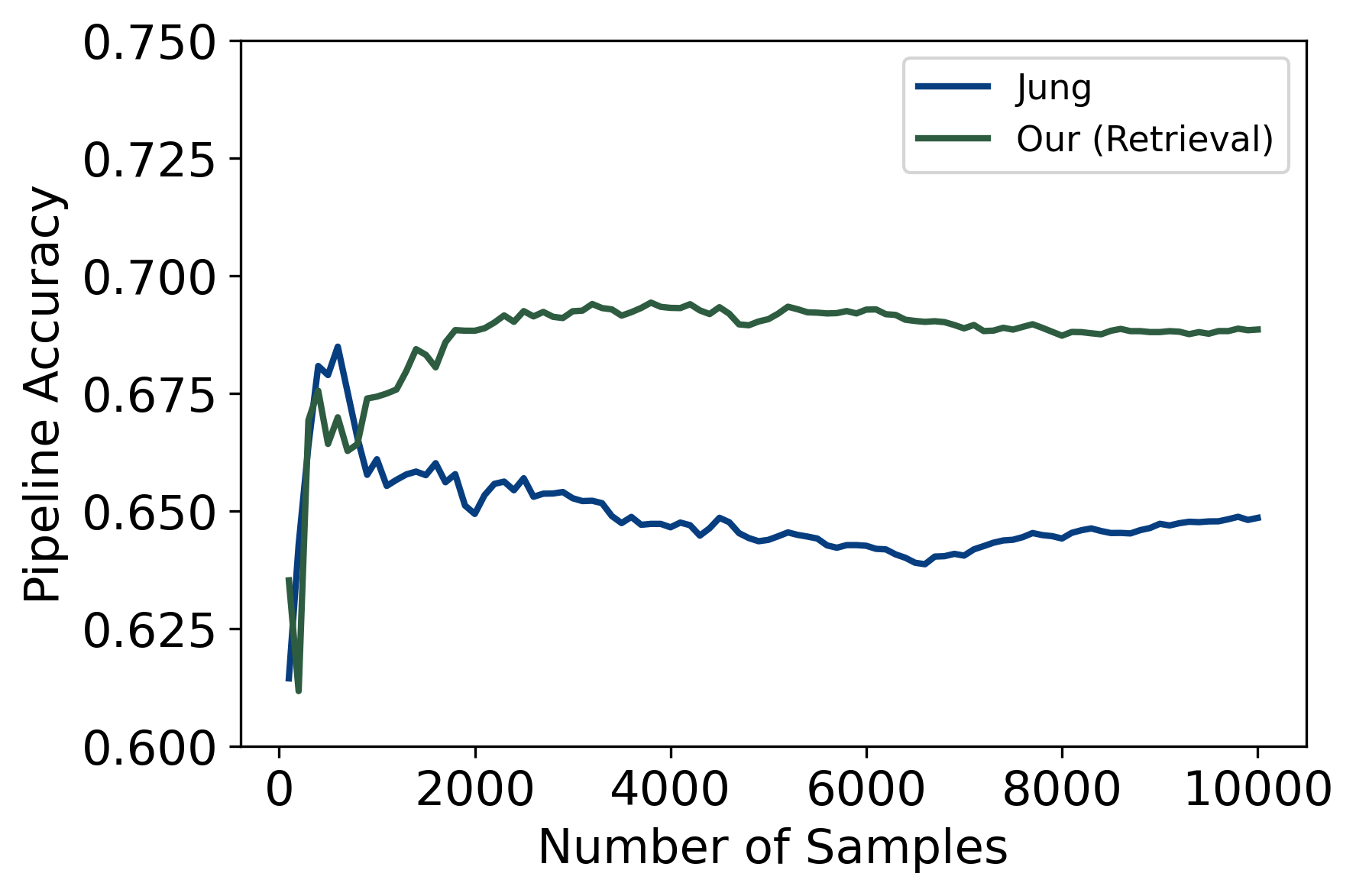}
    \caption{The dynamic of pipeline accuracy for both Jung's method and our standard Inter-Cascade on GSM-Symbolic.}
    \label{fig:cold_start}
\end{figure}

\subsection{Effect of Strategies Number}
To evaluate the effect the number of strategies we matched for each queries, we test the pipeline accuracy with different number of strategies that used for integrating with the input of Weak LLM. The result in Figure~\ref{fig:top-k} shows that the trend of pipeline accuracy is increasing first, reaching peak and then decreasing along with the number of strategies. The result makes sense because too few strategies might not retrieve the best strategy in repository, while too many strategies might distract the answer from certain query question, furthermore, there is a chance that the longer contexts may exceed the the maximum limit of the input context window. Both factors might undermine the performance of the pipeline accuracy. In our experiment on GSM-Symbolic benchmark, the empirical best number of strategies $k$ is 2.
\begin{figure}[htbp]
    \centering
    \includegraphics[width=0.5\textwidth]{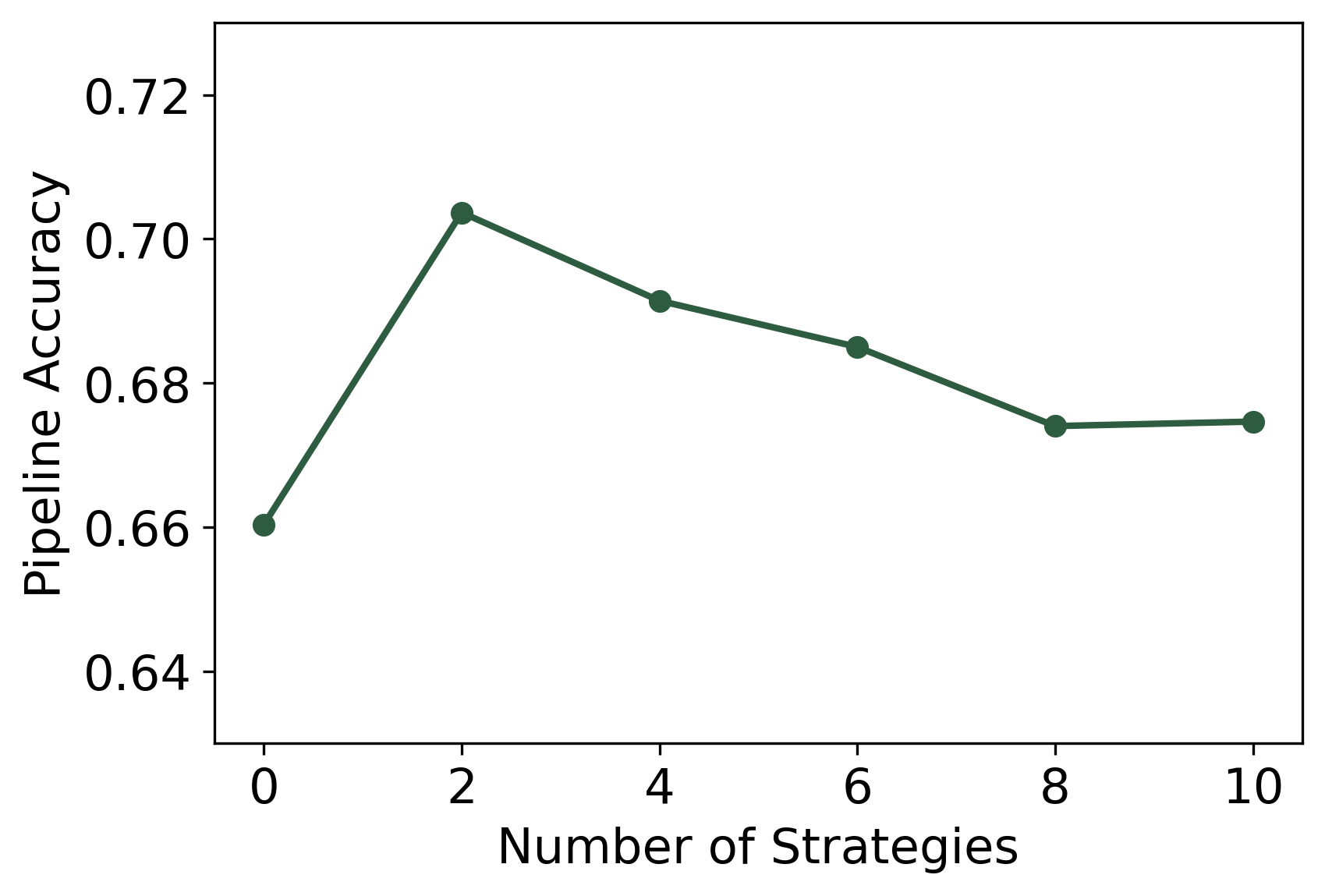}
    \caption{Effect of number of Strategies on pipeline accuracy for GSM-Symbolic Benchmark}
    \label{fig:top-k}
\end{figure}

\subsection{Results on New LLM Pairs}
To show that our Inter-Cascade is a framework that work general multiple LLM collaboration systems, we also test the result on different choice of Weak LLM and Strong LLM. We switch our Weak LLM to Gemini-2.0-flash and switch our Strong LLM to Gemini-2.5-flash. The results on single LLM are in Table~\ref{tab:single_llm_benchmarks_new_pair}. We also analyze the performance on those metrics: Pipeline Accuracy, Strong Call Rate, Weak Accuracy and Weak Correct Accept in Table~\ref{tab:overall-per-dataset-tight-new-pair} and Table~\ref{tab:weakllm-per-dataset-tight-new-pair}. The results shows that although we test on different pairs of Weak LLM and Strong LLM, the trend doesn't change: Inter-Cascade would help improve the accuracy of Weak LLM, pipeline accuracy, reduce the the usage of Strong LLM, reaching a better trade-off between the Accuracy and Cost in LLM Cascade systems.
\begin{table}[ht]
\footnotesize
\caption{
Accuracies of new pair of base LLMs on GSM-Symbolic Benchmark}
\label{tab:single_llm_benchmarks_new_pair}
\centering
\setlength{\tabcolsep}{4pt}
\begin{tabular}{llr}

\hline
Dataset & LLM & Accuracy\\
\hline
\multirow{2}{*}{GSM-Symbolic} & gemini-2.0-flash & 69.36\% \\
             & gemini-2.5-flash & 89.28\% \\

\hline
\end{tabular}
\end{table}

\begin{table}[t]
\centering
\caption{New LLM Pairs (Weak LLM: Gemini-2.0-flash; Strong LLM: Gemini-2.5-flash) Results on GSM-Symbolic dataset using different pipelines. ``Jung'' denotes Jung's LLM-Cascade and ``Our (Retrieval)'' denotes the Inter-Cascade with similarity-based retrieval. The number of strategies is fixed at $k=2$ for both Inter-Cascade settings. Metrics reported are Pipeline Accuracy (Pipeline Acc.), Strong LLM Call Rate (Strong Call), and Coverage Rate (Cov.). GSM-Symbolic: No threshold is applied for the Strong LLM. For the Weak LLM, $\alpha_w = 0.2, \delta_w = 0.8, \lambda_w = 0.47$. }
\label{tab:overall-per-dataset-tight-new-pair}


\begingroup
\footnotesize  
\setlength{\tabcolsep}{5pt} 
\renewcommand{\arraystretch}{1.0} 
\setlength{\aboverulesep}{0.3ex}
\setlength{\belowrulesep}{0.3ex}
\setlength{\cmidrulekern}{0.3em}

\begin{tabular}{@{} l l c c c @{}}
\toprule
\textbf{Data} & \textbf{Pipeline} & \textbf{Pipeline Acc. (\%)\,\boldmath$\uparrow$} &
\textbf{Strong Call (\%)\,\boldmath$\downarrow$} & \textbf{Cov. (\%)} \\
\midrule
\multirow{2}{*}{GSM-Symbolic}
& Jung            & 79.10 & 19.10 & 100.00 \\
& Our (Retrieval) & \textbf{85.50} & \textbf{9.90} & 100.00 \\

\bottomrule
\end{tabular}
\endgroup
\end{table}

\begin{table}[t]
\centering
\caption{New LLM Pairs (Weak LLM: Gemini-2.0-flash; Strong LLM: Gemini-2.5-flash) Results on Weak LLM across GSM-Symbolic dataset. Reported metrics are Weak LLM Accuracy (Weak Acc.) and Weak Correct Accepted (Weak Corr. Accpt.). Parameter settings are the same as in Table~\ref{tab:overall-per-dataset-tight-new-pair}.}
\label{tab:weakllm-per-dataset-tight-new-pair}

\begingroup

\footnotesize  
\setlength{\tabcolsep}{5pt} 
\renewcommand{\arraystretch}{1.0} 
\setlength{\aboverulesep}{0.3ex}
\setlength{\belowrulesep}{0.3ex}
\setlength{\cmidrulekern}{0.3em}

\begin{tabular}{@{} l l c c @{}}
\toprule
\textbf{Data} & \textbf{Pipeline} & \textbf{Weak Acc. (\%)\,\boldmath$\uparrow$} & \textbf{Weak Corr. Accpt. (\%)\,\boldmath$\uparrow$} \\
\midrule
\multirow{2}{*}{GSM-Symbolic}
& Jung            & 64.20 & 63.40 \\
& Our (Retrieval) & \textbf{77.00} & \textbf{76.80} \\
\bottomrule
\end{tabular}
\endgroup
\end{table}

\section{Prompt Templates and Examples}
\label{appendix:prompt_example}

Table~\ref{table:zero_shot_math} and Table~\ref{table:zero_shot_history} present the strategy-free prompt templates for the four datasets, along with one example question per dataset. Table~\ref{table:context_gsm_symbolic} - Table~\ref{table:context_nasa_history} show the strategy-based prompt templates and example inputs for each dataset. In our experiments, the number of strategies is set to $k=2$; these strategies and their corresponding answers are generated by the Strong LLM. Since the pipeline operates without human intervention, all strategies that exceed the Strong LLM confidence threshold $\lambda_s$ are accepted. Consequently, the $\repo$ may contain incorrect strategies or answers. Nonetheless, the results in Table~\ref{tab:overall-per-dataset-tight} and Table~\ref{tab:weakllm-per-dataset-tight} demonstrate the effectiveness of $\lambda_s$ and the robustness of our proposed Inter-Cascade pipeline.

\begin{table}[t]
\caption{Strategy-free prompt template with example questions from GSM-Symbolic, GSM-Plus, and MetaMath}
\vspace{-1em}
\centering
\small   
\begin{tcolorbox}[enhanced, width=\linewidth, colback=white]
\textbf{Prompt Template:} \\[0.3em]
Based on the question below, please strictly follow this format when answering: \\
1. Start with \text{[Strategy]} section explaining the general approach for solving similar problems; \\
2. End with \text{[Answer]} section containing ONLY the value. (1) Do NOT include units such as minutes, feet, etc.; (2) If the question asks for a percentage, ONLY provide the number (e.g., answer $20$ instead of $20\%$); (3) Do NOT include any explanations; (4) If there is no answer, RETURN None as the value. \\ \\
Example: \\
\text{[Question]}: $x + y = 10, y = 4$, what is $x$? \\
\text{[Strategy]}: To solve for $x$, isolate $x$ by subtracting $y$ from both sides of the equation. $x = 10 - y = 10 - 4 = 6.$ \\
\text{[Answer]}: $6$ \\ \\
Now answer this question: \\
\text{[Question]}: \text{\{question\}} \\
\text{[Strategy]}: \\
\text{[Answer]}:
\bigskip\hrule\bigskip
\textbf{GSM-Symbolic Example Question:} \\[0.3em]
\text{[Question]}: A fog bank rolls in from the ocean to cover a city. It takes $495$ minutes to cover every $95$ miles of the city. If the city is $95$ miles across from the oceanfront to the opposite inland edge, how many minutes will it take for the fog bank to cover the whole city? \\ \\
\textbf{GSM-Plus Example Question:} \\[0.3em]
\text{[Question]}: A clothing store has $60$ white shirts and $70$ floral shirts. Two-thirds of the white shirts have collars, and $30$ of the floral shirts have buttons. How many more floral shirts with no buttons are there than white shirts with no collars? \\ \\
\textbf{MetaMath Example Question:} \\[0.3em]
\text{[Question]}: Lara bought $52$ stems of flowers at the flower shop. She gave $15$ flowers to her mom and gave $x$ more flowers than she gave to her mom to her grandma. She put the rest in a vase. How many stems of flowers did Lara put in the vase? If we know the answer to the above question is $16$, what is the value of unknown variable $x$?
\end{tcolorbox}
\label{table:zero_shot_math}
\end{table}

\begin{table}[t]
\caption{Strategy-free prompt template with example question from NASA-History-MCQ}
\vspace{-1em}
\centering
\small   
\begin{tcolorbox}[enhanced, width=\linewidth, colback=white]
\textbf{Prompt Template:} \\[0.3em]
Based on the question below, please strictly follow this format when answering: \\
1. Start with \text{[Strategy]} section explaining the general approach for solving similar problems; \\
2. End with \text{[Answer]} section containing ONLY the single capital letter of the correct option (exactly one of: A, B, C, D). Do NOT include the option text or any explanation. \\ \\
Example: \\
\text{[Question]}: Which of the following was a direct result of the Treaty of Versailles (1919)? \\
A. The outbreak of World War II \\
B. The United States declaring war on Germany and entering World War I \\
C. The establishment of the Fascist regime in Italy \\
D. The creation of the League of Nations \\
\text{[Strategy]}: Recall that the treaty ended World War I and included peacekeeping measures. Compare options with the treaty: only the League of Nations was directly established by it. \\
\text{[Answer]}: D \\ \\
Now answer this question: \\
\text{[Question]}: \text{\{question\}} \\
\text{[Strategy]}: \\
\text{[Answer]}:
\bigskip\hrule\bigskip
\textbf{Nasa-History-MCQ Example Question:} \\[0.3em]
\text{[Question]}: What three distinct stages comprise current psychological support protocols for US astronauts? \\
A. Individual, group, family \\ 
B. Training, mission, debriefing \\
C. Cognitive, emotional, behavioral \\
D. Preflight, in-flight, postflight
\end{tcolorbox}
\label{table:zero_shot_history}
\end{table}

\begin{table}[t]
\caption{Strategy-based prompt template with example input from GSM-Symbolic}
\vspace{-1em}
\centering
\small   
\begin{tcolorbox}[enhanced, width=\linewidth, colback=white]
\textbf{Prompt Template:} \\[0.3em]
Using the following problem examples: \\
\text{\{strategy\}} \\ \\
Based on the question below, please strictly follow this format when answering: \\
1. Start with \text{[Strategy]} section explaining the general approach for solving similar problems; \\
2. End with \text{[Answer]} section containing ONLY the value. (1) Do NOT include units such as minutes, feet, etc.; (2) If the question asks for a percentage, ONLY provide the number (e.g., answer $20$ instead of $20\%$); (3) Do NOT include any explanations; (4) If there is no answer, RETURN None as the value. \\ \\
Now answer this question: \\
\text{[Question]}: \text{\{question\}} \\
\text{[Strategy]}: \\
\text{[Answer]}:
\bigskip\hrule\bigskip
\textbf{GSM-Symbolic Example Input:} \\[0.3em]
Using the following problem examples:\\
1. \text{[Question]}: A fog bank rolls in from the ocean to cover a city. It takes $93$ minutes to cover every $2$ miles of the city. If the city is $24$ miles across from the oceanfront to the opposite inland edge, how many minutes will it take for the fog bank to cover the whole city? \\
\text{[Strategy]}: First, determine how many $2$-mile segments are in the $24$-mile city. Divide the total distance of the city by the length of each segment. Then, multiply the number of segments by the time it takes to cover each segment to find the total time. \\
\text{[Answer]}: $1116$ \\ \\
2. \text{[Question]}: A fog bank rolls in from the ocean to cover a city. It takes $114$ minutes to cover every $3$ miles of the city. If the city is $99$ miles across from the oceanfront to the opposite inland edge, how many minutes will it take for the fog bank to cover the whole city? \\
\text{[Strategy]}: First, determine the rate at which the fog bank covers the city in miles per minute. Then, multiply this rate by the total distance of the city to find the total time it takes to cover the city. The rate is $3$ miles / $114$ minutes = $1/38$ miles per minute. The total time is ($1/38$ miles/minute) $*$ $99$ miles = $99/38$ minutes. Simplify the fraction $99/38 = 2.60526315789$. Multiply $114$ by $99/3$ to get the answer $114 * (99/3) = 114 * 33 = 3762$. \\
\text{[Answer]}: $3762$ \\ \\
Based on the question below, please strictly follow this format when answering: \\
1. Start with \text{[Strategy]} section explaining the general approach for solving similar problems; \\
2. End with \text{[Answer]} section containing ONLY the value. (1) Do NOT include units such as minutes, feet, etc.; (2) If the question asks for a percentage, ONLY provide the number (e.g., answer $20$ instead of $20\%$); (3) Do NOT include any explanations; (4) If there is no answer, RETURN None as the value. \\ \\
Now answer this question:\\
\text{[Question]}: A fog bank rolls in from the ocean to cover a city. It takes $495$ minutes to cover every $95$ miles of the city. If the city is $95$ miles across from the oceanfront to the opposite inland edge, how many minutes will it take for the fog bank to cover the whole city? \\
\text{[Strategy]}: \\
\text{[Answer]}:
\end{tcolorbox}
\label{table:context_gsm_symbolic}
\end{table}

\begin{table}[t]
\caption{Strategy-based prompt template with example input from GSM-Plus}
\vspace{-1em}
\centering
\small   
\begin{tcolorbox}[enhanced, width=\linewidth, colback=white]
\textbf{Prompt Template:} \\[0.3em]
Using the following problem examples: \\
\text{\{strategy\}} \\ \\
Based on the question below, please strictly follow this format when answering: \\
1. Start with \text{[Strategy]} section explaining the general approach for solving similar problems; \\
2. End with \text{[Answer]} section containing ONLY the value. (1) Do NOT include units such as minutes, feet, etc.; (2) If the question asks for a percentage, ONLY provide the number (e.g., answer $20$ instead of $20\%$); (3) Do NOT include any explanations; (4) If there is no answer, RETURN None as the value. \\ \\
Now answer this question: \\
\text{[Question]}: \text{\{question\}} \\
\text{[Strategy]}: \\
\text{[Answer]}:
\bigskip\hrule\bigskip
\textbf{GSM-Plus Example Input:} \\[0.3em]
Using the following problem examples:\\
1. \text{[Question]}: A clothing store has some white shirts and $50$ floral shirts. Half of the white shirts have collars, and $20$ of the floral shirts have buttons. How many more floral shirts with no buttons are there than white shirts with no collars? \\
\text{[Strategy]}: Let $W$ be the number of white shirts. The number of white shirts with collars is $W/2$, so the number of white shirts without collars is $W - W/2 = W/2$. The number of floral shirts is $50$. The number of floral shirts with buttons is $20$, so the number of floral shirts without buttons is $50 - 20 = 30$. We want to find the difference between the number of floral shirts without buttons and the number of white shirts without collars, which is $30 - W/2$. However, we are not given the number of white shirts, $W$. Therefore, we cannot determine the exact numerical value of the difference. \\
\text{[Answer]}: None \\ \\
2. \text{[Question]}: A clothing store has $40$ white shirts and $50$ floral shirts. Half of the white shirts have collars, and some of the floral shirts have buttons. There are $10$ more floral shirts with no buttons than white shirts with no collars. How many floral shirts have buttons? \\
\text{[Strategy]}: 1. Calculate the number of white shirts with collars: $40 / 2 = 20$. 2. Calculate the number of white shirts without collars: $40 - 20 = 20$. 3. Calculate the number of floral shirts without buttons: $20 + 10 = 30$. 4. Calculate the number of floral shirts with buttons: $50 - 30 = 20$. \\
\text{[Answer]}: $20$ \\ \\
Based on the question below, please strictly follow this format when answering: \\
1. Start with \text{[Strategy]} section explaining the general approach for solving similar problems; \\
2. End with \text{[Answer]} section containing ONLY the value. (1) Do NOT include units such as minutes, feet, etc.; (2) If the question asks for a percentage, ONLY provide the number (e.g., answer $20$ instead of $20\%$); (3) Do NOT include any explanations; (4) If there is no answer, RETURN None as the value. \\ \\
Now answer this question:\\
\text{[Question]}: A clothing store has $60$ white shirts and $70$ floral shirts. Two-thirds of the white shirts have collars, and $30$ of the floral shirts have buttons. How many more floral shirts with no buttons are there than white shirts with no collars? \\
\text{[Strategy]}: \\
\text{[Answer]}:
\end{tcolorbox}
\label{table:context_gsm_plus}
\end{table}

\begin{table}[t]
\caption{Strategy-based prompt template with example input from MetaMath}
\vspace{-1em}
\centering
\small   
\begin{tcolorbox}[enhanced, width=\linewidth, colback=white]
\textbf{Prompt Template:} \\[0.3em]
Using the following problem examples: \\
\text{\{strategy\}} \\ \\
Based on the question below, please strictly follow this format when answering: \\
1. Start with \text{[Strategy]} section explaining the general approach for solving similar problems; \\
2. End with \text{[Answer]} section containing ONLY the value. (1) Do NOT include units such as minutes, feet, etc.; (2) If the question asks for a percentage, ONLY provide the number (e.g., answer $20$ instead of $20\%$); (3) Do NOT include any explanations; (4) If there is no answer, RETURN None as the value. \\ \\
Now answer this question: \\
\text{[Question]}: \text{\{question\}} \\
\text{[Strategy]}: \\
\text{[Answer]}:
\bigskip\hrule\bigskip
\textbf{MetaMath Example Input:} \\[0.3em]
Using the following problem examples:\\
1. \text{[Question]}: Martha has $x$ crayons. She lost half of them, so she bought a new set of $20$ crayons. Martha has $29$ crayons in total after the purchase. What is the value of unknown variable $x$? \\
\text{[Strategy]}: Let $x$ be the number of crayons Martha initially had. She lost half of them, so she had $x/2$ crayons left. Then she bought 20 new crayons, so she had $x/2 + 20$ crayons. We are given that she has $29$ crayons in total, so $x/2 + 20 = 29$. Subtracting $20$ from both sides gives $x/2 = 9$. Multiplying both sides by $2$ gives $x = 18$. \\
\text{[Answer]}: $18$ \\ \\
2. \text{[Question]}: Justin and Sabrina at Pine Grove Elementary have $50$ pencils combined. Justin has $x$ more than twice as many pencils as Sabrina. Sabrina has $14$ pencils. What is the value of unknown variable $x$? \\
\text{[Strategy]}: Let $J$ be the number of pencils Justin has and $S$ be the number of pencils Sabrina has. We are given that $J + S = 50$ and $S = 14$. We are also given that Justin has $x$ more than twice as many pencils as Sabrina, which can be written as $J = 2S + x$. We can substitute $S = 14$ into the first equation to find $J$: $J + 14 = 50$, so $J = 50 - 14 = 36$. Now we can substitute $J = 36$ and $S = 14$ into the second equation: $36 = 2(14) + x$, so $36 = 28 + x$. Solving for $x$, we get $x = 36 - 28 = 8$. \\
\text{[Answer]}: $8$ \\ \\
Based on the question below, please strictly follow this format when answering: \\
1. Start with \text{[Strategy]} section explaining the general approach for solving similar problems; \\
2. End with \text{[Answer]} section containing ONLY the value. (1) Do NOT include units such as minutes, feet, etc.; (2) If the question asks for a percentage, ONLY provide the number (e.g., answer $20$ instead of $20\%$); (3) Do NOT include any explanations; (4) If there is no answer, RETURN None as the value. \\ \\
Now answer this question:\\
\text{[Question]}: Lara bought $52$ stems of flowers at the flower shop. She gave $15$ flowers to her mom and gave $x$ more flowers than she gave to her mom to her grandma. She put the rest in a vase. How many stems of flowers did Lara put in the vase? If we know the answer to the above question is $16$, what is the value of unknown variable $x$? \\
\text{[Strategy]}: \\
\text{[Answer]}:
\end{tcolorbox}
\label{table:context_metamath}
\end{table}

\begin{table}[t]
\caption{Strategy-based prompt template with example input from NASA-History-MCQ}
\vspace{-1em}
\centering
\small   
\begin{tcolorbox}[enhanced, width=\linewidth, colback=white]
\textbf{Prompt Template:} \\[0.3em]
Using the following problem examples: \\
\text{\{strategy\}} \\ \\
Based on the question below, please strictly follow this format when answering: \\
1. Start with \text{[Strategy]} section explaining the general approach for solving similar problems; \\
2. End with \text{[Answer]} section containing ONLY the single capital letter of the correct option (exactly one of: A, B, C, D). Do NOT include the option text or any explanation. \\ \\
Now answer this question: \\
\text{[Question]}: \text{\{question\}} \\
\text{[Strategy]}: \\
\text{[Answer]}:
\bigskip\hrule\bigskip
\textbf{Nasa-History-MCQ Example Input:} \\[0.3em]
Using the following problem examples:\\
1. \text{[Question]}: Beyond communication and care packages, what specific types of hardware or software aid psychological well-being during long-duration spaceflights? \\
A. Specialized dietary supplements to combat isolation \\
B. Automated exercise routines tailored to reduce stress \\
C. Psychological support hardware and software \\
D. Advanced life support systems with mood stabilizers \\
\text{[Strategy]}: The question asks about specific hardware or software that aids psychological well-being during long-duration spaceflights, beyond communication and care packages. We need to evaluate each option to see if it fits this description. Option A focuses on dietary supplements, which are not hardware or software. Option B describes automated exercise routines, which could involve software and hardware. Option C is too general, simply restating the question. Option D focuses on life support systems with mood stabilizers, which are not necessarily hardware or software designed specifically for psychological well-being. Therefore, option B is the most specific and relevant answer. \\
\text{[Answer]}: B \\ \\
2. \text{[Question]}: What is the anticipated effect of constraints inherent in lunar and Martian missions on psychological support approaches? \\
A. Greater emphasis on real-time communication with Earth-based support teams \\
B. Increased reliance on virtual reality and AI companionship to mitigate isolation \\
C. A shift towards highly individualized psychological profiles and tailored interventions \\
D. A return to the mindset and strategies of earlier explorers and their families \\
\text{[Strategy]}: The question asks about the impact of constraints in lunar and Martian missions on psychological support. These constraints include isolation, limited resources, communication delays, and the need for self-sufficiency. Considering these limitations, the most likely effect would be a greater reliance on technologies that can provide support in the absence of immediate Earth-based assistance and a need for personalized approaches due to the unique challenges faced by each astronaut. Options A and D are less likely because of communication delays and the differences between modern space missions and earlier explorations. Option B is plausible, but option C is more comprehensive as it addresses the need for personalized support, which is crucial given the constraints. \\
\text{[Answer]}: C \\ \\
Based on the question below, please strictly follow this format when answering: \\
1. Start with \text{[Strategy]} section explaining the general approach for solving similar problems; \\
2. End with \text{[Answer]} section containing ONLY the single capital letter of the correct option (exactly one of: A, B, C, D). Do NOT include the option text or any explanation. \\ \\
Now answer this question:\\
\text{[Question]}: What three distinct stages comprise current psychological support protocols for US astronauts? \\
A. Individual, group, family \\
B. Training, mission, debriefing \\
C. Cognitive, emotional, behavioral \\
D. Preflight, in-flight, postflight \\
\text{[Strategy]}: \\
\text{[Answer]}:
\end{tcolorbox}
\label{table:context_nasa_history}
\end{table}


\end{document}